
\documentclass[10pt,journal,compsoc]{IEEEtran}
%


%

%
\ifCLASSOPTIONcompsoc
  \usepackage[nocompress]{cite}
\else
  \usepackage{cite}
\fi
%

%
\ifCLASSINFOpdf
\else
  \usepackage[dvips]{graphicx}
\fi
%
%

%
\usepackage{amsmath,amssymb,amsthm}
\ifCLASSOPTIONcompsoc
 \usepackage[caption=false,font=footnotesize,labelfont=sf,textfont=sf]{subfig}
\else
 \usepackage[caption=false,font=footnotesize]{subfig}
\fi

\usepackage{graphicx}
\usepackage{bm}
\usepackage{booktabs}
\usepackage[dvipsnames]{xcolor}
\usepackage{xfrac}
\usepackage{ragged2e} 
\usepackage{titletoc}
\usepackage{algorithm,algpseudocode}

\algnewcommand\algorithmicto{\textbf{to}}
\algrenewtext{For}[3]%
{\algorithmicfor\ $#1 \gets #2$ \algorithmicto\ $#3$ \algorithmicdo}

\DeclareMathOperator*{\E}{\mathbb{E}}
\DeclareMathOperator*{\rfield}{\mathbb{R}}
\newcommand{\B}{\mathcal{B}}

\newcommand{\setX}{\mathcal{X}}
\newcommand{\setF}{\mathcal{F}}
\newcommand{\setD}{\mathcal{D}}
\newcommand{\setO}{\mathcal{O}}
\newcommand{\setT}{\mathcal{T}}
\newcommand{\DL}{\setD_L}
\newcommand{\DU}{\setD_U}
\newcommand{\nuu}{{n_u}}
\newcommand{\nl}{{n_l}}
\newcommand{\nc}{{n_c}}
\DeclareMathOperator*{\argmax}{argmax}
\DeclareMathOperator*{\argmin}{argmin}

\newcommand{\norm}[1]{\left\lVert#1\right\rVert}
\newcommand{\abs}[1]{\left|#1\right|}
\newcommand{\phil}{\phi_{\frac{1}{2\kappa}}}
\newcommand{\inprod}[1]{\left\langle#1\right\rangle}
\newcommand{\xx}[1]{\theta_{#1}}
\newcommand{\xt}{\xx{t}}
\newcommand{\Gxy}[1]{\mathcal{L}(\xx{#1}, v_{#1})}
\newcommand{\Gxyt}{\Gxy{t}}
\newcommand{\gradGxy}[1]{\nabla_{\theta}\Gxy{#1}}
\newcommand{\gradGxyt}{\gradGxy{t}}
\newcommand{\hatx}[1]{\hat{\theta}_{#1}}
\newcommand{\hatxt}{\hatx{t}}
\newcommand{\inprodG}[1]{\inprod{g_{#1} - \gradGxy{#1}, \hatx{#1} - \xx{#1}}}
\newcommand{\lzo}{\ell_{0/1}}
\newcommand{\lce}{\ell_\text{CE}}
\newcommand{\lsce}{\ell_\text{CE}}
\def \Bse{\B_{sem}}
\def \Bsex{\Bse(x)}
\newcommand{\dinf}{d_\infty}
\newcommand{\fnew}{{\tilde{f}}}
\newcommand{\fj}{f^{(j)}}
\newcommand{\setFbu}{\setF_{\beta,\nu}}

\newcommand{\RL}{\widehat{R}_{\DL}(f)}
\newcommand{\RU}{\widehat{R}_{\DU}(f)}
\newcommand{\RB}{R^{0/1}_{\B\text{-robust}}(f)}

\newcommand{\ie}{\textit{i.e.}}
\newcommand{\eg}{\textit{e.g.}}

\theoremstyle{plain}
\newtheorem{thm}{Theorem}
\newtheorem{lem}{Lemma}
\newtheorem{prop}{Proposition}

\newtheorem{fact}{Fact}

\theoremstyle{definition}
\newtheorem{rem}{Remark}
\newtheorem{defn}{Definition}

\newcommand{\namedthm}[3]{\theoremstyle{plain}
   \newtheorem*{thm#1}{Restate of Theorem #1}\begin{thm#1}{(#3).}#2\end{thm#1}}
\newcommand{\namedthmn}[2]{\theoremstyle{plain}
   \newtheorem*{thmn#1}{Restate of Theorem #1}\begin{thmn#1}#2\end{thmn#1}}
\newcommand{\namedlem}[2]{\theoremstyle{plain}
   \newtheorem*{lem#1}{Restate of Lemma #1}\begin{lem#1}#2\end{lem#1}}
\newcommand{\namedprop}[3]{\theoremstyle{plain}
   \newtheorem*{prop#1}{Restate of Proposition #1}\begin{prop#1}{(#3).}#2\end{prop#1}}

\newcommand{\Eqref}[1]{Eq.~(\ref{#1})}
\newcommand{\Tabref}[1]{Table~\ref{#1}}
\newcommand{\Figref}[1]{Figure~\ref{#1}}
\newcommand{\Algref}[1]{Algorithm~\ref{#1}}
\newcommand{\Thmref}[1]{Theorem~\ref{#1}}
\newcommand{\Propref}[1]{Proposition~\ref{#1}}
\newcommand{\Factref}[1]{Fact~\ref{#1}}
\newcommand{\Lemref}[1]{Lemma~\ref{#1}}
\newcommand{\Defref}[1]{Definition~\ref{#1}}

\newcommand{\Apdref}[1]{Appendix~\ref{#1}}

\definecolor{ballblue}{HTML}{338EA7}
\definecolor{lightseagreen}{HTML}{759D39}
\definecolor{lightred}{HTML}{DD7769}
\definecolor{softred}{HTML}{FE8A71}
\definecolor{softblue}{HTML}{63ACE5}

\usepackage[colorlinks,linkcolor=ballblue,citecolor=ballblue]{hyperref}

\newcommand{\best}[1]{{\color{softred} \textbf{#1}}}
\newcommand{\secbest}[1]{{\color{softblue} \textbf{#1}}}

\newcommand\scalemath[2]{\scalebox{#1}{\mbox{\ensuremath{\displaystyle #2}}}}

%
\usepackage{url}


\hyphenation{op-tical net-works semi-conduc-tor}

\begin{document}
%
\title{MaxMatch: Semi-Supervised Learning with\\ Worst-Case Consistency}
%
%
%
%

\author{Yangbangyan Jiang,
	Xiaodan~Li,
	Yuefeng~Chen,
	Yuan~He,
	Qianqian~Xu*,~\IEEEmembership{Senior Member,~IEEE,}
	Zhiyong~Yang,
	Xiaochun~Cao,~\IEEEmembership{Senior Member,~IEEE,}
	and~Qingming~Huang*,~\IEEEmembership{Fellow,~IEEE}
\IEEEcompsocitemizethanks{
\IEEEcompsocthanksitem Y. Jiang is with State Key Laboratory of Information Security, Institute of Information Engineering, Chinese Academy of Sciences, Beijing 100093, China, and also with School of Cyber Security, University of Chinese Academy of Sciences, Beijing 100049, China (E-mail: jiangyangbangyan@iie.ac.cn).\protect\\
\IEEEcompsocthanksitem X. Li, Y. Chen and Y. He are with the Security Department of Alibaba Group, Hangzhou 311121, China (E-mail: fiona.lxd@alibaba-inc.com, yuefeng.chenyf@alibaba-inc.com, heyuan.hy@alibaba-inc.com).\protect\\
\IEEEcompsocthanksitem Q. Xu is with the Key Laboratory of Intelligent Information Processing, Institute of Computing Technology, Chinese Academy of Sciences, Beijing 100190, China (E-mail: xuqianqian@ict.ac.cn).\protect\\
\IEEEcompsocthanksitem Z. Yang is with the School of Computer Science and Technology, University of Chinese Academy of Sciences, Beijing 101408, China (E-mail: yangzhiyong21@ucas.ac.cn).\protect\\
\IEEEcompsocthanksitem X. Cao is with School of Cyber Science and Technology, Shenzhen Campus, Sun Yat-sen University, Shenzhen 518107, China (E-mail: caoxiaochun@mail.sysu.edu.cn).\protect\\
\IEEEcompsocthanksitem Q. Huang is with the School of Computer Science and Technology, University of Chinese Academy of Sciences, Beijing 101408, China, also with the Key Laboratory of Big Data Mining and Knowledge Management (BDKM), University of Chinese Academy of Sciences, Beijing 101408, China, also with the Key Laboratory of Intelligent Information Processing, Institute of Computing Technology, Chinese Academy of Sciences, Beijing 100190, China, and also with Peng Cheng Laboratory, Shenzhen 518055, China (E-mail: qmhuang@ucas.ac.cn).\protect\\
\IEEEcompsocthanksitem *Corresponding author.
}
}

%
%

\markboth{IEEE Transactions on Pattern Analysis and Machine Intelligence}%
{Jiang \MakeLowercase{\textit{et al.}}: Bare Demo of IEEEtran.cls for Computer Society Journals}
%



\IEEEtitleabstractindextext{%
\begin{abstract}
\justifying
In recent years, great progress has been made to incorporate unlabeled data to overcome the  inefficiently supervised problem via semi-supervised learning (SSL). Most state-of-the-art models are based on the idea of pursuing consistent model predictions over unlabeled data toward the input noise, which is called \textit{consistency regularization}. Nonetheless, there is a lack of theoretical insights into the reason behind its success. To bridge the gap between theoretical and practical results, we propose a worst-case consistency regularization technique for SSL in this paper. Specifically, we first present a generalization bound for SSL consisting of the empirical loss terms observed on labeled and unlabeled training data separately. Motivated by this bound, we derive an SSL objective that minimizes the largest inconsistency between an original unlabeled sample and its multiple augmented variants. We then provide a simple but effective algorithm to solve the proposed minimax problem, and theoretically prove that it converges to a stationary point. Experiments on five popular benchmark datasets validate the effectiveness of our proposed method.
\end{abstract}

\begin{IEEEkeywords}
Semi-Supervised learning, Consistency regularization, Worst-case consistency, Image Classification.
\end{IEEEkeywords}}

\maketitle

\IEEEdisplaynontitleabstractindextext

%
\IEEEpeerreviewmaketitle

\IEEEraisesectionheading{\section{Introduction}\label{sec:intro}}

%
%
%
%
\IEEEPARstart{I}{n} the past decades, deep neural networks have achieved great success on various tasks with the emergence of large-scale well-annotated datasets like ImageNet \cite{imagenet} or MSCOCO \cite{mscoco}. With sufficient labels, deep networks are able to capture the underlying patterns and generalize well to unseen data, leading to strong performance. Yet collecting such datasets requires numerous human annotations, which involves considerable time and effort in practice. This promotes the development of learning with less supervision, the representative paradigms of which are semi-supervised learning (SSL) and unsupervised learning. When limited labeled data are available, semi-supervised methods could take advantage of the valuable information contained in numerous unlabeled samples to improve generalization performance. Accordingly, there has been a growing trend in the applications of SSL ranging from image classification \cite{FixMatch}, object detection \cite{app1}, semantic segmentation \cite{app2}, image translation \cite{app3} to gait recognition \cite{app4}.

Among existing SSL frameworks, consistency regula-rization-based models have attracted much attention due to their surprising performance \cite{MT, VAT, MixMatch, FixMatch, UDA, ReMixMatch}. On image classification tasks, state-of-the-art models of this kind could even achieve comparable results compared with fully supervised settings. The key idea of such models lies in requiring the model's prediction to be invariant towards input noise over unlabeled samples. The main difference between these methods is the choice of the noise. For example, the early trial approach $\Pi$-Model \cite{Pi_model} simply minimizes the discrepancy between two predicted scores, with randomness from Dropout and Gaussian noise, of the same unlabeled data point. Then random flipping and cropping become the standard perturbation scheme, and the models are further equipped with a wide spectrum of techniques such as teacher-student model \cite{MT} and MixUp interpolation \cite{MixMatch}. After that, researchers find that a stronger noise injection strategy (\eg, aggressive data augmentation) which induces extra inductive bias could further improve the performance, resulting in much more competitive models like Unsupervised Data Augmentation (UDA) \cite{UDA}, ReMixMatch \cite{ReMixMatch} and FixMatch \cite{FixMatch}.

\begin{figure*}[t]
	\centering
	\includegraphics[width=0.97\linewidth]{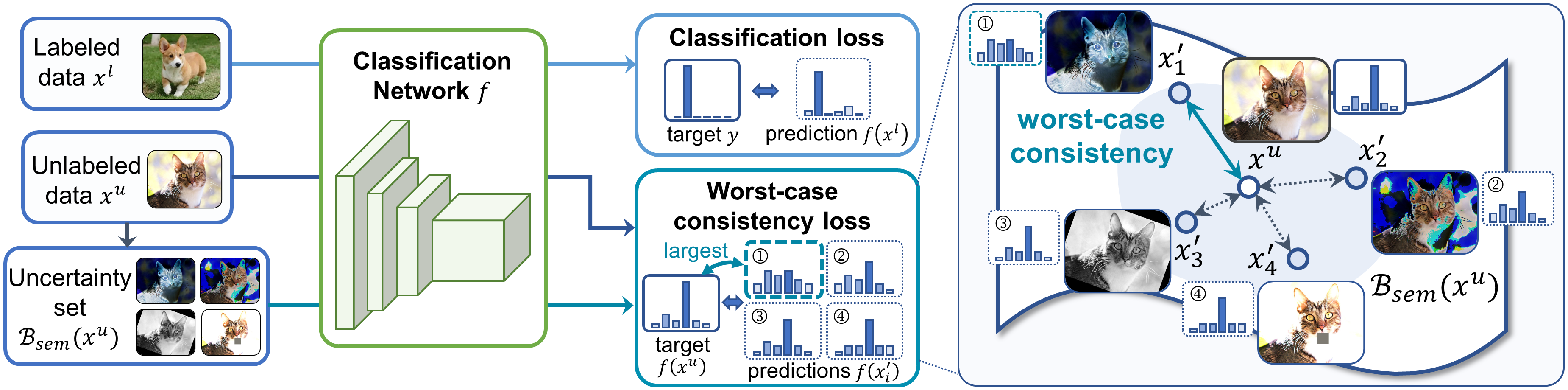}
	\caption{Overview of our proposed framework. Different from existing models, we consider the worst-case consistency over a class-invariant semantic uncertainty set $\Bsex$ around the original unlabeled data point $x$.}
	\label{fig:framework}
\end{figure*}

Despite the promising performance of the consistency regularization-based approaches, there are very few studies providing theoretical analysis to understand how such a technique could improve generalization performance of SSL models. As the first trial, \cite{bound2} analyzes the effect of consistency regularization only under the setting of self-training, which requires a pseudo-labeler and does not apply to a large number of methods like $\Pi$-Model, MixMatch and UDA. Different from these studies, inspired by a recent work \cite{bound}, we establish a generalization bound to explore the connection between consistency regularization and generalization performance. On top of this, we further propose a new consistency regularization term with a minimax formulation.

Specifically, we first study the upper bound of generalization error for the consistency regularization-based SSL framework by means of the Rademacher complexity. According to our theoretical analysis, the error bound could be decomposed into the sum of the worst-case consistency between unlabeled data and augmented samples around them, the empirical classification error on labeled data, and a term depending on the choice of the model. Then it could be further extended into a practical bound for deep convolutional networks. Next, this bound induces a new SSL objective which minimizes the largest inconsistency between an original unlabeled sample and its multiple augmented variants, as shown in \Figref{fig:framework}. To solve this problem, we present a simple solution, called \textit{MaxMatch}, which is proved to converge to a stationary point of our problem. Such worst-case consistency regularization could be easily plugged into existing methods. In summary, the contributions of this paper are three-fold:
\begin{itemize}
	\item We provide theoretical support for applying consistency regularization under the SSL setting, in terms of a generalization upper bound. Based on this, we further provide a practical bound for the widely-used deep convolutional networks.
	\item The proposed theoretical result shows that the worst-case consistency is directly related to the robust generalization error. Motivated by this, we propose a novel method called MaxMatch, which minimizes the largest inconsistency loss over an uncertainty set generated by data augmentation.
	\item We then propose a simple but effective optimization method for our proposed method. Furthermore, we show that the proposed optimization algorithm converges to a stationary point with high probability.
\end{itemize}
Practically, experiments on five popular benchmarks show that our proposed method could outperform state-of-the-art models under most settings.

\section{Related Work}\label{sec:related}
Semi-supervised learning (SSL) aims at utilizing the unlabeled data to improve the model performance together with labeled data. In this paper, we focus on SSL methods in which the objective consists of a supervised classification loss for labeled data and a semi-supervised loss for unlabeled data. The readers are referred to \cite{ssl-survey} for a more comprehensive overview.

\subsection{Consistency Regularization}
This popular research line of SSL methods pursues the consistency among the predictions for unlabeled samples with noises. This relies heavily on the \textit{smoothness assumption} that samples close to each other
are likely to share the same label, and the \textit{cluster assumption} that the decision boundary between classes
should lie in low-density regions \cite{cluster_assmp}. Generally speaking, when points around a data sample have similar score distributions, data samples in a high-density region are more likely to have the same label \cite{PL}.

Early models of this type build a teacher model and train the student model using the teacher's predictions over unlabeled data \cite{Gamma-model, TS, Pi_model}, where the teacher and student are with the same structure but different noises over input space or model space. By maximizing the consistency between the teacher and the student, these approaches could obtain better performance. Based on the teacher-student structure, random noises are firstly introduced into the teacher model to improve the student model's generalization performance. For example, in $\Pi$-Model \cite{Pi_model} where both the teacher and student are copies of a single model, the Dropout operation and input Gaussian noise provide the randomness to generate different predictions for the same unlabeled point. After that, the idea of ensembling is further employed for the teacher. With historical teacher models, Temporal Ensembling \cite{Pi_model} adopts the exponential moving average of their predictions as training targets, while Mean-Teacher \cite{MT} first calculates the exponential moving average over those historical parameters then uses the prediction of the averaged model for the supervision to the student. Beyond random noises, adversarial noises are also implemented by Virtual Adversarial Training (VAT) \cite{VAT} to enlarge the prediction deviation between the teacher and student as much as possible. Via adversarial training, the model's robustness towards input perturbation is enhanced. Besides, VAT adopts an entropy minimization term \cite{EntMin} to encourage a lower entropy of the model's prediction distribution and thus make each class well-separated.

Seeing the effectiveness of data augmentation in supervised learning, researchers turn to use more complicated input transformations instead of random noises for training. In this stage, the teacher-student structure is simplified to a single model, such that the model output is invariant for augmented copies of unlabeled samples. Random flipping and cropping (also called weak augmentation) are used as the common augmentation operations to obtain the training targets, while different advanced transformation strategies are incorporated for predictions required to be consistent to training targets. At first, to improve the smoothness of model predictions, Interpolation Consistency Training (ICT) \cite{ICT} and MixMatch \cite{MixMatch} both introduce the MixUp method \cite{mixup} that interpolates the inputs and targets to augment the training data. Meanwhile, Unsupervised Data Augmentation (UDA) \cite{UDA} is proposed to utilize a more aggressive input augmentation strategy, RandAugment \cite{randaug} that randomly apply some transformations to the data. Motivated by this, ReMixMatch \cite{ReMixMatch} proposes another augmentation strategy called CTAugment, which learns the combination of basic transformations on training data. Such augmentation strategies, often called strong augmentation, perform a series of transformations to produce samples with large modifications which provide additional inductive biases for the consistency regularization \cite{UDA}. Moreover, ReMixMatch further proposes a distribution alignment strategy to adjust guessed label distribution based on the empirical ground-truth class distribution and incorporates a self-supervised loss. With advanced augmentation strategies, these approaches further improve the performance on benchmarks by a large margin.

Based on these basic frameworks, many efforts have been devoted to improving the performance of SSL with the incorporation of strategies such as sample weighting \cite{weighting1}, self-supervised learning \cite{self_sup1, self_sup2}, negative sampling \cite{neg_sampl1}, weight perturbation \cite{perturb1}.

\subsection{Self-training}
Another major direction of SSL is \textit{self-training}, or pseudo-labeling, which iteratively makes predictions for unlabeled samples and takes the predicted labels as their targets to train the model. It is first proposed in word sense disambiguation \cite{self-train1}. Since then, self-training has been applied in a wide range of tasks, \eg, object detection \cite{self-train2}, image classification \cite{PL} and domain adaptation \cite{self-train3}. For image classification, various extensions have been proposed to improve the performance based on the vanilla pseudo-labeling model \cite{PL}. For example, \cite{noisy1} adds input noises into the student model while feeding clean images into the teacher. \cite{label_prop1,label_prop2,label_prop3} introduce label propagation for the pseudo-labeling. And \cite{curriculum} combines self-training with curriculum learning.

Compared with vanilla score distributions, training with pseudo-labels is shown to be equivalent to an implicit entropy regularization over score distributions \cite{PL}. Thus, it decreases the class overlap and makes the data points more separable. In fact, advanced consistency regularization methods such as MixMatch, UDA and ReMixMatch also borrow such wisdom. They implement a temperature sharpening technique to make the training targets closer to one-hot labels for unlabeled data.
Recently, FixMatch \cite{FixMatch} is proposed to combine pseudo-labeling and consistency regularization to boost performance. It achieves very promising performance with very few labels on existing benchmarks even with many training techniques removed (\eg, prediction sharpening and distribution alignment). After that, researchers have developed many variants by introducing techniques like feature template matching \cite{new1}, dynamic thresholding \cite{dash}, optimal transport \cite{sinkhorn}, alpha divergence \cite{alpha-div} and meta-learning \cite{metapseudo}.

Among these researches, the state-of-the-art model FixMatch is most relevant to our study since we instantiate our proposed method by introducing our worst-case consistency framework into it. ReMixMatch is also a related study that involves multiple augmented samples for an unlabeled data point, whereas it minimizes all the discrepancies unlike our worst-case way. Nevertheless, our method enjoys theoretical guarantees for both generalization error and convergence behavior, which does not necessarily hold for existing models including FixMatch and ReMixMatch.

It is noteworthy that both the proposed method and VAT \cite{VAT} share the same ultimate goal, \ie, making the predictor robust (or smooth) against random and local perturbation. However, the two methods mainly differ in the way to realize this goal.
VAT pursues the local smoothness in terms of the numerical neighborhood. It borrows the idea of adversarial training to smooth the model's most sensitive direction (\ie, the adversarial direction) in an $\epsilon$-radius $\ell_2$-norm ball.
On the contrary, the proposed method focuses on the robustness over semantic neighborhoods. Based on the expected robust classification risk, it establishes a generalization bound and then derives a minimax formulation that selects the most inconsistent one among some given augmented variants around each unlabeled input data and then minimizes such a worst-case consistency.
Note that in the proposed method, the neighborhood set consists of finitely many elements obtained by some random data transformations to preserve the semantic proximity (as illustrated in the next section). Hence, the model only needs to choose the worst-case semantically consistent sample among these candidates, which is rather simple but proved to converge to a stationary point under mild conditions. Yet VAT adopts the $\ell_2$-norm ball which has infinitely many elements and only conveys the numerical proximity. To find the adversarial direction, it has to solve the inner maximization problem by iteration-based methods. In this sense, the convergence of the optimization algorithm is hard to be guaranteed.

\section{Model Formulation}\label{sec:model}
In this section, we first provide a general robust SSL generalization bound to show how the generalization error is related to the empirical risks on both labeled and unlabeled sets. Based on the bound, we formulate our optimization objective. Meanwhile, we also extend it to provide a practical bound for convolutional neural networks.

\subsection{Generalization Bound for General Models}
In this paper, we focus on the image classification problem. A labeled data point is a tuple $(x,y)$, where $x \in \rfield^d$ is the high dimensional input feature of a raw image, while $y \in [\nc] = \{1,2,\cdots, \nc\}$ is the label of the sample. While an unlabeled data point purely contains the information about input feature $x$ with its label missing. In semi-supervised learning problems, the training data consists of a subset of labeled data $\DL=\{(x_i^l, y_i)\}_{i=1}^{\nl}$ and a subset of unlabeled dataset $\DU=\{x_j^u\}_{j=1}^\nuu$, with $\nl$ and $\nuu$ being the number of labeled and unlabeled samples, respectively. $\DL$ is drawn from the joint distribution of $(x,y)$, which is denoted as $\setD_{XY}$, while $\DU$ is drawn from the marginal distribution of input feature $\setD_{X}$. For a classifier $f: \rfield^d\mapsto\rfield^\nc$ with the output (score vector) denoted as $s$, its predicted label is given by $y(s)=\argmax_{i\in[\nc]} s_i$.

Our goal is to learn a classifier $f$ parameterized by $\theta$ with a small generalization error. Since usually a large proportion of the training data are unlabeled, we intuitively hope the prediction of $f$ is sufficiently stable under input transformations as most existing consistency regularization-based methods do \cite{bound2,Pi_model,MixMatch,UDA}. According to the basic \textit{smoothness and cluster assumptions} \cite{zhou2014semi}, when data points located within the same neighborhood have similar score distributions, samples lying in a high-density region are more likely to have the same label and thus a low-density separation between classes are easier to find \cite{PL}. To this end, we wish the trained model to minimize the expected robust classification risk instead of the original zero-one loss. Specifically, when input $x$ is an image, we hope a moderate change to $x$ will not change the prediction of $f$. Motivated by this, given an uncertainty set around $x$ as $\B(x)$, we aim at minimizing the following robust version of expected classification error over the entire data distribution $\setD_{XY}$.

\begin{defn}[Expected Robust Classification Risk]
	Define $\RB$ as the expected robust classification risk with the zero-one loss function:
	\begin{equation}
		\RB = \E_{(x, y) \sim \setD_{XY}} \left[\sup_{x'\in\B(x)}\lzo(y(f(x')), y)\right],
	\end{equation}
	where $\lzo\left(y', y\right)=\mathbb{I}\left[y'=y\right]$ is the zero-one loss function, and $\mathbb{I}[\cdot]$ denotes the indicator function, which has a value of 1 if the input condition holds and 0 otherwise. $\sup_{x\in\B} f(x)$ is the supremum of a function $f(x)$ over a set $\B$.
\end{defn}

Since the data distribution is not available, one cannot even calculate $\RB$, let alone its minimization. To tackle this issue, we propose an upper bound of $\RB$ as the function of the empirical loss terms observed on training data $\DL \cup \DU$. Through optimizing this bound, we could minimize $\RB$ indirectly. Considering that the zero-one loss cannot be directly optimized in practice due to its non-differentiability at 0, the empirical terms in the bound will be built based on surrogate losses. Here, we choose the cross-entropy loss to do so.

For a multi-class classification model, let its output be $s\in\rfield^{\nc}$, the cross-entropy loss (also known as multinomial logistic loss) for $s$ and the target label $y\in[\nc]$ is defined as:
\begin{equation*}
	\lce(s, y) = \log(\sum_{i\in[\nc]}\exp(s_i - s_y)).
\end{equation*}
In some cases, we might use a soft label as the learning target, denoted as $t\in\rfield^{\nc}$. Then the cross-entropy loss could be written as:
\begin{equation*}
	\lce(s, t) = -\sum_{i\in[\nc]}t_i \cdot \log s_i.
\end{equation*}

Under this setting, we first present the following proposition to bound the expected robust risk using two individual expected risks with the cross-entropy loss.

\begin{prop}
	\label{prop:risk}
	For any classifier $f$ with bounded output $s\in[\varepsilon, 1-\varepsilon]^\nc$, the following inequality holds:
	\begin{equation}
		\scalemath{1}{
		\begin{aligned}
		\RB \leq\ &C_1 \cdot \E_{x\sim\setD_{X}} \bigg[{\color{lightred} \sup_{x'\in\B(x)}} {\color{ballblue} \lce\left(f(x'), f(x)\right)}\bigg] \\
		&+\ C_2 \cdot\E_{(x,y)\sim\setD_{XY}} \bigg[\lce(f(x),y)\bigg],
		\end{aligned}
		}
	\end{equation}
	where $C_1 = \frac{1}{\nc\varepsilon \cdot \log \frac{1}{1-\varepsilon}}$, $C_2 = \frac{1}{\log 2}$, and $\varepsilon\in(0,0.5)$.
\end{prop}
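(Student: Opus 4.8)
The plan is to bound the integrand of $\RB$ pointwise in $(x,y)$ and then take expectations, splitting the worst-case zero-one loss into a \emph{consistency} piece and a \emph{clean-classification} piece. First I would apply the triangle inequality of the $0/1$ (discrete-metric) loss to the three labels $y(f(x'))$, $y(f(x))$, $y$:
\begin{equation*}
	\lzo(y(f(x')), y) \leq \lzo(y(f(x')), y(f(x))) + \lzo(y(f(x)), y),
\end{equation*}
i.e.\ a mistake at $x'$ forces either a label flip between $x'$ and $x$ or an already-wrong prediction at $x$. Since the last summand is independent of $x'$, taking $\sup_{x'\in\B(x)}$ gives
\begin{equation*}
	\sup_{x'\in\B(x)}\lzo(y(f(x')), y) \leq \sup_{x'\in\B(x)}\lzo(y(f(x')), y(f(x))) + \lzo(y(f(x)), y).
\end{equation*}
Taking $\E_{(x,y)\sim\setD_{XY}}$ and noting the first summand depends on $x$ only (so its expectation collapses to one over the marginal $\setD_X$), it remains to dominate the two terms by the two cross-entropy expectations.

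For the clean-classification term I would prove the pointwise inequality $\lzo(y(f(x)), y) \leq C_2\,\lce(f(x), y)$ with $C_2 = 1/\log 2$, using the hard-label form $\lce(s, y) = \log\big(\sum_{i\in[\nc]}\exp(s_i - s_y)\big)$. When the clean prediction is wrong, $y(s)=\argmax_i s_i \neq y$, so some index $j\neq y$ has $s_j \geq s_y$; keeping only the $i=y$ and $i=j$ terms gives $\sum_i \exp(s_i - s_y) \geq 1 + \exp(s_j - s_y) \geq 2$, hence $\lce(f(x),y)\geq \log 2$ and $C_2\lce \geq 1$ exactly when the indicator fires. Since $\lce \geq \log 1 = 0$ always, the bound also holds when the prediction is correct.

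For the consistency term I would prove $\lzo(y(f(x')), y(f(x))) \leq C_1\,\lce(f(x'), f(x))$, now using the soft-label form with target $t=f(x)$ and score $s=f(x')$, namely $\lce(s,t) = -\sum_{i}t_i\log s_i$, together with the range constraint $s,t\in[\varepsilon,1-\varepsilon]^{\nc}$. Because $s_i \leq 1-\varepsilon$ yields $-\log s_i \geq \log\frac{1}{1-\varepsilon} > 0$ while $t_i \geq \varepsilon$, every summand is at least $\varepsilon\log\frac{1}{1-\varepsilon}$, so $\lce(f(x'),f(x)) \geq \nc\,\varepsilon\log\frac{1}{1-\varepsilon} = 1/C_1$. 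Thus $C_1\lce(f(x'),f(x)) \geq 1$ dominates the indicator (which never exceeds $1$) for every $x'$; taking the supremum over $x'$ and then $\E_{\setD_X}$ produces the first term of the claim, and summing the two bounds gives the proposition.

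The one genuinely delicate point is that the two pieces are controlled by opposite mechanisms, so I would keep them conceptually separate. The clean-error bound is essentially tight --- the cross-entropy crosses the threshold $\log 2$ precisely when the argmax is wrong --- whereas the consistency bound is deliberately loose: the range constraint \emph{alone} pushes the soft-label cross-entropy above the positive constant $1/C_1$, so the disagreement of the two argmaxes is never actually invoked. Consequently the main work is bookkeeping: selecting the extremal values $s_i = 1-\varepsilon$ and $t_i = \varepsilon$ to pin down $C_1$, and verifying non-negativity of both cross-entropy forms so that each pointwise bound also holds (trivially) when its indicator vanishes. I expect no analytic obstacle beyond making these constants line up exactly.
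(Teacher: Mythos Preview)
Your proposal is correct and follows essentially the same route as the paper: the paper first splits $\mathbb{I}[y(f(x'))\neq y]\le \mathbb{I}[y(f(x'))\neq y(f(x))]+\mathbb{I}[y(f(x))\neq y]$, then invokes two lemmas that are exactly your two pointwise bounds (the hard-label cross-entropy is at least $\log 2$ when the argmax is wrong, and the soft-label cross-entropy is always at least $\nc\varepsilon\log\frac{1}{1-\varepsilon}$ under the range constraint), and finally takes the sup and the expectation. Your observation that the consistency bound never actually uses the argmax disagreement is also how the paper proceeds---it simply uses $\lzo\le 1$ there.
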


\Propref{prop:risk} is the basis of our final generalization bound. It shows that the expected robust risk over the whole data distribution is upper bounded by the combination of 1) a robust expected risk over the marginal distribution and 2) a standard expected risk over the joint distribution. This allows us to further bound each of them using corresponding empirical risk. Moreover, from the first term, we can see that when the classifier output is stable towards any input uncertainty, then $\RB$ will also be small. Due to space limitations, we provide the proof in \Apdref{sec:proof-risk-prop}.

Following the convention of machine learning, we assume that the learned classifier $f$ is selected from a restricted class of functions called hypothesis class $\setF$ (say, the class of all deep neural network models). Given this restriction, the generalization error largely depends on the complexity of the hypothesis class. Before entering the main result, we first provide two necessary definitions of the complexity of $\setF$ in terms of the well-known Rademacher complexity \cite{foundml}.

\begin{defn}[\textbf{Supervised Rademacher Complexity}]
	Given a hypothesis space $\setF$ and a loss function $\ell$, we define $$(\ell\circ f)(x,y) = \ell(f(x), y)$$ and extend it to the hypothesis class $\setF$ by $\ell \circ \setF = \{\ell\circ f: f\in\setF\}$. Then the empirical Rademacher complexity of $\ell \circ \setF$ over a labeled dataset $\DL = \{(x_i^l, y_i)\}_{i=1}^\nl$ is defined as:
	\begin{equation*}
		Rad_{\DL}(\ell \circ \setF) = \E_{\sigma}\left[ \sup_{f\in\setF} \frac{1}{\nl} \sum_{i=1}^\nl \sigma_i \cdot \ell(f(x_i^l), y_i) \right],
	\end{equation*}
	where $\sigma=\{\sigma_1, \cdots, \sigma_{\nl}\}$ is a sequence of i.i.d Rademacher random variables with $\mathbb{P}[\sigma_i=1] = \mathbb{P}[\sigma_i=-1] = 0.5$.
\end{defn}

\begin{defn}[\textbf{Unsupervised Rademacher Complexity}]
	Define $$(\ell^\B\circ f)(x) = \sup_{x'\in\B(x)} \ell(f(x), f(x')).$$ The empirical Rademacher complexity of $\ell^\B \circ \setF$ over an unlabeled dataset $\DU = \{x_i^u\}_{i=1}^\nuu$ is defined as:
	\begin{equation*}
		\scalemath{0.95}{
		\begin{aligned}
			Rad_{\DU}&(\ell^\B \circ \setF) = \E_{\sigma}\left[ \sup_{f\in\setF} \frac{1}{\nuu} \sum_{i=1}^\nuu \sigma_i \sup_{x'\in\B(x_i^u)} \ell(f(x_i^u), f(x')) \right].
		\end{aligned}
		}
	\end{equation*}
\end{defn}

Now we are ready to present the main result as the following theorem. The proof is provided in \Apdref{sec:proof-general-bound}.

\begin{thm}[\textbf{General Robust Generalization Bound for SSL}]
	\label{thm:general_bound}
	Let $\setF$ be the hypothesis class. Under the same condition as \Propref{prop:risk}, for any function $f\in\setF$, the following inequality holds with probability at least $1-\delta$ over the random draw of $\DL$ and $\DU$:
	\begin{equation}
		\scalemath{1}{
		\begin{aligned}
		&\RB \\
		&~~~~\leq \underbrace{C_1 \cdot \frac{1}{\nuu} \sum_{i=1}^\nuu {\color{lightred} \sup_{x'\in\B(x_i^u)}} {\color{ballblue} \lce(f(x'), f(x_i^u))}}_{(1)} + \underbrace{\underset{\phantom{{\setF_{\setF}}_{j_k}}}{C_2} \cdot \RL}_{(2)} \\
		&~~~~~~~~ + \underbrace{2C_2 \cdot  Rad_{\DL}(\lce \circ \setF) + 2C_1 \cdot  Rad_{\DU}(\lce^\B \circ \setF)}_{(3)} \\
		&~~~~~~~~ \underbrace{+ 3C_2 \cdot \sqrt{\frac{\log\frac{4}{\delta}}{2\nl}} + 3C_1 \cdot \sqrt{\frac{\log\frac{4}{\delta}}{2\nuu}}}_{(3)},
		\end{aligned}
		}
	\end{equation}
	where $C_1,C_2$ are constants same as \Propref{prop:risk}, and
	\begin{equation*}
		\begin{aligned}
			\RL = \frac{1}{\nl} \sum_{i=1}^\nl \lce(f(x_i^l), y_i).
		\end{aligned}
	\end{equation*}
\end{thm}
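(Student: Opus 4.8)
The plan is to combine \Propref{prop:risk} with two applications of the standard Rademacher-complexity generalization bound, one for each of the two expected risks appearing on its right-hand side. \Propref{prop:risk} already reduces the target quantity to
\[
\RB \leq C_1 \cdot E_1 + C_2 \cdot E_2,
\]
where $E_1 = \E_{x\sim\setD_X}\big[\sup_{x'\in\B(x)}\lce(f(x'),f(x))\big]$ is a robust expected risk over the marginal distribution and $E_2 = \E_{(x,y)\sim\setD_{XY}}[\lce(f(x),y)]$ is the ordinary expected risk over the joint distribution. Because these two terms draw on disjoint sources of randomness ($\DU$ versus $\DL$), it suffices to control each separately and then take a union bound.

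First I would handle the supervised term $E_2$. Writing $g = \lce\circ f$, the quantity $E_2$ is the population mean of $g$ over $\setD_{XY}$ while $\RL$ is its empirical mean over the i.i.d.\ sample $\DL$ of size $\nl$. The bounded-output assumption $s\in[\varepsilon,1-\varepsilon]^\nc$ guarantees $g$ takes values in a bounded interval, so the textbook Rademacher bound (symmetrization followed by McDiarmid's bounded-differences inequality) gives, with probability at least $1-\delta/2$ and uniformly over $f\in\setF$,
\[
E_2 \leq \RL + 2\,Rad_{\DL}(\lce\circ\setF) + 3\sqrt{\frac{\log(4/\delta)}{2\nl}}.
\]
The $\log(4/\delta)$ inside the root comes from allocating failure probability $\delta/2$ to this event, and the constant $3$ arises because McDiarmid is invoked twice: once to concentrate $E_2-\RL$ around its mean and once to concentrate the empirical Rademacher complexity around $Rad_{\DL}$.

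The second, conceptually more delicate, step is the unsupervised term $E_1$. The key observation is that, since the uncertainty set $\B(x)$ is deterministic given $x$, the map $h(x) := \sup_{x'\in\B(x)}\lce(f(x'),f(x)) = (\lce^\B\circ f)(x)$ is an ordinary bounded function of the single random input $x$. Hence $E_1$ and its empirical counterpart $\frac{1}{\nuu}\sum_{i=1}^{\nuu} h(x_i^u)$ over the i.i.d.\ sample $\DU\sim\setD_X$ obey exactly the same generalization bound applied to the composed class $\lce^\B\circ\setF$, giving with probability at least $1-\delta/2$,
\[
E_1 \leq \frac{1}{\nuu}\sum_{i=1}^{\nuu}\sup_{x'\in\B(x_i^u)}\lce(f(x'),f(x_i^u)) + 2\,Rad_{\DU}(\lce^\B\circ\setF) + 3\sqrt{\frac{\log(4/\delta)}{2\nuu}}.
\]
I expect the main obstacle to lie here: one must verify that the supremum over $\B(x)$ preserves measurability and boundedness so that the symmetrization and concentration arguments still apply, and that the empirical complexity is the one matching the \textbf{Unsupervised Rademacher Complexity} definition (taking care of the argument ordering of $\lce$). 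Once $h$ is recognized as a single bounded function, no special machinery beyond the supervised case is needed.

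Finally I would combine the two displays by a union bound, so that with probability at least $1-\delta$ both inequalities hold simultaneously. Multiplying the $E_2$ bound by $C_2>0$ and the $E_1$ bound by $C_1>0$ (both positive since $\varepsilon\in(0,0.5)$), summing, and substituting into \Propref{prop:risk} yields exactly the claimed inequality: the two empirical means become terms $(1)$ and $(2)$, and the two Rademacher terms together with the two $\sqrt{\log(4/\delta)}$ deviation terms form the group labeled $(3)$. The positivity of $C_1$ and $C_2$ is precisely what allows the probabilistic inequalities to be scaled and added without reversing any directions.
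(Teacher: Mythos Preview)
Your proposal is correct and follows essentially the same route as the paper: invoke \Propref{prop:risk} to split $\RB$ into the two expectations, apply the standard Rademacher generalization bound (from \cite{foundml}) separately to each with failure probability $\delta/2$, and combine via a union bound. The paper's appendix proof is terser (and in fact contains a typo swapping the two Rademacher complexities between the displays for $(a)$ and $(b)$), but your version with the correct pairing $Rad_{\DL}(\lce\circ\setF)$ for the supervised term and $Rad_{\DU}(\lce^\B\circ\setF)$ for the unsupervised term matches the theorem statement exactly.
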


\begin{rem}
	\Thmref{thm:general_bound} shows that the expected risk could be bounded by the sum of three components:
	\begin{enumerate}
		\item[(1)] The first term is defined on the unlabeled dataset $\DU$. Recall that $\B(x_i^u)$ is an uncertainty set. This term captures the inconsistency between predictions of original samples and samples around corresponding uncertain regions.
		\item[(2)] The second term is defined on the labeled dataset $\DL$, which is the empirical classification risk.
		\item[(3)] The third term is only relevant with the choice of the hypothesis class $\setF$ and the magnitude of $\nl, \nuu$. In general, this term vanishes when the training data is sufficiently large.
	\end{enumerate}
\end{rem}

\noindent\textbf{Relationship to consistency regularization}: \Thmref{thm:general_bound} indicates that the robustness of $f$ toward input uncertainty largely depends on the robustness toward such uncertainty over unlabeled data. This formulation coincides with the consistency regularization adopted in the state-of-the-art methods \cite{UDA,FixMatch,MixMatch,ReMixMatch}, which forces the prediction to be invariant toward noise/augmentation over input examples. The key difference from existing studies is that the {\color{lightred} supremum} in the generalization bound in our paper requires the uncertainty consistency to be validated for the worst-case element in set $\B$. In fact, when there is only one data point in $\B(x_i^u)$, the supremum is canceled hence term (1) degenerates to the vanilla consistency regularization. As a sequence, applying the consistency regularization over unlabeled data naturally minimizes the expected robust risk.

To realize our goal, we need to minimize the upper bound of $\RB$, \ie, all the three components, as much as possible. First, term (1) and (2) are both empirical risks that could be optimized following the standard risk minimization scheme. However, the supremum operation over $\B$ in term (1) might cause difficulties for calculation. Besides, the minimization of term (3) is further related to the choice of the model. In the following, we will analyze the optimization of term (1) and (3) in detail.

First, term (1) is the one we will mainly focus on in this paper:
\begin{equation}
\frac{1}{\nuu}\sum_{i=1}^\nuu {\color{lightred} \sup_{x'\in\B(x_i^u)}} {\color{ballblue} \lce(f(x'), f(x_i^u))},
\end{equation}
which is the performance bottleneck for SSL problems. In practice, $\B$ could be chosen as an $\ell_\infty$-norm ball with radius $\epsilon$: $\B_\infty^\epsilon(x) = \{x' \in\rfield^d | \norm{x'-x}_\infty \le \epsilon\}$. However, such sets contain infinitely many elements, making the supremum hard to calculate. Beyond computational issues, to represent the change of an original data $x$, the uncertainty set should also characterize the semantic proximity around $x$. In terms of this property, the norm ball sets merely convey numerical proximity. However, pursuing numerical proximity should not be our goal in the end, since for image data, the proximity is often expressed at the semantic level. As shown in \Figref{fig:example}{\color{ballblue}(a)}, the left image depicts a puppy, while the right depicts a bagel. The appearance distance between the two is pretty small, even though they belong to completely different semantic categories. A similar pair is displayed in \Figref{fig:example}{\color{ballblue}(b)}. Seeing the above-mentioned issues, we introduce a semantic uncertainty set $\Bse$ with finitely many elements. Specifically, given an original data point with feature $x$, $\Bse(x)$ is constructed by generating $K$ class-invariant augmentations of $x$. Following previous work along the direction of consistency regularization \cite{UDA, FixMatch}, each augmented data $x'_i$ is often obtained by performing an operation $\tau_i$ upon $x$, which is a combination of multiple atomic class-invariant data transformations such as solarization, rotation, and translation. Mathematically, $\Bse(x)$ could be defined as:
\begin{equation}
	\Bse(x) = \{x'_i: x'_i = \tau_i(x),~ i\in[K]\}.
\end{equation}
Please see Sec.\ref{sec:train} for more implementation details.

The resulted optimization problem for unlabeled data could then be written as:
\begin{equation}
	\label{eq:minimax}
	\min_\theta \  \frac{1}{n_u}\sum_{i=1}^{n_u}~ {\color{lightred} \max_{j\in[K]}}~ {\color{ballblue} \lce\big(f(x_{i,j}'), f(x_i^u)\big)},
\end{equation}
where $x'_{i,j}\in\Bse(x_i^u)$ is the $j$-th augmented data for the $i$-th unlabeled sample obtained by class-invariant data augmentation.

On the other hand, term (3) depends on the Rademacher complexity of the hypothesis class and the size of the training data. A number of theoretical analysis on the Rademacher complexity of deep neural networks have already been proposed \cite{dnn_rademacher1,dnn_rademacher2}, which suggest that the complexity enjoys a general form
\begin{equation*}
	\begin{aligned}
		Rad(\setF) &= g(\mathsf{Comp}, n),\\
		Rad(\ell \circ \setF) &= g(\mathsf{Comp}, n),
	\end{aligned}
\end{equation*}
where $\mathsf{Comp}$ is an abstract term only dependent on the size and architecture of the network. Specifically, it holds that $g(\mathsf{Comp}, n)\to 0$ when $n\to \infty$. In this sense, term (3) could not be reduced by simple optimization techniques. Practically, one can instead resort to increasing the size of training data to realize it. Consequently, we will ignore the impact of this term on our optimization problem.

\begin{figure}[t]
	\centering
	\includegraphics[width=1\columnwidth]{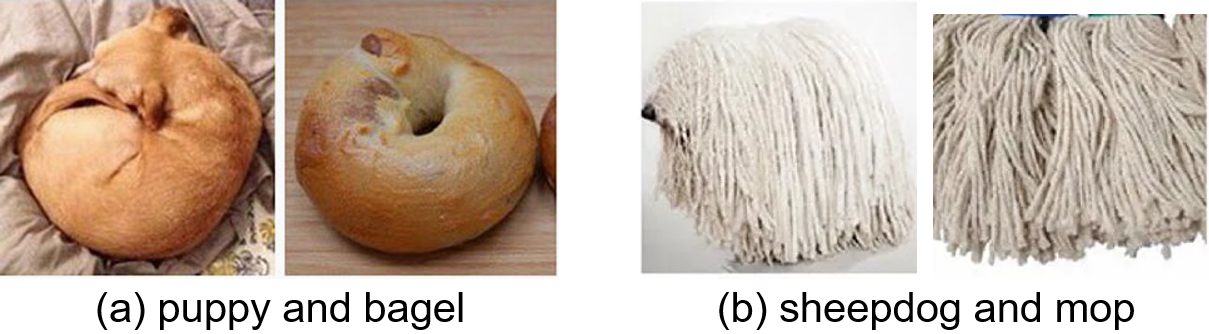}
	\caption{Images convey the numerical proximity but without semantic proximity.}
	\label{fig:example}
\end{figure}

Putting all these together, we have the overall objective as follows:
\begin{equation}
	\label{eq:minimax_all}
	\begin{split}
		\min_\theta \ \frac{1}{n_l}\sum_{i=1}^{n_l} \ell_i^l + \frac{\gamma}{n_u}\sum_{i=1}^{n_u} {\color{lightred} \max_{j\in[K]}} ~ {\color{ballblue} \ell_{i,j}^u},
	\end{split}
\end{equation}
where
\begin{equation*}
	\begin{split}
		\ell_i^l &= \lce\big(f(x_i^l), y_i\big), \\
		\ell_{i,j}^u &= \lce\big(f(x_{i,j}'), f(x_i^u)\big),
	\end{split}
\end{equation*}
and $\gamma>0$ is the trade-off hyperparameter. When $K=1$, this formulation degenerates to the vanilla consistency regularization framework.

\subsection{Practical Generalization Bound for Convolutional Neural Networks}
\Thmref{thm:general_bound} provides a generalization bound for general models. Based on this, now we zoom in on the popular deep convolutional neural network model, which is widely used in real-world applications, and see how its generalization error is bounded. All the proofs are provided in \Apdref{sec:proof-cnn-bound}.

First, we need to introduce this specific setting. We consider a deep convolutional network with its input constrained by $\norm{vec(x)}_2 \le \chi$ and a bounded $\nc$-dimensional output. The network consists of $L_c$ convolutional layers and $L_f$ fully-connected layers, \ie, $L_N = L_c + L_f$ layers in total. In $i$-th convolutional layer, the convolution kernel $U^{(i)}\in\rfield^{k_i\times k_i \times c_{i-1} \times c_i}$ is followed by a component-wise non-linear activation function and an optional pooling operation. Here we assume both the activation function and pooling operation are $1$-Lipschitz and nonexpansive (\eg, ReLU, tanh and sigmoid activation function, and max-pooling operation). We also use $op(U)$ to denote the operator matrix for the kernel $U$, which expresses the convolution as a matrix-vector product, \ie, $U(x)=op(U)vec(x)$. Meanwhile, the weight matrix of $i$-th fully-connected layer is denoted by $V^{(i)}$. We then denote all the network parameters by $\theta = (U^{(1)}, \cdots, U^{(L_c)}, V^{(1)}, \cdots, V^{(L_f)})$, and the total number of parameters by $W$. For two networks parameterized by $\theta$ and $\tilde{\theta}$ respectively, we define their distance as
\begin{equation*}
	d_N(\theta, \tilde{\theta}) = \sum^{L_c}_{i=1} \norm{op(U^{(i)}) - op(\tilde{U}^{(i)})}_2 + \sum^{L_f}_{i=1} \norm{V^{(i)} - \tilde{V}^{(i)}}_2.
\end{equation*}
We let all the initialized parameter matrices have operator norms at most $1+\nu$. Namely, for the initialized parameters $\theta_0 = (U^{(1)}_0, \cdots, U^{(L_c)}_0, V^{(1)}_0, \cdots, V^{(L_f)}_0)$, it subjects to $\Vert op(U^{(i)}_0) \Vert_2 \le 1+\nu, \forall i\in[L_c]$ and $\Vert V^{(j)}_0 \Vert_2 \le 1+\nu, \forall j\in[L_f]$.
We further assume that the distance between the learned parameters and the initialized ones $\theta_0$ is no larger than $\beta$. So the set of learned parameters is defined as
\begin{equation*}
	\setO_{\beta,\nu} = \Big\{\theta: d_N(\theta, \theta_0) \le \beta \Big\}.
\end{equation*}
And the corresponding set of CNNs parameterized by such learned parameters could be denoted as $\setFbu$. For any function $f\in\setFbu$, we could rewrite it in the form of component functions $f=(f^{(1)}, \cdots, \fj, \cdots, f^{(\nc)})$ where $\fj$ produces the output for the $j$-th class. Denote the class of the component functions as $\setFbu^0$, $f$ could be written as
\begin{equation*}
	f\in\setFbu= \underbrace{\setFbu^0 \times \cdots \times \setFbu^0}_{\nc}.
\end{equation*}

From \Thmref{thm:general_bound}, we see that to obtain our final result, it is necessary to bound the Rademacher complexity of the network. Unfortunately, there are two main difficulties in this process. On one hand, $f$ is a vector-valued function, which is more difficult to bound than scalar-valued functions. On the other hand, the compositional structure of neural networks makes it also hard to derive the Rademacher complexity upper bound. To deal with these issues, we adopt the covering number and chaining techniques in our derivation of the Rademacher complexity bound.

The derivation involves the Rademacher complexity for multi-output functions.
\begin{defn}[\textbf{Rademacher Complexity of Multi-output Functions}]
	Let $\setF$ be the hypothesis space for $\nc$-dimentional multi-output functions. The empirical Rademacher complexity of $\setF$ over a dataset $\setD=\{x_i\}^n_{i=1}$ is defined as:
	\begin{equation*}
		\begin{aligned}
			Rad_{n, \nc}&(\Pi\circ\setF) = \\
			&\E_\sigma \left[\sup_{f=(f^{(1)}, \cdots, f^{(\nc)})\in\setF} \frac{1}{n \cdot \nc} \sum^{\nc}_{j=1} \sum^n_{i=1} \sigma^{(j)}_i \cdot \fj(x_i)\right],
		\end{aligned}
	\end{equation*}
	where $\{\sigma^{(j)}_i\}_{i,j}$ is a sequence of independent Rademacher random variables with $\mathbb{P}[\sigma_i^{(j)}=1] = \mathbb{P}[\sigma_i^{(j)}=-1] = 0.5$.
\end{defn}

With this definition, we could then reach the following complexity bound of networks.
\begin{lem}
	\label{lem:cover_bound}
	The Rademacher complexity of multi-output deep convolutional networks could be bounded by
	\begin{equation}
		Rad_{n, \nc}(\Pi\circ\setFbu) \le \frac{4}{\sqrt{n \cdot \nc}} + \frac{12}{\sqrt{n \cdot \nc}} \sqrt{W \cdot \log(C_N \sqrt{n\nc})},
	\end{equation}
	where $C_N = 3\chi \cdot e^{\frac{\beta}{1+\nu}}$.
\end{lem}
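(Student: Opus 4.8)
The plan is to bound the multi-output complexity by a covering-number / chaining (Dudley) argument, in which the effective sample size is $m=n\nc$: the $n\nc$ pairs of sample index $i$ and output coordinate $j$ together play the role of the observations. Concretely, I would view $Rad_{n,\nc}(\Pi\circ\setFbu)$ as an ordinary empirical Rademacher complexity of the real-valued process $f\mapsto\big(\fj(x_i)\big)_{i\in[n],j\in[\nc]}\in\rfield^{m}$ indexed by $f\in\setFbu$, so that the standard chaining machinery applies verbatim with $n$ replaced by $m$.

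The heart of the argument is a covering-number bound for $\setFbu$ in the empirical $L_2$ metric, which I would build in two stages. First, the admissible parameters form $\setO_{\beta,\nu}=\{\theta:d_N(\theta,\theta_0)\le\beta\}$, a radius-$\beta$ ball inside a space of dimension $W$ under the metric $d_N$ (a sum of per-layer operator-norm distances); since operator norm is dominated by the Frobenius norm, a standard volumetric argument yields a $\delta$-net of $\setO_{\beta,\nu}$ of size at most $(3\beta/\delta)^{W}$, i.e. $\log N_{d_N}(\delta)\le W\log(3\beta/\delta)$. Second, I would show that the map $\theta\mapsto f_\theta(x)$ is Lipschitz in $d_N$ with some constant $\Lambda$. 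This is where the convolutional structure enters: writing each layer as the matrix-vector product $op(U^{(i)})vec(\cdot)$ and using that the activations and pooling are $1$-Lipschitz and nonexpansive, I would perturb one layer at a time and propagate, so that the output difference is controlled by the product of the operator norms of the remaining layers times the input norm $\norm{vec(x)}_2\le\chi$. The telescoping sum over layers then gives $\Lambda\lesssim\chi\prod_i\norm{op(U^{(i)})}_2$, and the key estimate is to control this product under the distance-from-initialization constraint: since $\norm{op(U^{(i)})}_2\le(1+\nu)+d_i$ with $\sum_i d_i\le\beta$, the inequality $1+t\le e^{t}$ produces the factor $e^{\beta/(1+\nu)}$, which is exactly what yields $C_N=3\chi\cdot e^{\frac{\beta}{1+\nu}}$. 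Composing the two stages (taking $\delta=\epsilon/\Lambda$) gives the parametric bound
\begin{equation*}
	\log N(\epsilon,\setFbu,L_2)\le W\log\!\left(\frac{C_N}{\epsilon}\right).
\end{equation*}

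Finally, I would feed this into the refined Dudley entropy integral,
\begin{equation*}
	Rad_{n,\nc}(\Pi\circ\setFbu)\le\inf_{\alpha>0}\left(4\alpha+\frac{12}{\sqrt{n\nc}}\int_\alpha^{\infty}\sqrt{\log N(\epsilon,\setFbu,L_2)}\;d\epsilon\right),
\end{equation*}
which already carries the constants $4$ and $12$. Because every output coordinate satisfies $s\in[\varepsilon,1-\varepsilon]\subset[0,1]$, the class has empirical $L_2$-radius at most $1$, so $\log N(\epsilon)=0$ for $\epsilon\ge1$ and the integral truncates at $1$; bounding the decreasing integrand by its value at the lower endpoint gives $\int_\alpha^{1}\sqrt{W\log(C_N/\epsilon)}\,d\epsilon\le\sqrt{W\log(C_N/\alpha)}$. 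Choosing $\alpha=1/\sqrt{n\nc}$ then produces precisely $\frac{4}{\sqrt{n\nc}}+\frac{12}{\sqrt{n\nc}}\sqrt{W\log(C_N\sqrt{n\nc})}$.

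I expect the main obstacle to be the Lipschitz/covering estimate for the \emph{vector-valued, compositional} convolutional network: correctly propagating a single-layer perturbation through the $op(\cdot)$ operators together with the pooling and activation layers, and above all controlling the product of operator norms so that only the clean distance-from-initialization factor $e^{\beta/(1+\nu)}$ survives. By contrast, the reduction of the multi-output complexity to a scalar empirical process over $m=n\nc$ coordinates, and the bookkeeping that keeps the net resolution tied to $1/\sqrt{n\nc}$, are comparatively routine once this Lipschitz bound is in hand.
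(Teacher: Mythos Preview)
Your proposal is essentially the paper's own proof: the paper likewise establishes a Lipschitz-parameterization bound $\norm{f_\theta(x)-f_{\tilde\theta}(x)}\le \chi(1+\nu+\beta/L_N)^{L_N}d_N(\theta,\tilde\theta)$ (your telescoping/layer-perturbation step), converts it to a covering-number bound $\log\mathcal{N}(\epsilon,\setFbu,\dinf)\le W\log\big(3\chi(1+\nu+\beta/L_N)^{L_N}/\epsilon\big)$, feeds this into the Dudley-type bound $Rad_{n,\nc}(\Pi\circ\setFbu)\le\inf_{\alpha>0}\big(4\alpha+\frac{12}{\sqrt{n\nc}}\int_\alpha^1\sqrt{\log\mathcal{N}(\epsilon)}\,d\epsilon\big)$, upper-bounds the integral by the integrand at $\alpha$, chooses $\alpha=1/\sqrt{n\nc}$, and finishes with $(1+\nu+\beta/L_N)^{L_N}\le e^{\beta/(1+\nu)}$. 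The only cosmetic difference is that the paper works with the $\dinf$ pseudo-metric over $\mathcal{Z}_\setD=\setD\times[\nc]$ (via a sub-Gaussian lemma tailored to the multi-output process) rather than phrasing it as an empirical $L_2$ covering, but the resulting chaining bound and the subsequent steps are identical to yours.
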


We can see that this complexity is related to the size of both the dataset and the classes in the order of $O\left(\sqrt{\sfrac{W \log(n\nc)}{n\nc}}\right)$. When there are more available training data or more classes to learn, the Rademacher complexity will become smaller. Moreover, the result also implies that enlarging the number of parameters, the input scale and the distance from initialized parameters will increase the Rademacher complexity, which is consistent with intuition.

Meanwhile, our final result also relies on the Rademacher complexity bound for the component class $\setFbu^0$. Considering the semantic uncertainty set $\Bse$, we also need to take the input transformation into account when inducing the Rademacher complexity upper bound. This is based on the following definition.

\begin{defn}
	The empirical Rademacher complexity of single-output functions over a dataset $\setD=\{x_i\}^n_{i=1}$ with corresponding input transformations $\setT_\setD = \{\tau_i\}^n_{i=1}$ and a hypothesis space $\setF$ is defined as:
	\begin{equation}
		Rad_\setD(\setF\circ\setT_\setD) = \E_\sigma \bigg[\sup_{f\in\setF} \frac{1}{n} \sum^n_{i=1} \sigma_i \cdot f(\tau_i(x_i))\bigg].
	\end{equation}
\end{defn}

Similar to \Lemref{lem:cover_bound}, we have the Rademacher complexity bound related to $\setFbu^0$ as follows.

\begin{lem}
	\label{lem:b_cover_bound}
	Denote by $W_g$ the number of parameters for convolutional networks that replace the last layer of $f\in\setFbu$ with a single-output layer. Let the output of all the transformation functions $\tau$ be constrained by $\norm{vec(\tau(x))}\le\chi_\tau$, then the Rademacher complexity of $\setFbu^0$ without or with input transformation $\setT_\setD = \{\tau_i\}^n_{i=1}$ could be bounded by
	\begin{equation}
		\begin{aligned}
			Rad_\setD(\setFbu^0) &\le \frac{4}{\sqrt{n}} + \frac{12}{\sqrt{n}} \sqrt{W_g \cdot \log(C_N n)}, \\
			Rad_\setD(\setFbu^0\circ\setT_\setD) &\le \frac{4}{\sqrt{n}} + \frac{12}{\sqrt{n}} \sqrt{W_g \cdot \log(C_{N,\tau} n)},
		\end{aligned}
	\end{equation}
	respectively, where $C_N = 3\chi \cdot e^{\frac{\beta}{1+\nu}}$ and $C_{N,\tau} = 3\chi_\tau \cdot e^{\frac{\beta}{1+\nu}}$.
\end{lem}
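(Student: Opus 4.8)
The plan is to prove this lemma as the single-output counterpart of \Lemref{lem:cover_bound}, reusing the same covering-number-plus-chaining machinery but specialized from vector-valued to scalar-valued networks. Writing $f_\theta\in\setFbu^0$ for the scalar network with parameters $\theta\in\setO_{\beta,\nu}$, I would bound the empirical Rademacher complexity through a Dudley-type chaining inequality,
\begin{equation*}
Rad_\setD(\setFbu^0) \le \inf_{\alpha>0}\left(4\alpha + \frac{12}{\sqrt{n}}\int_\alpha^{M}\sqrt{\log\mathcal{N}\!\left(\epsilon,\setFbu^0,\norm{\cdot}_{L_2(\setD)}\right)}\,d\epsilon\right),
\end{equation*}
where $M$ upper-bounds the output range and $\mathcal{N}$ denotes the empirical covering number in the $L_2(\setD)$ metric. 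The problem then reduces to controlling this covering number and evaluating the integral.

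The central step is a parameter-sensitivity estimate: for any $\theta,\tilde\theta\in\setO_{\beta,\nu}$ and any admissible input $x$ with $\norm{vec(x)}_2\le\chi$, I would show $\abs{f_\theta(x)-f_{\tilde\theta}(x)}\le L\cdot d_N(\theta,\tilde\theta)$ for a Lipschitz constant $L$ scaling linearly in the input norm $\chi$. This follows from a layer-by-layer telescoping argument: perturbing one layer's weights alters the output by an amount controlled by the norm of that layer's input times the product of operator norms of the subsequent layers, while the $1$-Lipschitz, nonexpansive activations and pooling operations forward these perturbations without amplification. Bounding each operator norm by $\norm{op(U^{(i)})}_2\le(1+\nu)+\delta_i$ with $\sum_i\delta_i\le\beta$ and invoking $\prod_i\bigl(1+\tfrac{\delta_i}{1+\nu}\bigr)\le e^{\beta/(1+\nu)}$ is precisely what collapses the products of norms into the constant $C_N=3\chi\cdot e^{\beta/(1+\nu)}$. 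Since $\setO_{\beta,\nu}$, restricted to the $W_g$ parameters of the single-output network, is a ball of radius $\beta$ in the $d_N$ metric, a standard volumetric net argument gives $\log\mathcal{N}(\epsilon/L,\setO_{\beta,\nu},d_N)\le W_g\log(c\beta L/\epsilon)$, and the Lipschitz estimate transfers this to $\log\mathcal{N}(\epsilon,\setFbu^0,\norm{\cdot}_{L_2(\setD)})\le W_g\log(c\beta L/\epsilon)$.

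With the logarithmic entropy estimate in hand, the remaining work is routine: substituting it into the chaining bound and optimizing the truncation scale (taking $\alpha\sim 1/\sqrt{n}$, which produces the $\tfrac{4}{\sqrt{n}}$ offset) reproduces the stated $\tfrac{4}{\sqrt{n}}+\tfrac{12}{\sqrt{n}}\sqrt{W_g\log(C_N n)}$ form, mirroring the evaluation already carried out for \Lemref{lem:cover_bound}. The transformation case is then immediate. Because each $\tau_i$ is fixed per sample, the maps $x_i\mapsto f(\tau_i(x_i))$ are ordinary scalar networks evaluated on the transformed inputs $\tau_i(x_i)$, whose norms satisfy $\norm{vec(\tau_i(x_i))}\le\chi_\tau$. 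The entire derivation goes through verbatim with $\chi$ replaced by $\chi_\tau$ in the sensitivity constant $L$, hence with $C_N$ replaced by $C_{N,\tau}=3\chi_\tau\cdot e^{\beta/(1+\nu)}$, yielding the second inequality.

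I expect the main obstacle to be the parameter-sensitivity bound of the middle step, namely obtaining the clean exponential constant $e^{\beta/(1+\nu)}$. The delicate part is the telescoping over layers that isolates the effect of perturbing a single layer from the gain contributed by the others, and then taming the resulting products of operator norms—which a priori could blow up like $(1+\nu)^{L_N}$—into the single aggregate factor $e^{\beta/(1+\nu)}$ using only the constraint $d_N(\theta,\theta_0)\le\beta$ together with the initialization bound. Once this Lipschitz estimate is secured, both covering-number bounds and both chaining evaluations are direct specializations of the multi-output argument behind \Lemref{lem:cover_bound}.
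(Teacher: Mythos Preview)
Your proposal is correct and follows essentially the same approach as the paper, which proves this lemma with a single sentence: ``The proof follows the similar spirit of \Lemref{lem:cover_bound}.'' You have correctly unpacked what that entails---the Lipschitz-in-parameters estimate (the paper's \Lemref{lem:cnn_lip}), the resulting covering-number bound via Lipschitz parameterization, the Dudley-type chaining with truncation at $\alpha=1/\sqrt{n}$, and the observation that the transformation case amounts to replacing $\chi$ by $\chi_\tau$ in the input-norm constraint---so there is nothing to add.
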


Based on \Lemref{lem:cover_bound} and \ref{lem:b_cover_bound}, we could then provide the following generalization bound for the convolutional networks.

\begin{thm}[\textbf{Robust Generalization Bound for SSL on CNNs}]
	\label{thm:cnn_bound}
	Let $\setFbu$ be the hypothesis class of multi-output deep convolutional networks. Define the empirical risk on labeled and unlabeled set as
	\begin{equation*}
		\begin{aligned}
			\RL &= \frac{1}{\nl} \sum_{i=1}^\nl \lce(f(x_i^l), y_i), \\
			\RU &=  \frac{1}{\nuu} \sum_{i=1}^\nuu {\color{lightred} \sup_{x'\in\Bse(x_i^u)}} {\color{ballblue} \lsce(f(x'), f(x_i^u))}.
		\end{aligned}
	\end{equation*}
	Under the same condition as \Propref{prop:risk}, for any function $f\in\setFbu$, the following inequality holds with probability at least $1-\delta$ over the random draw of $\DL$ and $\DU$:
	\begin{equation}
		\scalemath{0.93}{
		\begin{aligned}
			&R^{0/1}_{\B\text{-robust}}(f) \leq\, C_1 \cdot \RU + C_2(1+\frac{C_0}{2}) \cdot \RL \\
			&~~~+ C_1 \biggl( \frac{16K\nc}{\sqrt{\nuu}}\cdot\frac{1-\varepsilon}{\varepsilon} \bigg[ 1 + 3 \sqrt{W_g \cdot \log(C_M \nuu)} \bigg] + 3\sqrt{\frac{\log \frac{4}{\delta}}{2\nuu}} \biggr) \\
			&~~~+ \frac{3C_0C_2}{2}\cdot \Psi_{\nl,\nc,\delta}(\setFbu)
		\end{aligned}
		}
	\end{equation}
	where $C_0>0$ is a universal constant, $C_1,C_2$ are constants same as \Propref{prop:risk}, $C_N = 3\chi \cdot e^{\frac{\beta}{1+\nu}}$, $C_M = 3\max\{\chi,\chi_\tau\} \cdot e^{\frac{\beta}{1+\nu}}$ and
	\begin{equation*}
		\scalemath{0.96}{
		\begin{aligned}
			\psi_{\nl, \nc}(\setFbu) =& \frac{4}{\sqrt{\nl\cdot \nc}} + \frac{12}{\sqrt{\nl\cdot \nc}} \sqrt{W \cdot \log(C_N \sqrt{\nl\nc})}, \\
			\Psi_{\nl, \nc,\delta}(\setFbu) =& 2 \left( \sqrt{\nc} \log^{3/2}(\nl \nc e) \cdot \psi_{\nl, \nc}(\setFbu) + \frac{1}{\sqrt{\nl}} \right) \\
			&+ \frac{\log(\nc e)}{\nl} \left(\log\frac{2}{\delta} + \log(\log\;\nl)\right).
		\end{aligned}
		}
	\end{equation*}
\end{thm}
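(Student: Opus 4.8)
The plan is to specialize the general bound of \Thmref{thm:general_bound} to the convolutional class $\setFbu$ by controlling its two data-dependent Rademacher terms, $Rad_{\DU}(\lce^\B \circ \setF)$ and $Rad_{\DL}(\lce \circ \setF)$, through Lemmas~\ref{lem:cover_bound} and~\ref{lem:b_cover_bound}. The two terms call for different treatments: the unlabeled consistency term is handled by a contraction argument that reduces the vector-valued cross-entropy to the single-output component class $\setFbu^0$, whereas the labeled term is sharpened by invoking a smooth-loss multi-class generalization bound so that the final complexity appears in the $\Psi_{\nl,\nc,\delta}$ form rather than as a raw Rademacher average.

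For the unlabeled term, I would first rewrite the supremum over the finite set $\Bse(x_i^u)$ as a maximum, i.e.\ $\lce^\B \circ f = \max_{j\in[K]} \lce(f(\tau_j(x_i^u)), f(x_i^u))$, and bound the Rademacher complexity of a pointwise maximum of $K$ functions by the sum of their individual complexities, which produces the factor $K$. Next, on the admissible output region $[\varepsilon,1-\varepsilon]^\nc$ the map $(s', t)\mapsto \lce(s', t) = -\sum_i t_i \log s'_i$ is Lipschitz with constant of order $\frac{1-\varepsilon}{\varepsilon}$ (the dominant contribution coming from $\partial_{s'_i}\lce = -t_i/s'_i$); applying a vector-valued contraction lemma then replaces the loss by the component outputs and contributes the class factor $\nc$ together with the Lipschitz constant $\frac{1-\varepsilon}{\varepsilon}$. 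Since the consistency loss couples $f$ evaluated at the transformed point $\tau_j(x_i^u)$ and at the clean point $x_i^u$, both parts of \Lemref{lem:b_cover_bound}, with and without input transformation, are needed simultaneously, and the radius $C_M = 3\max\{\chi,\chi_\tau\}e^{\beta/(1+\nu)}$ is exactly what absorbs the larger of the two input scales. Assembling these factors, $2K\nc\cdot\frac{1-\varepsilon}{\varepsilon}$, against the leading constant $\frac{4}{\sqrt{\nuu}}\bigl[1+3\sqrt{W_g\log(C_M\nuu)}\bigr]$ from \Lemref{lem:b_cover_bound} and multiplying by the $2C_1$ prefactor of \Thmref{thm:general_bound} yields the displayed $C_1\frac{16K\nc}{\sqrt{\nuu}}\frac{1-\varepsilon}{\varepsilon}[\cdots]$ term, where $2\cdot 2\cdot 4 = 16$ tracks the Theorem~1 prefactor, the contraction/maximum constant, and the leading Lemma~2 constant.

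For the labeled term, rather than keeping the crude $2C_2\,Rad_{\DL}(\lce\circ\setF)+3C_2\sqrt{\log(4/\delta)/(2\nl)}$ of \Thmref{thm:general_bound}, I would invoke an optimistic (local-Rademacher) generalization bound for the smooth cross-entropy loss in the multi-class setting, whose conclusion has the form $\E[\lce]\le(1+\tfrac{C_0}{2})\RL+\tfrac{3C_0}{2}\Psi_{\nl,\nc,\delta}(\setFbu)$ with $C_0$ the universal constant of that result. The complexity input to this bound is the multi-output Rademacher complexity of $\setFbu$, for which \Lemref{lem:cover_bound} with $n=\nl$ supplies exactly $\psi_{\nl,\nc}(\setFbu)$; wrapping it with the $\sqrt{\nc}\log^{3/2}(\nl\nc e)$ chaining factor and the $\frac{\log(\nc e)}{\nl}(\log\frac{2}{\delta}+\log\log\nl)$ low-order term reproduces $\Psi_{\nl,\nc,\delta}$. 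Multiplying by the $C_2$ prefactor gives the two labeled contributions $C_2(1+\tfrac{C_0}{2})\RL$ and $\tfrac{3C_0C_2}{2}\Psi_{\nl,\nc,\delta}$, and a final union bound over the two failure events (one per dataset) keeps the overall confidence at $1-\delta$.

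The hardest part will be the contraction step for the unlabeled term: the cross-entropy consistency loss is vector-valued and, crucially, is only Lipschitz because of the boundedness assumption $s\in[\varepsilon,1-\varepsilon]^\nc$ inherited from \Propref{prop:risk}, so I must verify that the outputs stay in this region and propagate the constant $\frac{1-\varepsilon}{\varepsilon}$ faithfully through the vector contraction. Coupled with this is the bookkeeping needed to handle the maximum over the $K$ augmentations (deciding between a $\sqrt{\log K}$ and a linear $K$ dependence) and to combine the two versions of \Lemref{lem:b_cover_bound} under the single radius $C_M$; getting these constants to line up into the clean $16K\nc$ coefficient is the most delicate accounting in the argument.
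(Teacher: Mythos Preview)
Your proposal is correct and follows essentially the same route as the paper: start from the decomposition of \Propref{prop:risk} (which you access via \Thmref{thm:general_bound}), handle the unlabeled term by combining the linear-in-$K$ bound for a pointwise maximum with the vector Talagrand contraction (using the $\tfrac{1-\varepsilon}{\varepsilon}$ Lipschitz constant of $-t\log s$ on $[\varepsilon,1-\varepsilon]$) and then \Lemref{lem:b_cover_bound} on both the transformed and clean inputs, and handle the labeled term by invoking an optimistic multi-class generalization bound whose complexity input is the multi-output Rademacher bound of \Lemref{lem:cover_bound}. The only cosmetic difference is that the paper begins directly from \Propref{prop:risk} rather than from \Thmref{thm:general_bound}, but since you explicitly discard the crude labeled Rademacher term of \Thmref{thm:general_bound} in favor of the sharper $\Psi_{\nl,\nc,\delta}$ bound and redo the union bound, the two arguments coincide in substance.
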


\begin{rem}
	\Thmref{thm:cnn_bound} implies that the third term is of order $O\big(K\sqrt{\frac{W_g\log \nuu}{\nuu}}\big)$, and the fourth term is of order $\Psi_{\nl, \nc,\delta}(\setFbu)$, more specifically, in $O\big(\log^{2}(\nl \nc)\sqrt{\frac{W}{\nl}}\big)$. Therefore, the order of the generalization gap is
	\begin{equation*}
		O\left(K\sqrt{\frac{W_g\cdot\log \nuu}{\nuu}} + \log^{2}(\nl \nc)\sqrt{\frac{W}{\nl}}\right).
	\end{equation*}
\end{rem}

\section{Optimization}\label{sec:optim}
Intuitively, we could adopt a natural alternative optimization scheme to solve the minimax problem in (\ref{eq:minimax_all}). The corresponding algorithm, which is called \textit{MaxMatch}, is shown in \Algref{alg}. In each step, we sample a mini-batch of labeled and unlabeled data, respectively. First, we obtain $\Bse$ for each unlabeled data point by performing the transformation operation for $K$ times (\texttt{line 5}). We then fix the model parameters $\theta$, and solve the maximization problem
\begin{equation}
	\max_{j\in[K]} \ell_{i,j}^u
\end{equation}
independently for each unlabeled data by finding out the most inconsistent term among $\ell_{i,j}^u$ (\texttt{line 6}). With the choice of $K$ fixed, our algorithm then employs a gradient solver to minimize the classification loss and the consistency loss founded in the previous step (\texttt{line 7}).

\begin{algorithm}[t]
	\caption{MaxMatch}
	\begin{algorithmic}[1]
		\Require Labeled dataset $\mathcal{D_L}$, unlabeled dataset $\mathcal{D_U}$, labeled batch size $B_l$, unlabeled batch size $B_u$, number of transformation $K$, trade-off coefficient $\lambda$, learning rate $\eta$, max iteration $T$
		\Ensure Model parameter $\theta$
		\State Initialize $\theta\gets\theta_0$
		\For{t}{1}{T}
		\State Sample a mini-batch of labeled data $\mathcal{X}_l$
		\State Sample a mini-batch of unlabeled data $\mathcal{X}_u$
		\State Augment each unlabeled sample for $K$ times
		\State Find the largest consistency loss for each unlabeled sample $$\ell^u_i=\max_{j\in[K]} \lce\big(f(x_{ij}'), f(x_i^u)\big)$$
		\State Minimize the selected consistency loss together with the classification loss of labeled data with a gradient solver
		\begin{equation*}
		\begin{split}
		g_t &= \nabla_\theta \Big(\frac{1}{B_l}\sum_{i}^{B_l}\ell_i^l + \frac{\lambda}{B_u}\sum_{i}^{B_u} \ell^u_i\Big) \\
		\theta_{t} &\gets \theta_{t-1} - \eta \cdot g_t
		\end{split}
		\end{equation*}
		\EndFor
	\end{algorithmic}
	\label{alg}
\end{algorithm}

With the proposed algorithm, we need to find if it is able to converge to a local stationary point of (\ref{eq:minimax_all}). However, the indifferentiability of the maximum operation over these sampled data in (\ref{eq:minimax_all}) makes it difficult to analyze the convergence property of this problem. Fortunately, for each unlabeled data point $x^u_i$, if we introduce a continuous sample weight $v^u_{i,j} \in [0,1]$ for each of the $K$ augmented data in $\Bse$.
This minimax problem could be reformulated as a differentiable minimax problem. This is shown in \Factref{fact:reform}.

\begin{fact}
	\label{fact:reform}
	The optimization problem (\ref{eq:minimax_all}) is equivalent to the following minimax optimization problem:
	\begin{equation}
		\label{eq:reform_prob}
		\begin{split}
			\min_\theta \max_{\substack{v_{i,j}^u\in[0,1],\\ \sum_{j=1}^K v_{i,j}^u = 1}} ~\frac{1}{n_l}\sum_{i=1}^{n_l} \ell_i^l + \frac{\gamma}{n_u}\sum_{i=1}^{n_u}  \sum_{j=1}^K v_{i,j}^u \cdot \ell_{i,j}^u, \\
		\end{split}
	\end{equation}
	where $v_{i,j}^u$ is the weight for the $j$-th augmented variant of $i$-th unlabeled sample.
\end{fact}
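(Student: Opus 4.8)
The plan is to reduce the claim to the elementary fact that maximizing a linear functional over the probability simplex attains its value at a vertex. Concretely, for any fixed vector $(a_1,\dots,a_K)\in\rfield^K$ one has
\[
\max_{j\in[K]} a_j \;=\; \max_{\substack{v_j\in[0,1],\\ \sum_{j=1}^K v_j=1}} \sum_{j=1}^K v_j a_j ,
\]
since the right-hand side is a linear program over the simplex $\Delta_K = \{v : v_j\in[0,1],\ \sum_j v_j = 1\}$, whose optimum is achieved at an extreme point, and the extreme points of $\Delta_K$ are exactly the standard basis vectors $e_1,\dots,e_K$. Evaluating the linear objective at $e_j$ gives $a_j$, so the maximum over the vertices equals $\max_{j} a_j$. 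I would state and justify this as the single lemma underlying the whole argument.

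First I would fix the model parameters $\theta$ and observe that the labeled term $\frac{1}{n_l}\sum_i \ell_i^l$ does not depend on the auxiliary weights $v_{i,j}^u$, so it passes through the inner maximization untouched. It then remains to show that the inner maximum of the unlabeled term equals $\frac{\gamma}{n_u}\sum_{i} \max_{j} \ell_{i,j}^u$.

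The key step is the \emph{separability} of the inner maximization. The feasible set for the weights factorizes as a product of simplices, one copy of $\Delta_K$ per unlabeled index $i$, because the constraint $\sum_{j} v_{i,j}^u = 1$ couples only the weights sharing the same $i$; meanwhile the objective $\sum_{i}\sum_{j} v_{i,j}^u \ell_{i,j}^u$ is a sum of terms each depending on a single block $v_{i,\cdot}^u$. Hence the joint maximization decomposes into $n_u$ independent per-sample maximizations, and applying the simplex fact to each block with $a_j = \ell_{i,j}^u$ yields
\[
\max_{v_{i,\cdot}^u\in\Delta_K}\ \sum_{j=1}^K v_{i,j}^u\,\ell_{i,j}^u \;=\; \max_{j\in[K]} \ell_{i,j}^u .
\]
Summing over $i$ and restoring the factor $\frac{\gamma}{n_u}$ recovers exactly the unlabeled term of (\ref{eq:minimax_all}).

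Finally I would note that this identity holds for \emph{every} $\theta$, so the two inner objectives agree pointwise as functions of $\theta$; taking $\min_\theta$ on both sides then gives equality of the optimal values together with identical sets of minimizers in $\theta$, which is the sense in which the two problems are equivalent. There is no real obstacle here: the only point requiring care is phrasing the separability argument cleanly, since the inner maximization ranges over a large product of simplices rather than a single one, and making explicit that the equivalence is an equality of objectives for each fixed $\theta$ (not merely of optimal values), so that the outer minimization transfers without further work.
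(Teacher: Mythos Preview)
Your argument is correct: the reduction to the elementary identity $\max_{j\in[K]} a_j = \max_{v\in\Delta_K}\sum_j v_j a_j$, the separability of the inner maximization across the product of simplices indexed by $i$, and the pointwise-in-$\theta$ equality that lets the outer $\min_\theta$ transfer, together give exactly what is claimed.

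As for comparison with the paper: the paper does not supply a proof of this statement at all. It is labeled a \emph{Fact} and invoked without justification, presumably because the vertex-of-the-simplex argument is considered standard. Your write-up therefore fills in a detail the authors left implicit, and does so along the only natural route.
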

For the sake of simplicity, we denote the set of $\{v^u_{i,j}\}_{i,j}$ as ${v}$ and define:
\begin{equation}
\mathcal{L}(\theta, v)\ = \frac{1}{n_l}\sum_{i=1}^{n_l} \ell_i^l + \frac{\gamma}{n_u}\sum_{i=1}^{n_u}  \sum_{j=1}^K v_{i,j}^u \cdot \ell_{i,j}^u.
\end{equation}

Moreover, we denote the feasible set as:
\begin{equation}
	\mathcal{V} = \bigg\{v_{i,j}^u: v_{i,j}^u\in[0,1], ~\sum_{j=1}^K v_{i,j}^u = 1\bigg\}.
\end{equation}

Since the weight $v_{i,j}^u$ is real-valued, we now have a differentiable objective for further analysis. To study the convergence behavior of (\ref{eq:reform_prob}), we need to clarify the notion of optimality first. From a game-theoretic perspective, \eqref{eq:minimax_all} could be regarded as a zero-sum game between two players, where the max player first maximizes the loss via choosing hard augmented examples in the uncertainty set, then the min player minimizes the maximized training loss via learning the model parameters $\theta$. In this sense, one might adopt the Nash equilibrium, following the recent studies in machine learning \cite{gan1,gan2,gan3}. However, the Nash equilibrium assumes that both players act simultaneously without the knowledge of the action taking by their opponent. This is not consistent with our setting, since the max player always acts first and the min player acts subsequently based on the max player's action.
In fact, problem (\ref{eq:reform_prob}) is an extensive-form game \cite{game} in the sense that, the model must first maximize the consistent loss for the given unlabeled sample $x^u_i$, then minimize the obtained maximum value. Since the classifier $f$ is non-convex with a deep neural network formulation, $\mathcal{L}(\cdot, v)$ is non-convex. Under this circumstance, the order of the actions does matter, since in general we have:
\begin{equation}
\max_{v \in \mathcal{V}} \min_{\theta} \mathcal{L}(\theta, v) \neq  \min_{\theta} \max_{v \in \mathcal{V}} \mathcal{L}(\theta, v).
\end{equation}

For such an objective, exchanging the minimization and maximization steps will cause totally different solutions.
Instead of the Nash equilibrium, we adopt the \emph{Stackelberg equilibrium} or what is called global minimax point in \cite{equilibrium} as our optimality condition. Different from the Nash equilibrium, the notion of global minimax point clearly considers the order of the two players:
\begin{defn}[Global minimax point]
	For a minimax problem $\min_{\theta\in\Theta}\max_{v\in\mathcal{V}}\mathcal{L}(\theta,v)$, $(\theta^\star, v^\star)$ is a global minimax point, if for any $(\theta,v)$, we have:
	\begin{equation}
		\mathcal{L}(\theta^\star, v) \le \mathcal{L}(\theta^\star, v^\star) \le \max_{v'\in\mathcal{V}} \mathcal{L}(\theta,v').
	\end{equation}
\end{defn}

In this definition, the global minimax point requires $\theta^\star$ to minimize $\max_{v \in \mathcal{V}}\mathcal{L}(\cdot, v)$. Since our goal is to minimize the training error, we should focus on the min player in this game. In this sense, we expect that:
\begin{equation}
\phi(\theta) = \max_{v \in \mathcal{V}} \mathcal{L}(\theta, v)
\end{equation}
to be as small as possible. Since $\phi(\theta)$ is non-convex, we then wish the algorithm to reach a stationary point where the gradient of $\phi$ is zero. To achieve this approximately, we resort to the $\epsilon$-stationary points of $\phi$ \cite{rafique2021weakly}, where the norm of the gradient is no larger than $\epsilon$.
\begin{defn}[$\epsilon$-stationary points of $\phi$]
$\tilde{\theta}$ is said to be an $\epsilon$-stationary point of $\phi$ if it satisfies that :
\begin{equation}
	\min_{g \in \partial\phi(\tilde{\theta})} \norm{g} \le \epsilon.
\end{equation}
where $\partial\phi(\tilde{\theta})$ is the sub-differential.
\end{defn}

Since the maximization operation inside $\phi$ makes it non-smooth, we use the sub-differential to characterize a surrogate of the gradient, see \cite{boyd2004convex} for more details. The non-smoothness of $\phi$ makes it difficult to carry out convergence analysis directly. Nonetheless, following the trick in \cite{equilibrium}, we can turn to the Moreau envelope as an auxiliary tool.

\begin{defn}[Moreau envelope]
	The Moreau envelope $\phi_{1/2\lambda}$ of a function $\phi$ is defined as follows:
	\begin{equation}
		\phi_{1/2\lambda}(\theta) := \min_{\theta'} \phi(\theta') + \frac{1}{2\lambda} \cdot \norm{\theta' - \theta}^2.
	\end{equation}
	where $\lambda>0$ is the hyperparameter.
\end{defn}

From \cite{equilibrium}, we have \Lemref{lem:phiLambda_to_phi} to relate the $\epsilon$-stationary point of $\phi_{1/2\lambda}$ to that of $\phi$. This requires the definitions for Lipschitz continuity.

\begin{defn}[$L$-Lipschitz]
	A function $f$ is called $L$-Lipschitz if there exists a constant $L\ge 0$ such that for any $x$ and $y$,
	\begin{equation*}
		\norm{f(x)-f(y)}\le L\norm{x-y}.
	\end{equation*}
\end{defn}

\begin{defn}[$\kappa$-gradient Lipschitz]
	A function $h$ is said to be $\kappa$-gradient Lipschitz if there exists a constant $\kappa>0$ such that for any $x,y\in dom(h)$,
	\begin{equation*}
		\norm{\nabla h(x) - \nabla h(y)}_2 \le \kappa\norm{x-y}_2.
	\end{equation*}
\end{defn}

\begin{lem}[\cite{equilibrium}]\label{lem:phiLambda_to_phi}
If $\mathcal{L}$ is $\kappa$-gradient Lipschitz, then for all $\lambda < 1/\kappa$, $\theta$ is an $\epsilon$-stationary point of $\phi_{1/2\lambda}$ implies that $\theta$ is an $\epsilon$-stationary point of $\phi$.
\end{lem}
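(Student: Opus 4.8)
The plan is to recognize this as the standard correspondence between near-stationarity of a weakly convex function and near-stationarity of its Moreau envelope, which underlies the analysis of proximally smoothed nonsmooth optimization. The key objects are $\phi(\theta) = \max_{v\in\mathcal{V}}\mathcal{L}(\theta, v)$ and its prox point $\hat\theta := \argmin_{\theta'}\big(\phi(\theta') + \frac{1}{2\lambda}\norm{\theta'-\theta}^2\big)$. I would establish three facts in order: (i) $\phi$ is $\kappa$-weakly convex; (ii) for $\lambda<1/\kappa$ the envelope is well defined and continuously differentiable with $\nabla\phi_{1/2\lambda}(\theta) = \frac{1}{\lambda}(\theta-\hat\theta)$; and (iii) this gradient is itself a subgradient of $\phi$ at $\hat\theta$, so controlling $\norm{\nabla\phi_{1/2\lambda}(\theta)}$ controls $\min_{g\in\partial\phi(\hat\theta)}\norm{g}$.

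First I would show weak convexity. Because each $\mathcal{L}(\cdot, v)$ inherits a $\kappa$-Lipschitz gradient from $\mathcal{L}$ being $\kappa$-gradient Lipschitz, the function $\mathcal{L}(\theta, v) + \frac{\kappa}{2}\norm{\theta}^2$ is convex in $\theta$: indeed $\inprod{\nabla_\theta\mathcal{L}(\theta_1,v) - \nabla_\theta\mathcal{L}(\theta_2,v), \theta_1-\theta_2} \ge -\kappa\norm{\theta_1-\theta_2}^2$ by Cauchy--Schwarz, which exactly cancels against the added quadratic. Taking the pointwise maximum over $v\in\mathcal{V}$ preserves convexity, so $\phi(\theta) + \frac{\kappa}{2}\norm{\theta}^2$ is convex, \ie $\phi$ is $\kappa$-weakly convex. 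Consequently, for any $\lambda<1/\kappa$ the inner objective $\theta'\mapsto \phi(\theta') + \frac{1}{2\lambda}\norm{\theta'-\theta}^2$ is $(\frac{1}{\lambda}-\kappa)$-strongly convex, hence admits the unique minimizer $\hat\theta$, and the standard Moreau-envelope calculus for weakly convex functions yields that $\phi_{1/2\lambda}$ is $C^1$ with $\nabla\phi_{1/2\lambda}(\theta) = \frac{1}{\lambda}(\theta-\hat\theta)$.

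Next I would invoke the first-order optimality condition for the strongly convex prox subproblem, namely $0 \in \partial\phi(\hat\theta) + \frac{1}{\lambda}(\hat\theta-\theta)$, which rearranges to $\frac{1}{\lambda}(\theta-\hat\theta)\in\partial\phi(\hat\theta)$. Combining with the gradient formula gives $\nabla\phi_{1/2\lambda}(\theta)\in\partial\phi(\hat\theta)$. Therefore, if $\theta$ is an $\epsilon$-stationary point of $\phi_{1/2\lambda}$, \ie $\norm{\nabla\phi_{1/2\lambda}(\theta)}\le\epsilon$, then simultaneously $\min_{g\in\partial\phi(\hat\theta)}\norm{g}\le \norm{\nabla\phi_{1/2\lambda}(\theta)}\le\epsilon$ and $\norm{\hat\theta-\theta} = \lambda\norm{\nabla\phi_{1/2\lambda}(\theta)}\le\lambda\epsilon$; the prox point $\hat\theta$ is thus an $\epsilon$-stationary point of $\phi$ lying within $\lambda\epsilon$ of $\theta$, which is the conclusion sought.

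The main obstacle is conceptual rather than computational: the envelope gradient is a subgradient of $\phi$ at the prox point $\hat\theta$, not at $\theta$ itself, so the $\epsilon$-stationarity is really transferred to the nearby point $\hat\theta$ (with $\norm{\hat\theta-\theta}\le\lambda\epsilon$) in the sense customary in the weakly convex literature, rather than to $\theta$ verbatim. I would therefore state the conclusion carefully through $\hat\theta$ and justify the subdifferential optimality condition using the appropriate (Clarke/limiting) subdifferential calculus for the nonsmooth $\phi$, since $\phi$ is a pointwise maximum and its subdifferential requires more care than a smooth gradient. The remaining ingredients --- the weak-convexity inequality and the differentiation of the envelope --- are routine once weak convexity is in hand.
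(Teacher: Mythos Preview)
The paper does not actually prove this lemma; it is stated as a cited result from \cite{equilibrium} and used as a black box to justify focusing the convergence analysis on the Moreau envelope. Your proposal supplies the standard argument behind that citation---weak convexity of $\phi$ from the $\kappa$-gradient Lipschitz property of $\mathcal{L}$, strong convexity of the prox subproblem when $\lambda<1/\kappa$, the envelope gradient formula $\nabla\phi_{1/2\lambda}(\theta)=\tfrac{1}{\lambda}(\theta-\hat\theta)$, and the optimality condition $\tfrac{1}{\lambda}(\theta-\hat\theta)\in\partial\phi(\hat\theta)$---and it is correct.

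You also put your finger on a genuine imprecision in the lemma as stated: the subgradient bound lands at the prox point $\hat\theta$, not at $\theta$ itself, so what one actually obtains is that $\hat\theta$ is an $\epsilon$-stationary point of $\phi$ with $\norm{\hat\theta-\theta}\le\lambda\epsilon$. This is exactly the ``near-stationarity'' notion used in the weakly convex optimization literature, and your careful phrasing through $\hat\theta$ is the right way to state the conclusion. The paper's informal wording elides this distinction, but since it only uses the lemma to motivate bounding $\norm{\nabla\phi_{1/2\lambda}}$ in Theorem~\ref{thm:moreau_bound}, the distinction does not affect the downstream argument.
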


Thanks to the lemma above, we only need to prove that the proposed algorithm could control $\norm{\nabla \phi_{1/2\lambda}}$. Accordingly, we have the following theorem.

\begin{thm}
	\label{thm:moreau_bound}
	Suppose $\ell^l$ is $\frac{L}{2}$-Lipschitz, $\ell^u$ is $\frac{L}{2\gamma}$-Lipschitz, and then $\mathcal{L}$ is $L$-Lipschitz and $\kappa$-gradient Lipschitz, the parameters $\theta$ are chosen from a compact set $\Theta$\footnote[1]{This could be realized by limiting the norm of weights (say, weight decay).}.
	Define $\phi(\cdot) := \max_v \mathcal{L}(\cdot,v)$ and $\phil$ is the Moreau envelope of $\phi$. Suppose $\max_{\theta \in \Theta} \big|\phi(\theta) - \min_{\theta\in \Theta} \phi(\theta)\big| \le B.$ Then with probability at least $1-\delta$, the following inequality holds for $\theta_t$s obtained from \Algref{alg}:
	\begin{gather}
	 \frac{1}{T+1} \sum_{t=0}^T \norm{\nabla\phil(\xt)}^2 \\
	 ~~\le 4L ~ \sqrt{\frac{\kappa\big(\phil(\theta_0) - \min_\theta \phi(\theta)\big)}{T}} + 8L ~ \sqrt{\frac{2B\kappa}{T+1}\log\frac{1}{\delta}}.
	\end{gather}
\end{thm}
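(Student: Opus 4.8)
The plan is to analyze \Algref{alg} as a stochastic subgradient method on the nonsmooth objective $\phi(\theta) = \max_{v\in\mathcal{V}}\mathcal{L}(\theta,v)$ and to control its convergence through the Moreau envelope $\phil$, following the weakly-convex machinery of the cited works. The first step is to record the structural facts. Since each $\mathcal{L}(\cdot,v)$ is $\kappa$-gradient Lipschitz it is $\kappa$-weakly convex, and a pointwise maximum of $\kappa$-weakly convex functions is again $\kappa$-weakly convex, so $\phi$ is $\kappa$-weakly convex. Consequently the proximal map $\hatxt = \argmin_{\theta'}\{\phi(\theta') + \frac{1}{2\kappa}\norm{\theta'-\xt}^2\}$ is single-valued, $\phil$ is continuously differentiable, and $\nabla\phil(\xt) = \frac{1}{\kappa}(\xt - \hatxt)$; in particular $\norm{\nabla\phil(\xt)}^2 = \frac{1}{\kappa^2}\norm{\xt-\hatxt}^2$, so it suffices to bound the averaged squared distance $\norm{\xt-\hatxt}^2$.

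Next I would derive the one-step recursion. Writing the MaxMatch update as $\theta_{t+1} = \xt - \eta g_t$, with $g_t$ the mini-batch gradient of $\mathcal{L}(\cdot,v_t)$ at the worst-case weights $v_t$ selected in \texttt{line 6}, I bound $\phil(\theta_{t+1}) \le \phi(\hatxt) + \frac{1}{2\kappa}\norm{\hatxt - \theta_{t+1}}^2$ using suboptimality of $\hatxt$ for $\theta_{t+1}$, and expand the square. The cross term splits as $\langle g_t,\xt-\hatxt\rangle = \langle \gradGxyt, \xt-\hatxt\rangle - \inprodG{t}$. For the deterministic piece, the weak-convexity inequality for $\mathcal{L}(\cdot,v_t)$ together with $\mathcal{L}(\xt,v_t)=\phi(\xt)$ and $\mathcal{L}(\hatxt,v_t)\le\phi(\hatxt)$ gives $\langle\gradGxyt,\xt-\hatxt\rangle \ge \phi(\xt)-\phi(\hatxt)-\frac{\kappa}{2}\norm{\xt-\hatxt}^2$. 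Combining this with the defining identity $\phil(\xt)=\phi(\hatxt)+\frac{1}{2\kappa}\norm{\xt-\hatxt}^2$, the optimality consequence $\phi(\xt)-\phi(\hatxt)\ge\frac{1}{2\kappa}\norm{\xt-\hatxt}^2$, and the $L$-Lipschitz bound $\norm{g_t}\le L$, yields a descent inequality of the form $\phil(\theta_{t+1}) \le \phil(\xt) - c\,\eta\,\norm{\nabla\phil(\xt)}^2 + \frac{\eta^2 L^2}{2\kappa} + \frac{\eta}{\kappa}\inprodG{t}$ for an explicit constant $c$.

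I would then telescope over $t=0,\dots,T$, lower-bound $\phil(\theta_{T+1})$ by $\min_\theta\phi(\theta)$, and rearrange to isolate $\frac{1}{T+1}\sum_t \norm{\nabla\phil(\xt)}^2$, with the noise collecting into $\frac{1}{\kappa}\sum_t \inprodG{t}$. Choosing $\eta$ to balance the optimization term against the variance term $\frac{\eta L^2}{2c\kappa}$ produces the $O\big(L\sqrt{\kappa(\phil(\theta_0)-\min_\theta\phi)/T}\big)$ rate. To upgrade the in-expectation guarantee to the stated high-probability bound, I observe that $\{\inprodG{t}\}$ is a martingale-difference sequence: conditioned on the past, $g_t$ is an unbiased estimate of $\gradGxyt$, so each term has zero conditional mean. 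Each difference is bounded, because the range assumption $\phi(\xt)-\phi(\hatxt)\le B$ forces $\norm{\xt-\hatxt}\le\sqrt{2\kappa B}$ while $L$-Lipschitzness forces $\norm{g_t-\gradGxyt}\le 2L$, so $\abs{\inprodG{t}}\le 2L\sqrt{2\kappa B}$; an Azuma--Hoeffding bound then controls $\sum_t\inprodG{t}$ with probability $1-\delta$, contributing the $O\big(L\sqrt{B\kappa\log(1/\delta)/(T+1)}\big)$ term.

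I expect the main obstacle to be this last, high-probability step rather than the in-expectation descent, which is routine once weak convexity is in place. The delicate points are verifying the martingale structure under the exact sampling scheme of \Algref{alg} (the worst-case weights $v_t$ and the gradient $g_t$ are computed from the same mini-batch, so conditional unbiasedness must be argued with respect to the correct filtration), and obtaining a uniform almost-sure bound on the martingale differences through the range constant $B$ — it is precisely this bound that injects the factor $\sqrt{B\kappa}$ into the final rate.
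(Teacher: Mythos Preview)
Your plan is essentially the same as the paper's proof: derive a one-step descent inequality for the Moreau envelope via the prox point $\hatxt$, telescope, convert $\phi(\xt)-\phi(\hatxt)-\tfrac{\kappa}{2}\norm{\hatxt-\xt}^2$ into $\norm{\nabla\phil(\xt)}^2$ using strong convexity of the prox objective, control the noise $\sum_t\inprodG{t}$ by Azuma--Hoeffding, and finally optimize over $\eta$. The paper does exactly this, and your identification of the martingale/high-probability step as the only nontrivial point is accurate.

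One slip to fix: the paper's $\phil=\phi_{1/(2\kappa)}$ corresponds to the prox problem $\hatxt=\argmin_\theta\{\phi(\theta)+\kappa\norm{\theta-\xt}^2\}$ (coefficient $\kappa$, not $\tfrac{1}{2\kappa}$), so that $\nabla\phil(\xt)=2\kappa(\xt-\hatxt)$ and $\norm{\hatxt-\xt}\le\sqrt{B/\kappa}$. Your coefficient $\tfrac{1}{2\kappa}$ would make the inner objective only $(\tfrac{1}{\kappa}-\kappa)$-strongly convex, which fails for $\kappa\ge 1$ and in any case yields constants that do not match the stated bound; with the correct coefficient all your inequalities and the Azuma bound go through verbatim and reproduce the theorem.
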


\begin{proof}[Proof Sketch]
Since $\mathcal{L}$ is $\kappa$-gradient Lipschitz and $v_t$ is a maximizer for $\xt$, we have that any $\xt$ from \Algref{alg} and $\tilde{\theta}$ satisfy
\begin{equation}
	\begin{split}
		\inprod{\gradGxyt, \tilde{\theta} - \xt} \le \phi(\tilde{\theta}) - \phi(\xt) + \frac{\kappa}{2} \cdot \norm{\tilde{\theta} - \xt}^2.
	\end{split}
\end{equation}

Let
\begin{equation}
	\hatxt := \arg\min_\theta \phi(\theta) + \kappa \cdot \norm{\theta - \xt}^2.
\end{equation}

We have:
\begin{equation}
	\begin{split}
		&\phil(\theta_{t+1}) \le ~ \phil(\xt) + 2\eta\kappa \cdot \inprod{g_t - \gradGxyt, \hatxt - \xt} \\
		& ~~~~~~~~~~ + 2\eta\kappa \cdot \left( \phi(\hatxt) - \phi(\xt) + \frac{\kappa}{2} \cdot \norm{\hatxt - \xt}^2 \right) + \eta^2\kappa L^2,
	\end{split}
\end{equation}
where $g_t$ denotes the gradient estimated in the $t$-th step (obtained by \texttt{line 7} in Algorithm 1).

Taking a telescopic sum over $t$ and rearranging this inequality, we obtain
\begin{equation}\label{eq:telescopic_res}
	\begin{split}
		\frac{1}{T+1} \sum_{t=0}^T &\phi(\xt) - \phi(\hatxt) - \frac{\kappa}{2} \cdot \norm{\hatxt - \xt}^2 \\
		 \le &\frac{\phil(\theta_0) - \min_\theta \phi(\theta)}{2\eta\kappa T} + \frac{\eta L^2}{2} \\
		& + \frac{1}{T+1} \cdot \sum_{t=0}^T \inprod{g_t - \gradGxyt, \hatxt - \xt}.
	\end{split}
\end{equation}

Recall that $\mathcal{L}$ is $\kappa$-gradient Lipschitz. According to Fact~\ref{fact:weakcvx}, $\phi(\theta)$ is $\kappa$-weakly convex and $\phi(\theta) + \frac{\kappa}{2}\norm{\theta}^2$ is convex. Then, it is easy to see that $\Phi(\theta) = \phi(\theta) + \kappa \cdot \norm{\xt - \theta}^2$ is $\kappa$-strongly convex. Thus, we have
\begin{equation}\label{eq:moreau_ineq}
	\begin{split}
		\phi(\xt) - \phi(\hatxt) - \frac{\kappa}{2} \cdot \norm{\hatxt - \xt}^2 \ge ~ \frac{1}{4\kappa} \norm{\nabla\phil(\xt)}^2.
	\end{split}
\end{equation}

Since $\max_{\theta \in \Theta}\abs{\phi(\theta) - \min_{\theta\in \Theta} \phi(\theta)} \le B$, we have
\begin{equation}
	\begin{split}
		\phi(\hatxt) + \kappa \cdot \norm{\hatxt - \xt}^2 \le ~ \sqrt{\frac{\phi(\xt) - \min_\theta \phi(\theta)}{\kappa}} \le \sqrt{\frac{B}{\kappa}}.
	\end{split}
\end{equation}

Notice that $g_t$ is an unbiased estimation of $\gradGxyt$, namely we have $\E[g_t]=\gradGxyt$. Let $$G_t := \inprodG{t},$$ then we have
\begin{equation}
	\begin{split}
		\E\left[ G_t | G_{t-1}, G_{t-2}, \ldots, G_{1} \right] = 0, \\
		\E\left[ \sum_{t=0}^T G_t \right] = 0.
	\end{split}
\end{equation}
Namely, $\left\{ G_t \right\}_t$ is a martingale difference sequence. Then using Cauchy-Schwartz's inequality, we have
\begin{equation}
	\begin{split}
		\abs{G_t} \le \norm{g_t - \gradGxyt} \cdot \norm{\hatxt - \xt} \le 2L \sqrt{\frac{B}{\kappa}}.
	\end{split}
\end{equation}

Via Azuma's inequality, with probability at least $1-\delta$ we have
\begin{equation}\label{eq:azuma}
	\sum_{t=0}^T G_t \le 2L \cdot \sqrt{\frac{B}{\kappa} \cdot 2(T+1)\log\frac{1}{\delta}}.
\end{equation}

Substituting (\ref{eq:moreau_ineq}) and (\ref{eq:azuma}) into (\ref{eq:telescopic_res}), we have for any fixed $\eta>0$ the following inequality holds with probability at least $1-\delta$:
\begin{equation}\label{eq:ineq-2}
	\begin{split}
		\frac{1}{T+1} \sum_{t=0}^T \frac{1}{4\kappa} \norm{\nabla\phil(\xt)}^2 \le & \frac{\phil(\theta_0) - \min_\theta \phi(\theta)}{2\eta\kappa T} + \frac{\eta L^2}{2} \\
		& + 2L \cdot \sqrt{\frac{B}{\kappa} \cdot \frac{2}{T+1}\log\frac{1}{\delta}}.
	\end{split}
\end{equation}

To reach the tightest bound in this case, we can find $\eta^\star$ such that
\begin{equation}
	\begin{split}
		\eta^\star &= \argmin_{\eta>0} \frac{\phil(\theta_0) - \min_\theta \phi(\theta)}{2\eta\kappa T} + \frac{\eta L^2}{2} \\
		&= \sqrt{\frac{\phil(\theta_0) - \min_\theta \phi(\theta)}{\kappa L^2 T}}.
	\end{split}
\end{equation}

Plugging $\eta^\star$ into (\ref{eq:ineq-2}) proves the results. Due to space limitations, we provide the complete proof in \Apdref{sec:proof-moreau-bound}.
\end{proof}

\begin{rem}
	From \Thmref{thm:moreau_bound}, we see that under mild smoothness assumptions, the average norm of the gradient of the Moreau envelope decreases at a rate of $\sfrac{1}{\sqrt{T}}$. This implies the average norm of the gradient of $\phi$ will converge to $0$.
\end{rem}

\section{Instantiation of the Proposed Framework}\label{sec:train}
So far, we have introduced a general workflow of our proposed framework. Note that this worst-case consistency regularization framework could be easily applied to most existing consistency regularization-based models to enhance their performance, since what we need is only to extend their consistency loss into the worst-case form and then follow \Algref{alg} for training. Here we choose the state-of-the-art method FixMatch \cite{FixMatch} as a base model to instantiate our framework. According to FixMatch, an extra weak data augmentation (\eg, random flipping followed by random cropping) is carried out on both labeled and unlabeled data to mitigate overfitting. Besides, pseudo-labeling technique is adopted. In other words, the predicted labels of weak augmented unlabeled data, instead of the score distribution, are used as training targets. Following the convention, only samples with the largest predicted probability higher than a predefined threshold $\beta$ are considered in training. Then the loss of each labeled and unlabeled data becomes:
\begin{equation}\label{eq:implement}
	\begin{split}
	\ell^l_i &= \lce\big(f(\mathcal{A}_w(x_i^l)), y_i\big), \\
	\ell^u_i &= \mathbb{I}\Big(\max_k\tilde{s}_{i,k}^u > \beta\Big) \cdot \max_{j\in[K]} \lce\Big(f\big(x'_{i,j}\big), y(\tilde{s}_{i}^u)\Big),
	\end{split}
\end{equation}
where $\mathcal{A}_w(\cdot)$ denotes the weak augmentation function, $x'_{i,j} \in \Bse(\mathcal{A}_w(x_i^u))$ is the augmented variant for unlabeled data, $\tilde{s}_i^u = \tilde{f}\big(\mathcal{A}_w(x_i^u)\big)$ is the predicted score distribution with fixed model parameters.

\noindent\textbf{Time complexity analysis.}\quad For the sake of simplicity, we ignore the data loading and model initialization steps before training. At each iteration, the training process mainly consists of four steps: (1) data transformation, (2) feed forward, (3) back-propagation, and (4) optimization. Firstly, we denote the time costs of these steps in FixMatch as $t_{\text{data}}, t_{\text{ff}}, t_{\text{bp}}, t_{\text{opt}}$ respectively, and the number of total iterations as $T$. Then for MaxMatch with the same network architecture, due to involving $K$ random strong augmentations for each unlabeled data, the times spent on data transformation and feed forward are roughly increased by a factor of $K$. Nonetheless, the back-propagation and optimization processes of MaxMatch take basically the same time as FixMatch, since their numbers of parameters are identical. In summary, the total training time is at most $T(K(t_{\text{data}} + t_{\text{ff}}) + t_{\text{bp}} + t_{\text{opt}})$ for MaxMatch, and $T(t_{\text{data}} + t_{\text{ff}} + t_{\text{bp}} + t_{\text{opt}})$ for FixMatch. On the other hand, during the test phase, the time complexity of MaxMatch is exactly the same as FixMatch.

\section{Experiments}\label{sec:exp}

\begin{figure*}[t]
	\centering
	\subfloat[\#labels=40]{\label{fig:cifar10-40}\includegraphics[width=0.32\textwidth]{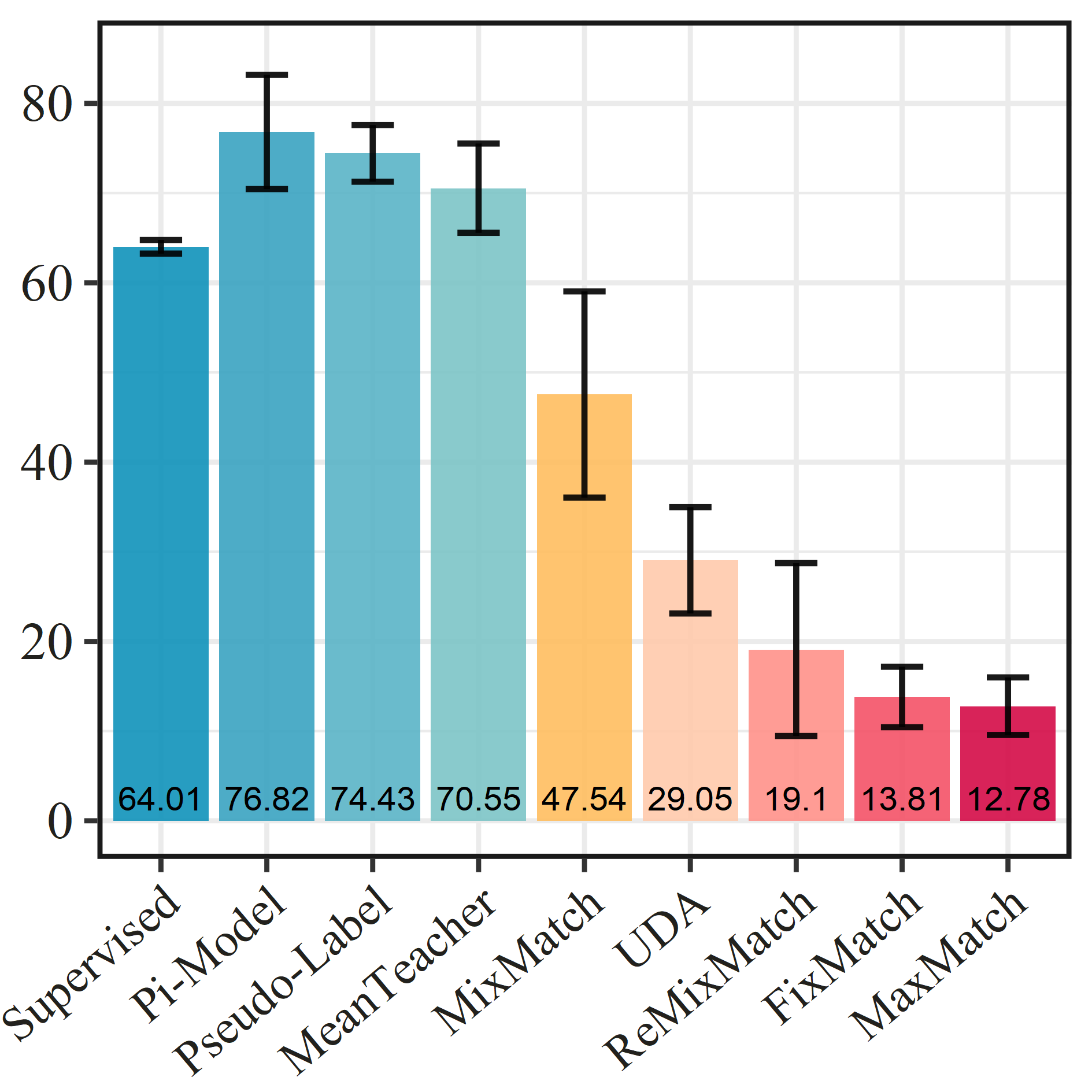}}~
	\subfloat[\#labels=250]{\label{fig:cifar10-250}\includegraphics[width=0.32\textwidth]{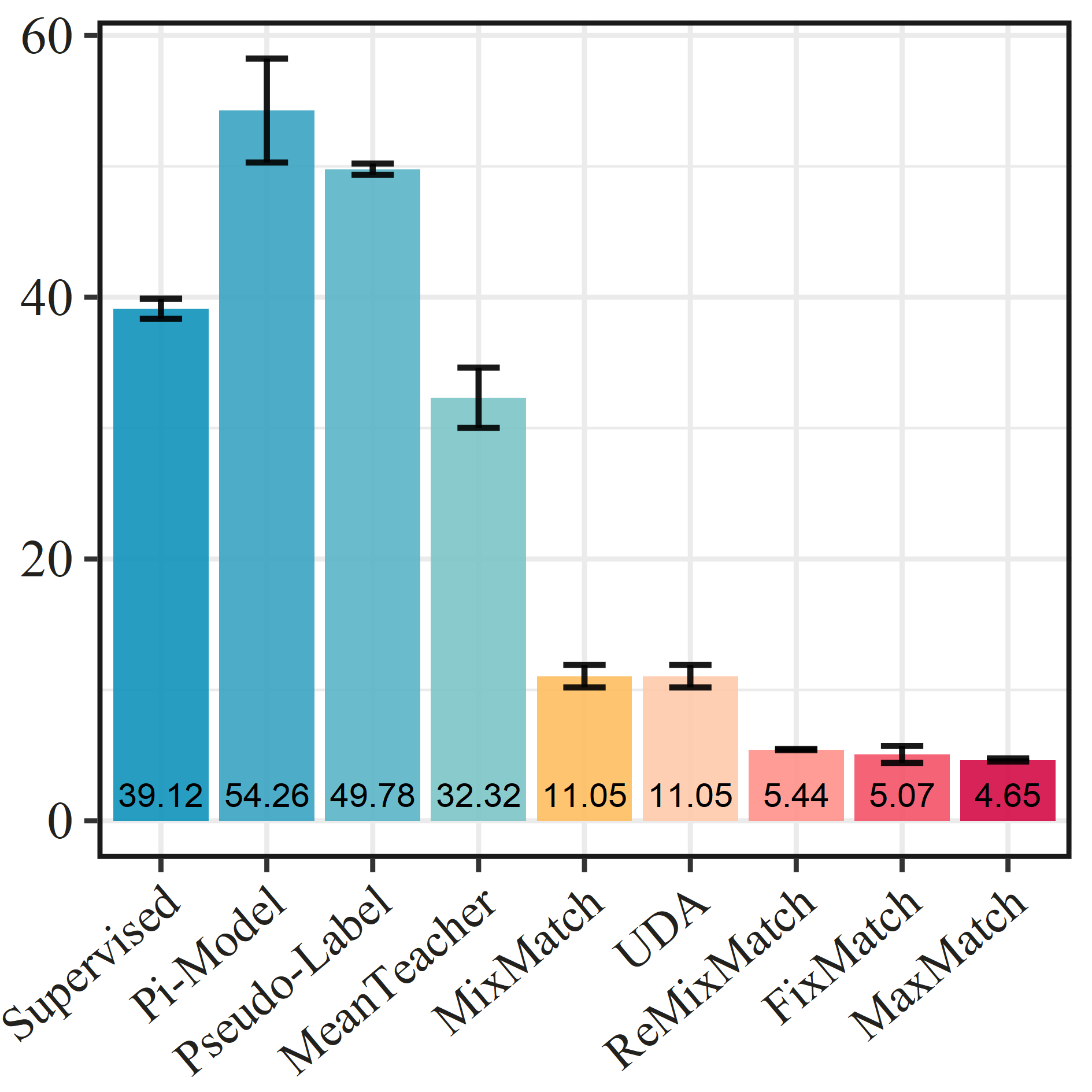}}~
	\subfloat[\#labels=4000]{\label{fig:cifar10-4000}\includegraphics[width=0.32\textwidth]{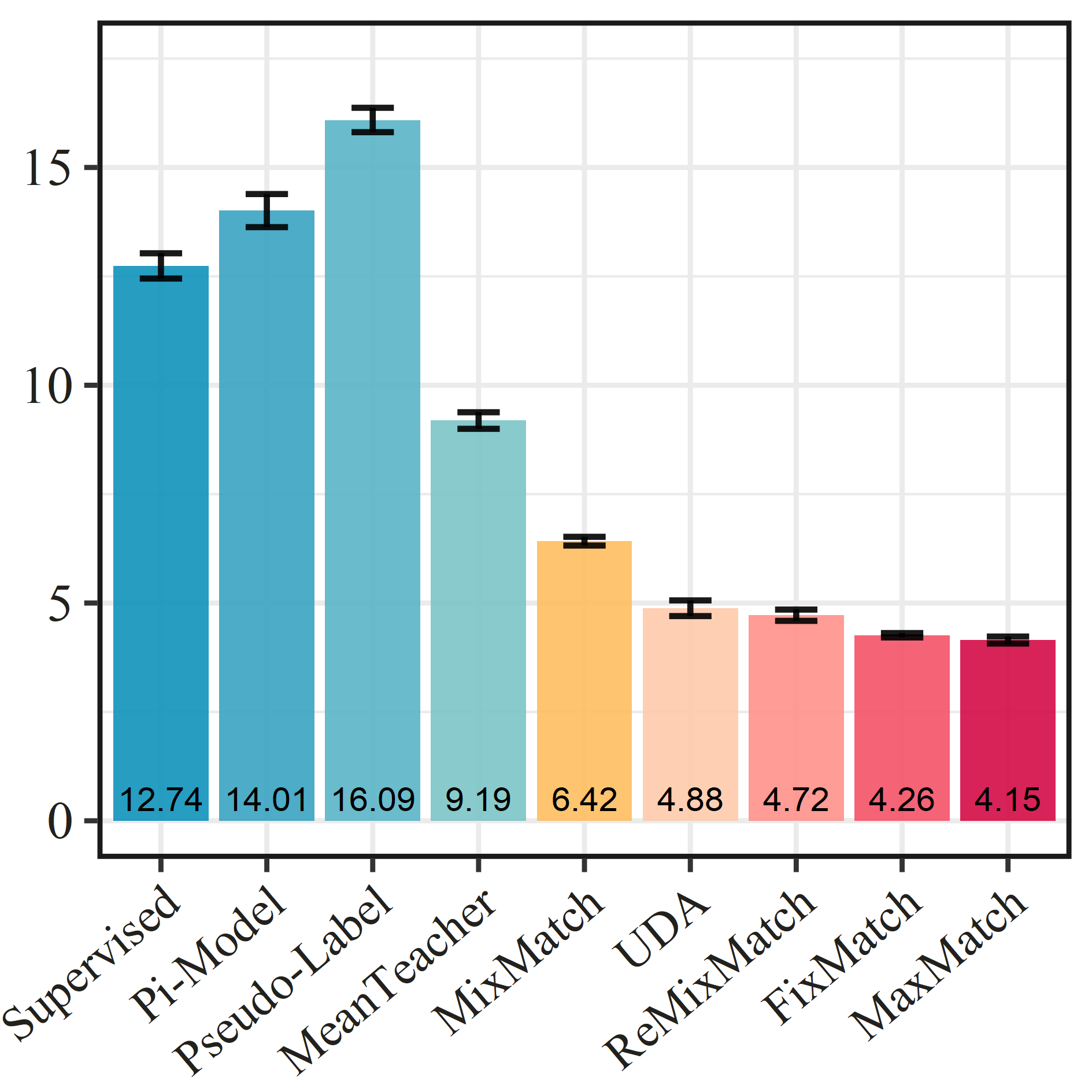}}
	\caption{Error rate (\%) over 5 different folds with varying labeled set size on CIFAR-10.}
	\label{fig:res_vary_sup_cifar10}
\end{figure*}

\begin{figure*}[t]
	\centering
	\subfloat[\#labels=400]{\label{fig:cifar10-400}\includegraphics[width=0.32\textwidth]{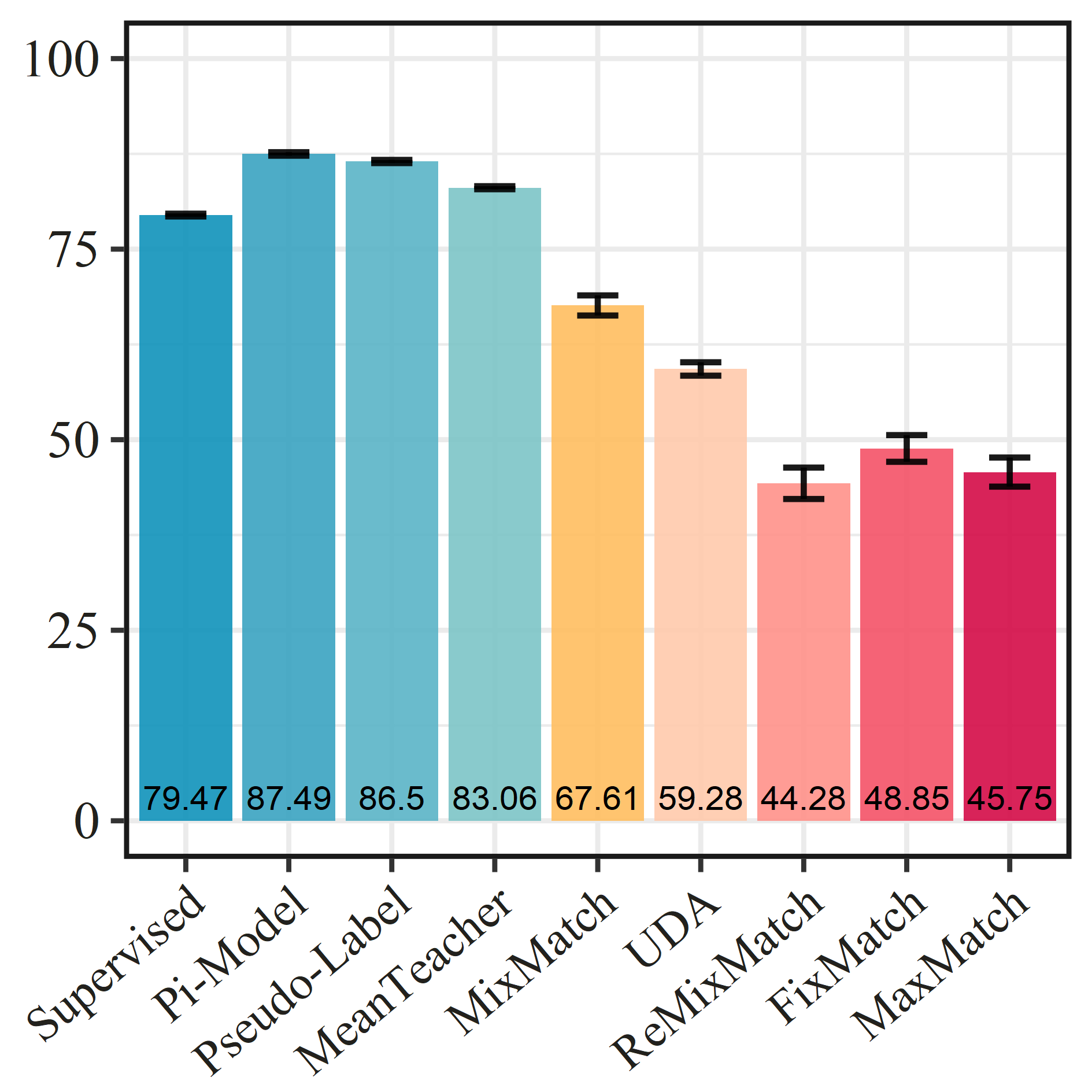}}~
	\subfloat[\#labels=2500]{\label{fig:cifar100-2500}\includegraphics[width=0.32\textwidth]{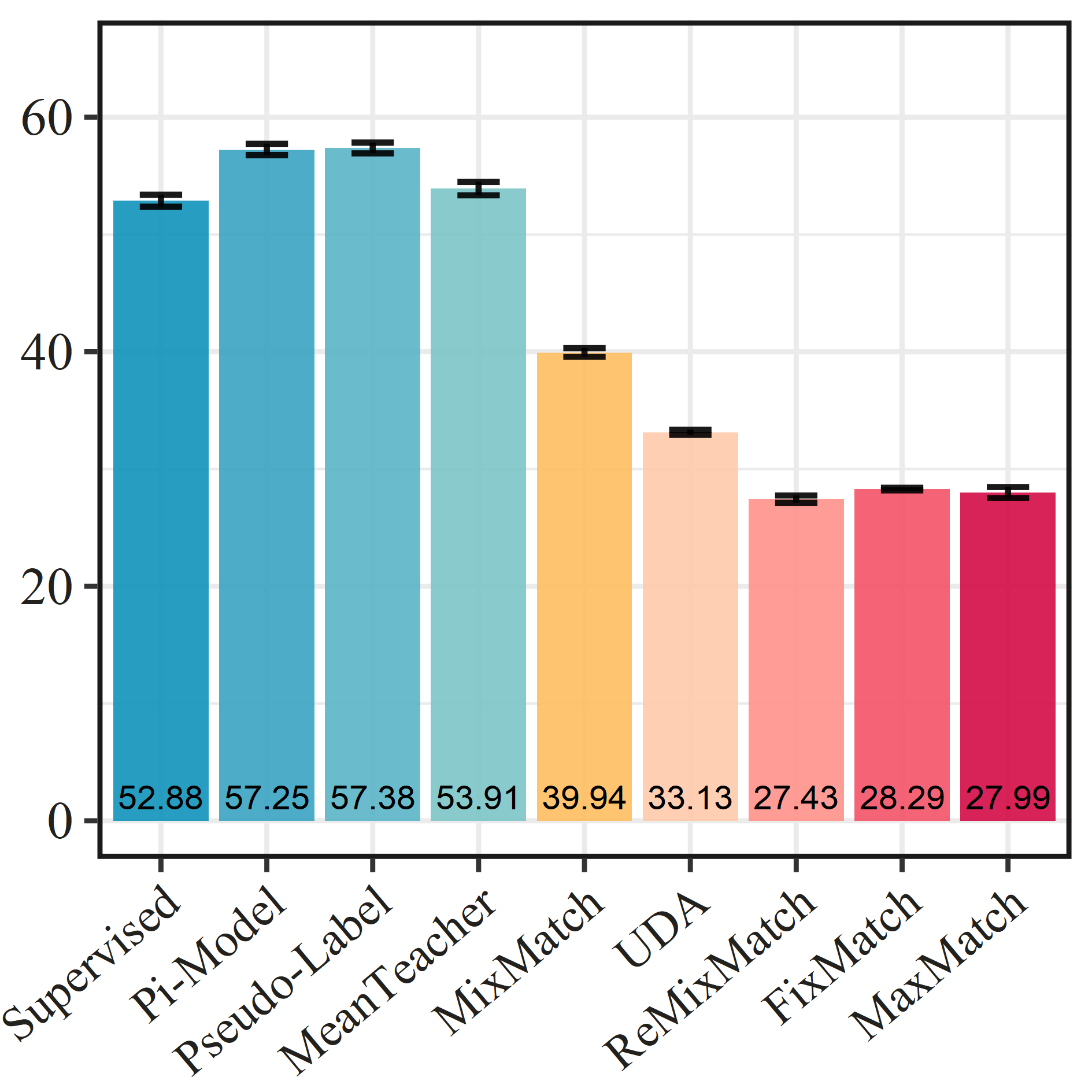}}~
	\subfloat[\#labels=10000]{\label{fig:cifar100-10000}\includegraphics[width=0.32\textwidth]{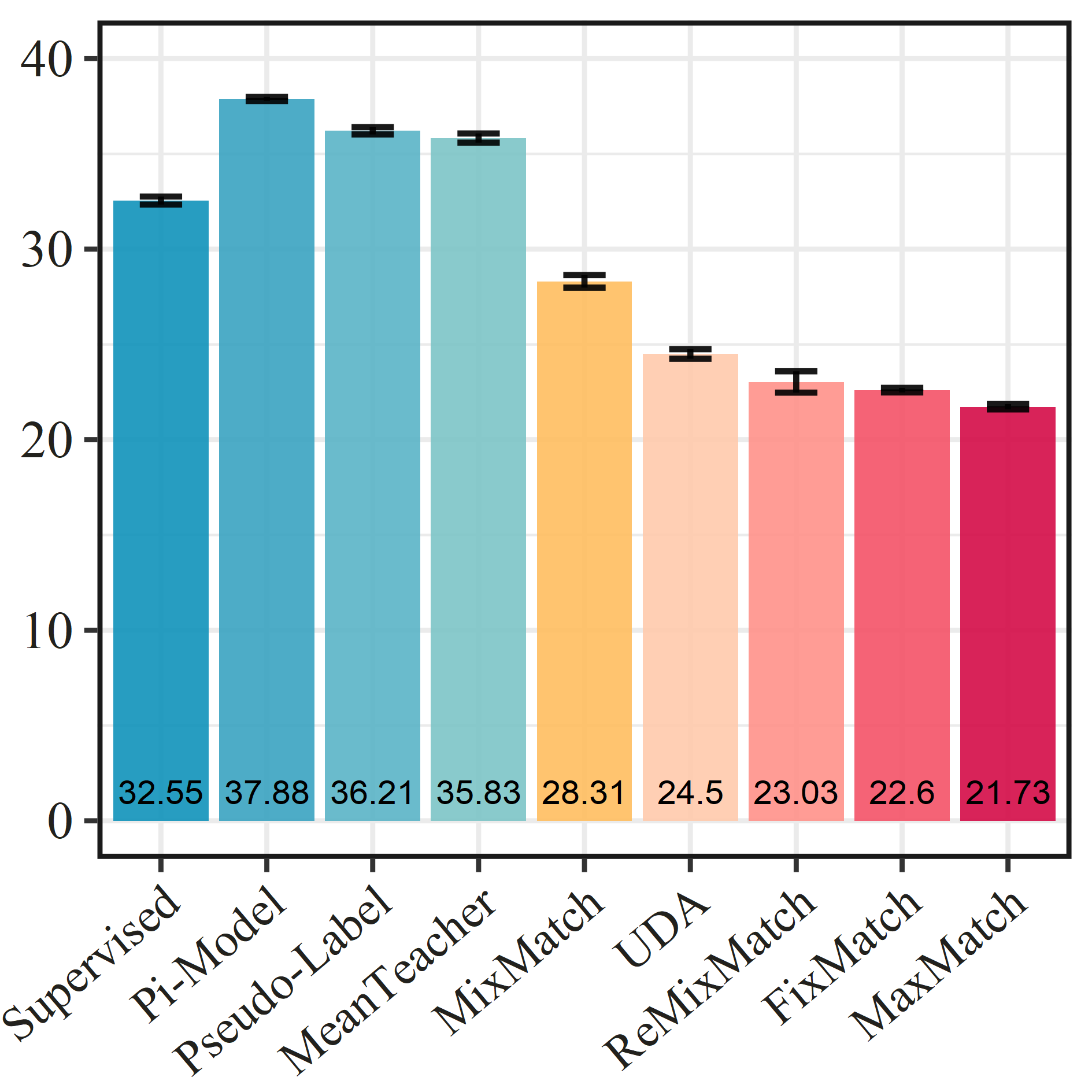}}
	\caption{Error rate (\%) over 5 different folds with varying labeled set size on CIFAR-100.}
	\label{fig:res_vary_sup_cifar100}
\end{figure*}

\subsection{CIFAR-10, CIFAR-100, SVHN and STL-10}
\noindent{\textbf{Datasets and Evaluation Protocols.}}\quad Following the standard SSL evaluation protocols~\cite{FixMatch, MixMatch, UDA}, we evaluate our method on four popular benchmark image classification datasets: CIFAR-10, CIFAR-100, SVHN and STL-10.
\begin{itemize}
  \item CIFAR-10 \cite{cifar} contains 10 classes of real-world objects (such as airplane, automobile and bird) with 6000 images per class. In the original dataset, there are 50,000 training images and 10,000 test ones, where the test set has exactly 1000 samples for each class. All the images are colored with a size of $32\times32$.
  \item Simliar to CIFAR-10, CIFAR-100 \cite{cifar} is with 100 classes with 600 images for each class. It includes 500 training images and 100 test images per class. More categories and fewer images per category make it more challenging than CIFAR-10.
  \item SVHN \cite{svhn} consists of more than 600,000 natural scene digit and number images with a size of $32\times32$ collected from Google Street View, including 73,257 digits for training, 26,032 for testing, and 531,131 as extra training data.
  \item STL-10 \cite{stl10} is composed of 5000 labeled training images and 8000 test ones from 10 classes, together with 100,000 unlabeled images. Different from other three datasets, the size of images in STL-10 is $96\times 96$, which is of higher resolution than previous datasets. Moreover, this dataset is more realistic since the unlabeled samples are drawn from a similar but different distribution from the labeled data.
\end{itemize}

\noindent Our evaluation is carried out with $4/25/400$ labeled samples per class on CIFAR-10, $4/25/100$ labeled samples per class on CIFAR-100 and SVHN, $100$ labeled samples per class on STL-10 following the setting of \cite{FixMatch}. For simplicity, we use `D@$N$' to denote the setting using $N$ labels on `D' dataset. Besides the selected labeled images, all the images in original training set are used as unlabeled samples for SSL training. And the original test set is used for performance evaluation. The overall classification error rate is adopted as the evaluation metric. Specifically, the mean and variance of error rates for $5$ different ``folds'' of labeled data are provided as \cite{FixMatch} reported.

\begin{figure*}[t]
	\centering
	\subfloat[\#labels=40]{\label{fig:svhn40}\includegraphics[width=0.32\textwidth]{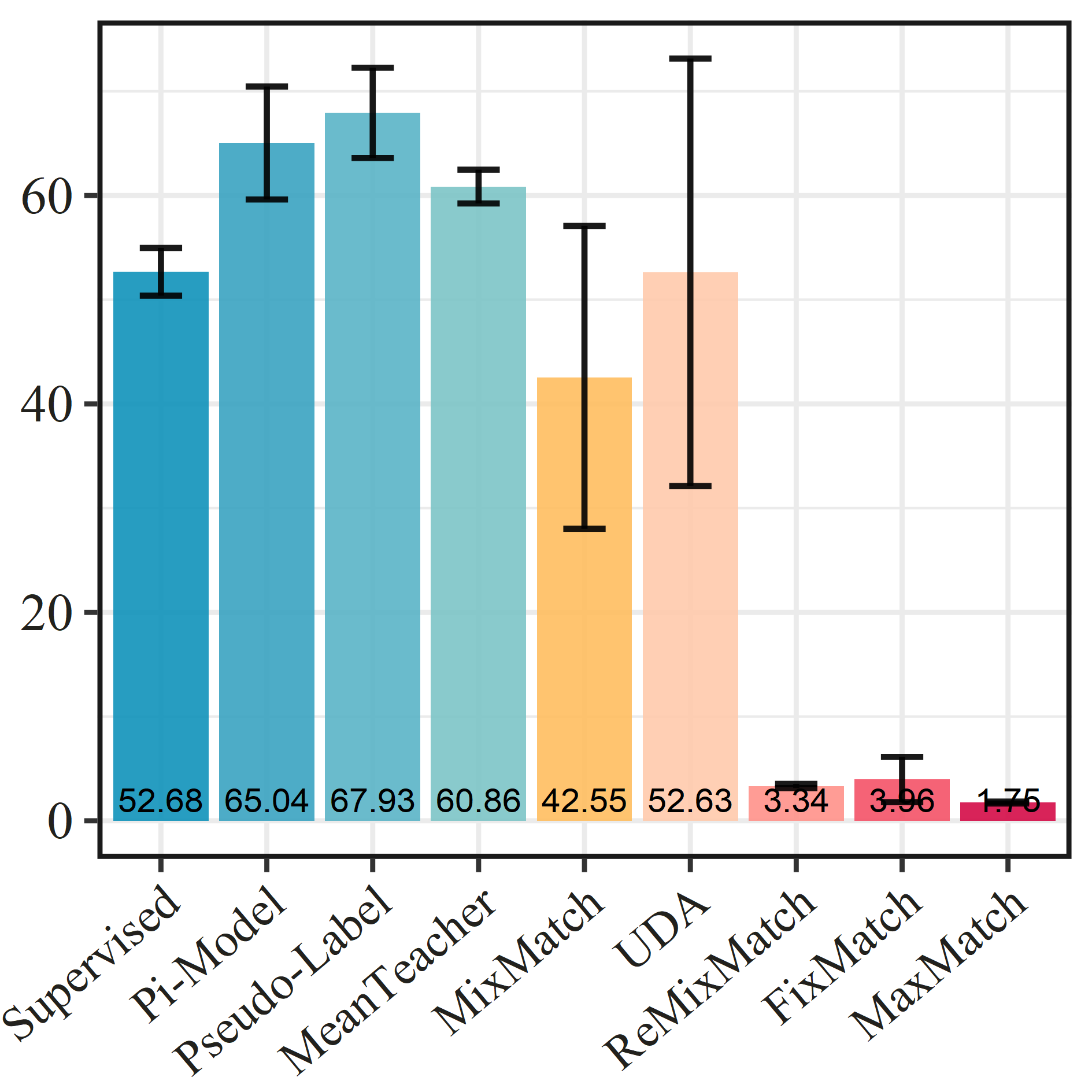}}~
	\subfloat[\#labels=250]{\label{fig:svhn250}\includegraphics[width=0.32\textwidth]{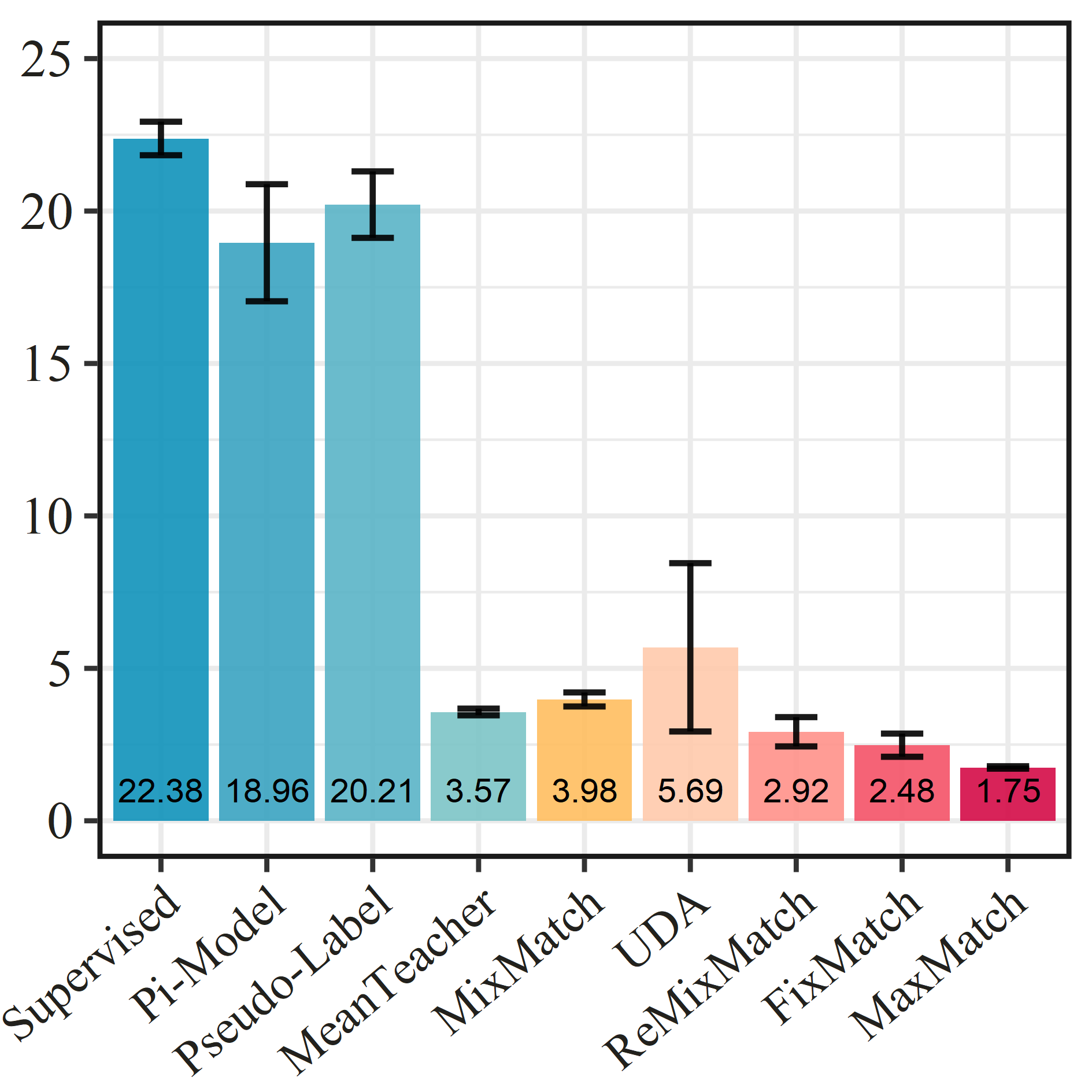}}~
	\subfloat[\#labels=1000]{\label{fig:svhn1000}\includegraphics[width=0.32\textwidth]{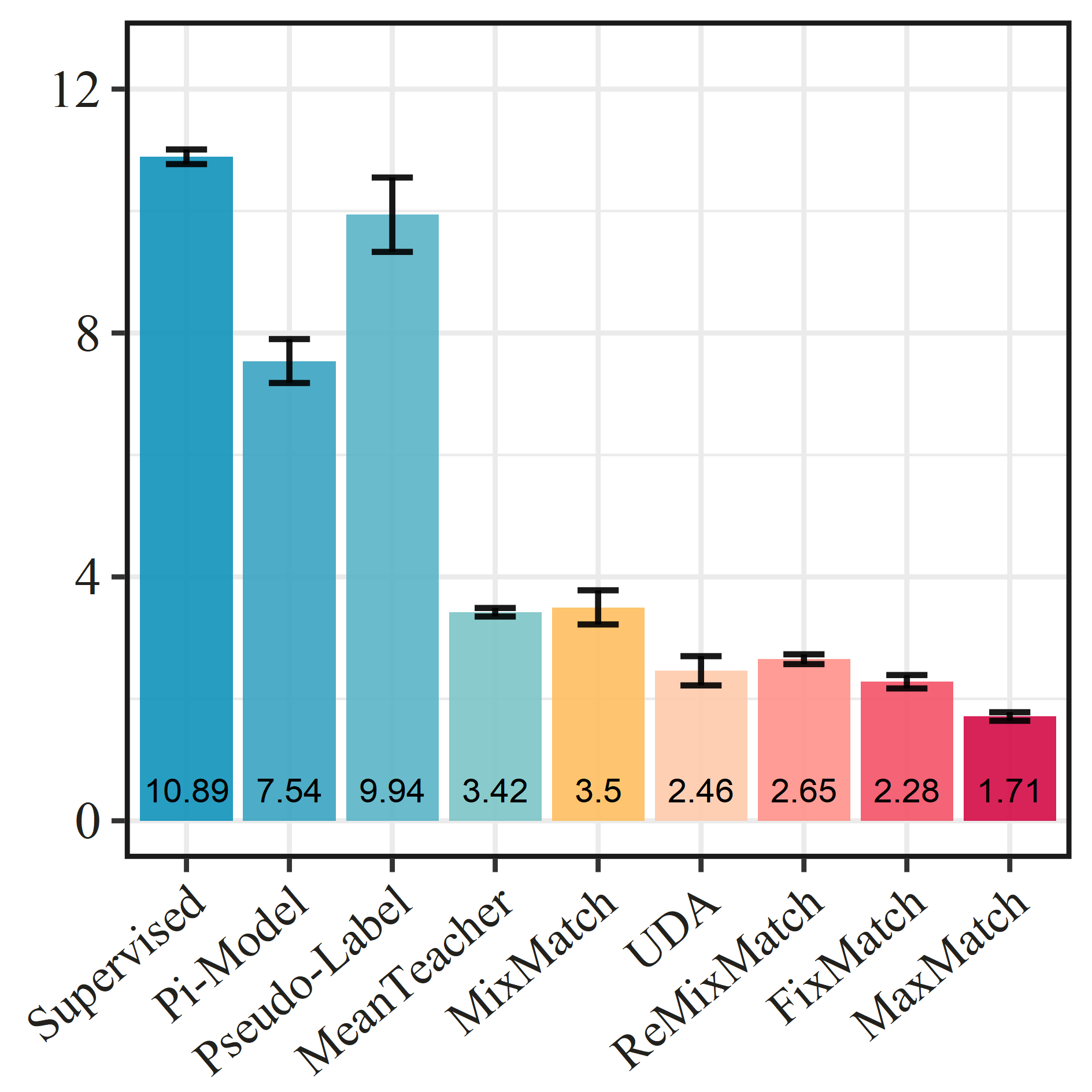}}
	\caption{Error rate (\%) over 5 different folds with varying labeled set size on SVHN.}
	\label{fig:res_vary_sup_svhn}
\end{figure*}

\begin{figure}[t]
	\centering
	\includegraphics[width=0.32\textwidth]{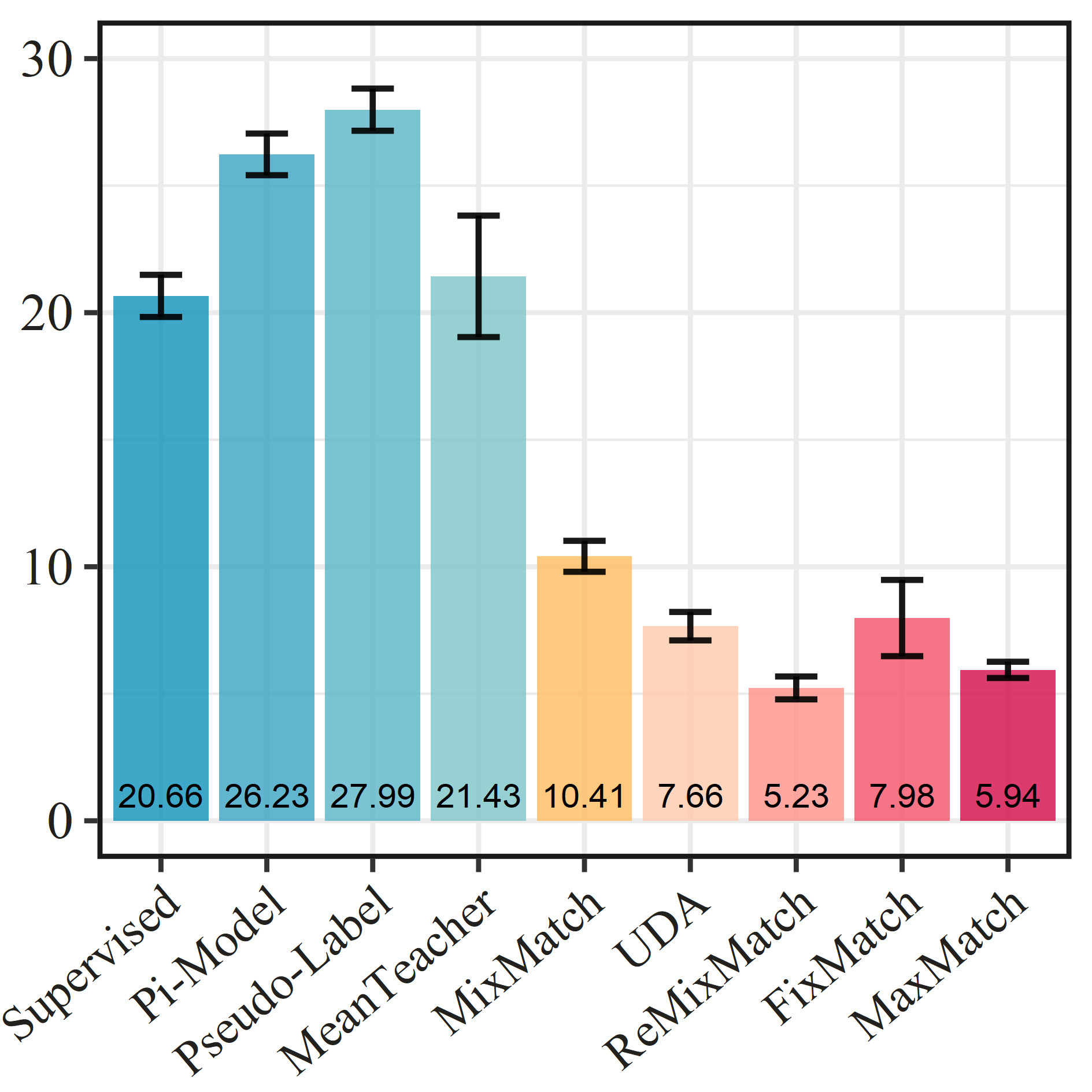}
	\caption{Error rate (\%) over 5 different folds with 1000 labeled samples on STL-10.}
	\label{fig:res_stl10}
\end{figure}

\noindent{\textbf{Competitors.}}\quad We adopt several common consistency-based SSL models as our competitors.
\begin{itemize}
	\item \textbf{Pseudo-Label} \cite{PL} is a standard self-training method that takes highly confident predicted labels of unlabeled data as their training targets in a mini-batch.
	\item \textbf{$\Pi$-Model} \cite{Pi_model} is a na\"{i}ve consistency regularization SSL method. It feeds an unlabeled sample twice with Gaussian noise and model dropout, and minimizes the difference between the predictions.
	\item \textbf{Mean Teacher} \cite{MT} has a teacher model which is the exponential moving average of past model parameters, and uses the teacher's predicted scores to guide the learning of the student.
	\item \textbf{MixMatch} \cite{MixMatch} adopts the average predictions of $K$ augmentations of an unlabeled sample as its soft-label, and introduces the MixUp mechanism to mix both labeled and unlabeled data for training.
	\item \textbf{UDA} \cite{UDA} introduces the stronger augmentation strategy and encourages the consistency between strongly augmented and weakly augmented data.
	\item \textbf{ReMixMatch} \cite{ReMixMatch} further enhances MixMatch by an improved augmentation scheme and a distribution alignment technique.
	\item \textbf{FixMatch} \cite{FixMatch}, the state-of-the-art model, integrates the pseudo-labeling with consistency regularization using strong augmentation.
\end{itemize}
Besides, we also implement models only trained on labeled data but equipped with a strong augmentation strategy (denoted as \textbf{Supervised}) as the baseline to show the improvement from unlabeled data.

\noindent{\textbf{Implementation details.}}\quad
Following \cite{FixMatch}, all the models are implemented with a Wide-ResNet-28-2 \cite{wrn} backbone for CIFAR-10 and SVHN, while with Wide-ResNet-28-8 for CIFAR-100 and Wide-ResNet-37-2 for STL-10, since the last two datasets are more challenging. In our model, we set the number of transformations for each unlabeled data as $K=3$, while other hyperparameters and training strategies are the same as FixMatch's. Specifically, we set the trade-off coefficient as $\lambda=1$, labeled batch size as $B_l=64$, unlabeled batch size as $B_u=448$, and the training epoch is $1024$. The SGD solver is adopted for minimization, with the initial learning rate $\alpha=0.03$, a momentum of $0.9$, and Nesterov acceleration enabled. We also use the cosine learning rate scheduling strategy. The weight decay parameter is $0.0005$ for CIFAR-10/SVHN/STL-10, and $0.001$ for CIFAR-100. Exponential moving average is employed with a ratio of $0.999$. The confidence threshold is $\beta=0.95$. RandAugment \cite{randaug} is adopted as the transformation strategy in our proposed model. Accordingly, the performance of both FixMatch and supervised model is reported when using RandAugment.


\noindent{\textbf{Performance comparison.}}\quad
The results of each method on CIFAR-10, CIFAR-100, SVHN and STL-10 are reported in \Figref{fig:res_vary_sup_cifar10}, \ref{fig:res_vary_sup_cifar100}, \ref{fig:res_vary_sup_svhn} and \ref{fig:res_stl10}, respectively. Overall, our proposed MaxMatch could outperform all the competitors under most settings except for CIFAR-100@$2500$, CIFAR-100@$400$ and STL-10@$1000$. Based on these results, we further have the following observations:
\begin{itemize}
	\item[(1)] Compared with the supervised baseline, the semi-supervised methods using strong augmentation strategies could improve the performance successfully. However, for those only adopting the weak augmentation (\ie, $\Pi$-Model, Pseudo-Label and Mean Teacher), without a relatively large amount of labeled data, the performance of the semi-supervised learning scheme is even worse than the supervised baseline. This shows the strength of advanced augmentation strategies.
	\item[(2)] MaxMatch achieves lower average error rates than its base model FixMatch over all the datasets, which implies that the proposed worst-case consistency regularization scheme is helpful to improve generalization performance.
	\item[(3)] With only 4 labels per category, MaxMatch could achieve a very low average error rate ($1.92\%$) on SVHN, and its variance ($0.12\%$) is much smaller than that of FixMatch ($2.17\%$). Under other two settings on this dataset, MaxMatch also has a smaller variance than FixMatch. These results indicate that MaxMatch is able to minimize the discrepancy between the hardest augmented data very well. In other words, the supremum over the uncertainty set is well-optimized; thus, MaxMatch successfully lowers the overall semi-supervised classification risk and makes the learning more stable.
	\item[(4)] Among the models involving strong augmentation, ReMixMatch and FixMatch are two most competitive methods. Although MaxMatch consistently outperforms FixMatch over all the settings, we see there are three cases (\eg, CIFAR-100@2500, CIFAR-100@400 and STL-10@1000) in which MaxMatch's performance is not better than ReMixMatch's. Note that FixMatch also performs worse than ReMixMatch under these settings. The reason might be that the three cases are much harder than others due to less labeled data but more diverse categories or out-of-distribution samples, while ReMixMatch equips with more complicated training techniques, such as the self-supervised loss and distribution alignment operation, to help mitigate the overfitting/underfitting issues. In fact, \cite{FixMatch} also verifies that when combined with distribution alignment, FixMatch could substantially outperform ReMixMatch in these cases. Nevertheless, the improvement over FixMatch through the worst-case consistency still makes MaxMatch more comparable to ReMixMatch.
\end{itemize}
In a nutshell, the quantitative results verify the effectiveness of the proposed worst-case consistency regularization scheme.

\begin{figure}[t]
	\centering
	\includegraphics[width=0.95\columnwidth]{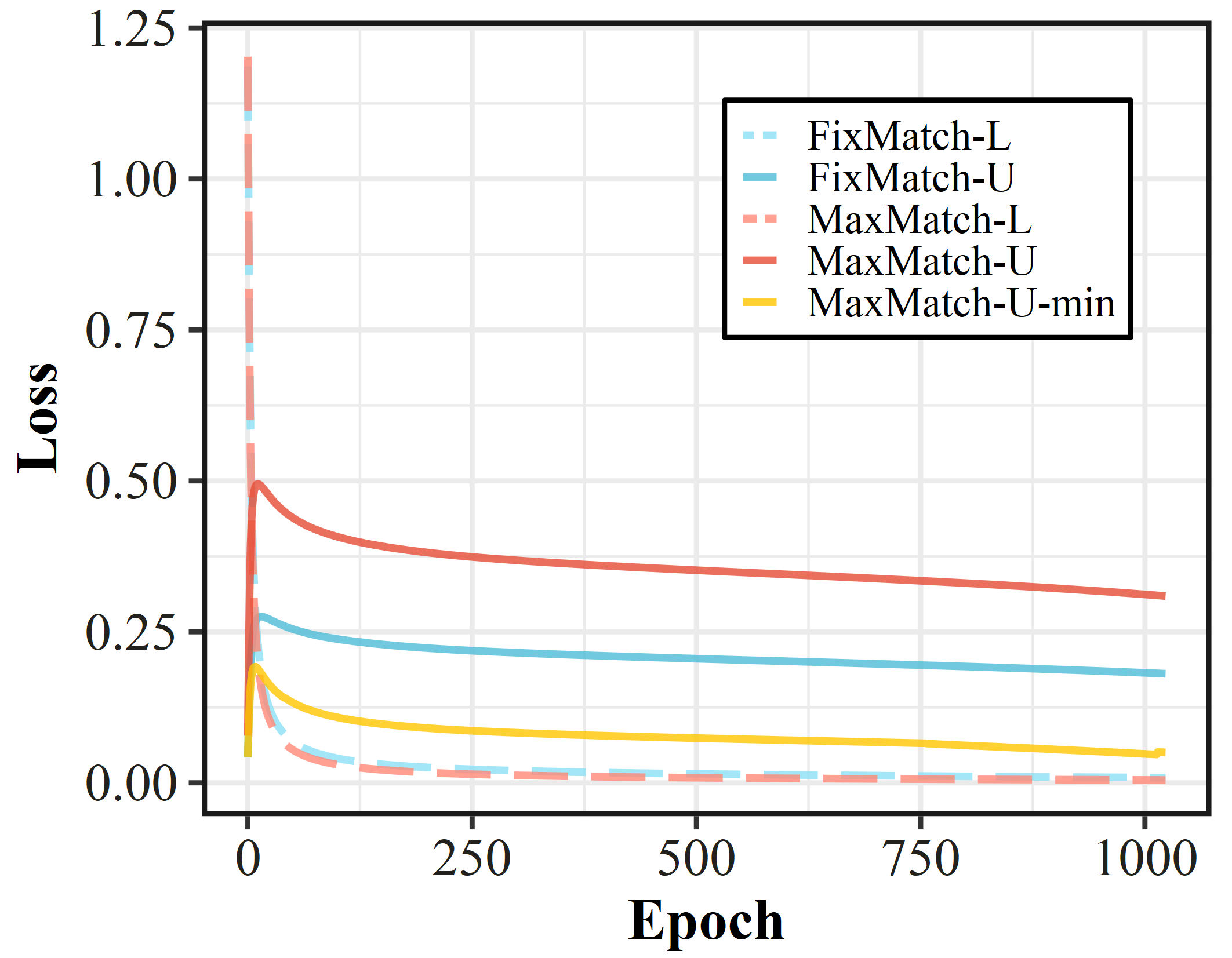}
	\caption{Values of loss terms on labeled and unlabeled data of MaxMatch and FixMatch using 4000 labels on CIFAR-10. The loss on labeled samples is denoted with the suffix \textbf{L}, and that on unlabeled samples is with the suffix \textbf{U}. Moreover, in contrast to the worst-case loss in MaxMatch, we also plot the best-case loss on unlabeled samples as \textbf{MaxMatch-U-min}.}
	\label{fig:loss}
\end{figure}

\noindent{\textbf{Change of losses.}}\quad To see the convergence behavior of our method, we also plot the change of the losses of our MaxMatch and FixMatch trained on CIFAR-10 dataset with 4000 labels in \Figref{fig:loss}. In this figure, the losses on labeled and unlabeled data for each method are illustrated in solid and dotted lines, respectively. For example, for MaxMatch, the labeled loss (denoted with the suffix \textbf{L}) is the first term in \Eqref{eq:minimax_all}, and the unlabeled one (denoted with the suffix \textbf{U}) is the second term. We can see that the two models share almost the same labeled loss which rapidly decreases to $0$ when the number of epochs increases. Meanwhile, the unlabeled losses first increase then slowly decrease. The fast increase at the beginning of the learning might be caused by the confidence thresholding technique controlled by $\beta$. Since the initial model cannot accurately and confidently predict the correct label (\ie, $\tilde{s}_{i,k}^u < \beta$ in \Eqref{eq:implement}), the unlabeled loss will be eliminated in this case. After several epochs, the model becomes stronger and makes more confident predictions, and thus the unlabeled loss suddenly gets large. As the training goes on, the unlabeled loss is minimized, which then stably decreases during training. Moreover, it could be observed that MaxMatch's unlabeled loss is consistently greater than FixMatch's. This is in accordance with our motivation of minimizing the worst-case consistency, which obviously causes a larger consistency loss in each step. Furthermore, the gap between the two unlabeled losses becomes smaller as the learning goes on, which validates the correctness of our algorithm. Besides, we also plot the best-case unlabeled loss (denoted with the suffix \textbf{U-min}) which is the minimal consistency loss among the $K$ augmented variants. We can find that over unlabeled data, the best-case loss of MaxMatch (`MaxMatch-U-min') is much less than the loss of FixMatch (`FixMatch-U'). Theoretically, for a random variable $X$ and a set of $K$ random variables $\tilde{X}_1,\cdots, \tilde{X}_K$ drawn from the same distribution, it apparently holds that $\E[\min_i \tilde{X}_i] \leq \E[X] \leq \E[\max_i \tilde{X}_i]$. As for our case, the consistency losses of different augmented variants could be regarded as random variables. Then the expected value of best-case (\ie, minimal) consistency losses among $K$ augmented variants is no larger than that of consistency losses using a single augmented variant.
With aggressive augmentation strategies like RandAugment, the worst-case (\textit{resp.} best-case) loss among multiple variants will be much larger (\textit{resp.} smaller) than the single variant's loss. This is in line with our expectation. Compared with the best-case loss, our worst-case loss can minimize the inconsistency for all augmented variants in the neighborhood set instead of the best one. In this sense, the worst-case is much more robust.

\begin{figure}[t]
	\centering
	\includegraphics[width=0.75\columnwidth]{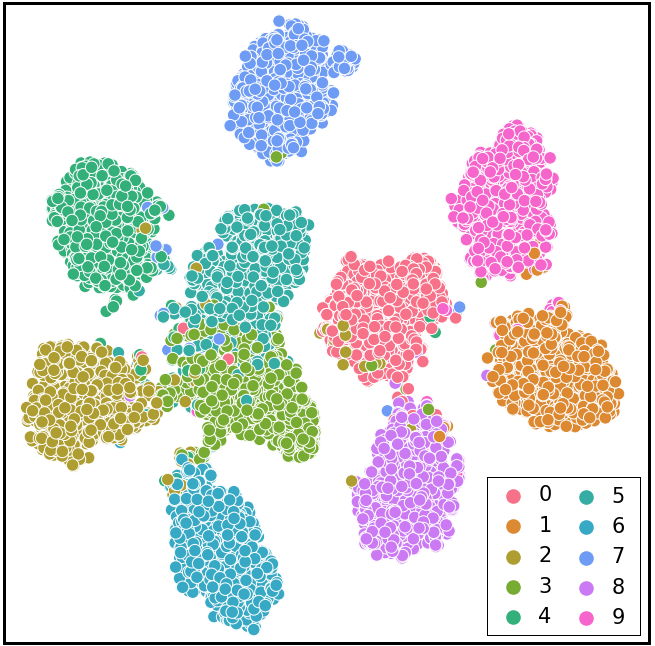}
	\caption{t-SNE visualization for feature embeddings of MaxMatch using 250 labels on CIFAR-10.}
	\label{fig:tsne}
\end{figure}

\noindent{\textbf{Visualization.}}\quad The feature embeddings obtained by MaxMatch on CIFAR-10 with $250$ labels are visualized in \Figref{fig:tsne} using t-SNE \cite{tsne}. Each class is plotted with a specific color. As shown in the figure, features scatter and gather distinctly. Also, the boundaries between a large proportion of classes are clear. With limited labeled data, MaxMatch is able to discriminate the data points as much as possible.

\subsection{ImageNet-1K}

\noindent{\textbf{Dataset and Evaluation Protocol.}}\quad ImageNet-1K \cite{imagenet} is a large-scale dataset contains over $1.28$ million data from $1000$ categories. The original dataset consists of $1,281,167$ training images and $50,000$ validation samples. Following \cite{FixMatch}, we take 10\% of the training images as labeled data, and the rest as unlabeled data for SSL training. Then we report the overall classification error rate on the original validation set for performance evaluation.

\noindent{\textbf{Competitors.}}\quad In addition to FixMatch, we compare MaxMatch with the representative self-supervised method MoCo V2 \cite{chen2020mocov2}. Specifically, this baseline is firstly pretrained by MoCo V2 using all the training data without labels, and then finetuned with only the 10\% labeled data. Besides, we also train FixMatch and MaxMatch based on the pretrained MoCo V2 model to show the effectiveness of the worst-case regularization with self-supervised pretraining. The Exponential Moving Average Normalization (EMAN) technique \cite{eman} is also incorporated for performance comparison.

\noindent{\textbf{Implementation details.}}\quad We conduct the ImageNet-1k experiments based on the PyTorch re-implementation in \cite{eman}, which also includes the pretraining and finetuning codes using MoCo V2. The experiments are run on a server with eight NVIDIA RTX 3090 GPUs. All the models adopt ResNet-50 \cite{resnet} as the network backbone, and a standard SGD optimizer with a momentum of $0.9$ for optimization. The hyperparameter setting of MoCo V2 and FixMatch follows \cite{eman}. For the proposed MaxMatch, all the hyperparameters are the same as FixMatch's, except for the number of transformations and initial learning rate.
For this much more complicated dataset, we randomly choose the number of variants $K$ from $\{1,2,3\}$ in each iteration to prevent the early-stage learning from being dominated by extreme hard examples and trapped in poor local optima.
The model is trained with a linear warmup to the initial learning rate $\alpha=0.02$ during the first $5$ epochs, and then the learning rate is adjusted with a cosine scheduler. We set the trade-off coefficient as $\lambda=10$, labeled batch size as $B_l=64$, unlabeled batch size as $B_u=320$, and the training epoch is $300$. The confidence threshold is $\beta=0.7$ and the decay ratio in the exponential moving average strategy is $0.999$. In terms of augmentation strategies, RandAugment is used for strong augmentation, and random horizontal flip is used for weak augmentation.

\begin{table}[t]
	\centering
	\caption{Error rate (\%) on ImageNet-1k. The lower, the better.}
	\begin{tabular}{llcc}
		\toprule
		Method & Pretrain & {top-1 err} & {top-5 err} \\
		\midrule
		FixMatch & None  & 28.74 & 10.94 \\
		MaxMatch & None  & \textbf{27.83} & \textbf{10.46} \\
		\midrule
		Finetune & MoCo v2 & 36.80  & 13.98 \\
		FixMatch & MoCo v2 & 28.25 & 10.52 \\
		MaxMatch & MoCo v2 & \textbf{28.09} & \textbf{10.49} \\
		\midrule
		FixMatch+EMAN & MoCo v2 & 26.56 & 9.69 \\
		MaxMatch+EMAN & MoCo v2 & \textbf{26.09} & \textbf{9.36} \\
		\bottomrule
	\end{tabular}%
	\label{tab:imagenet}%
\end{table}%

\begin{figure}[t]
	\centering
	\includegraphics[width=0.85\columnwidth]{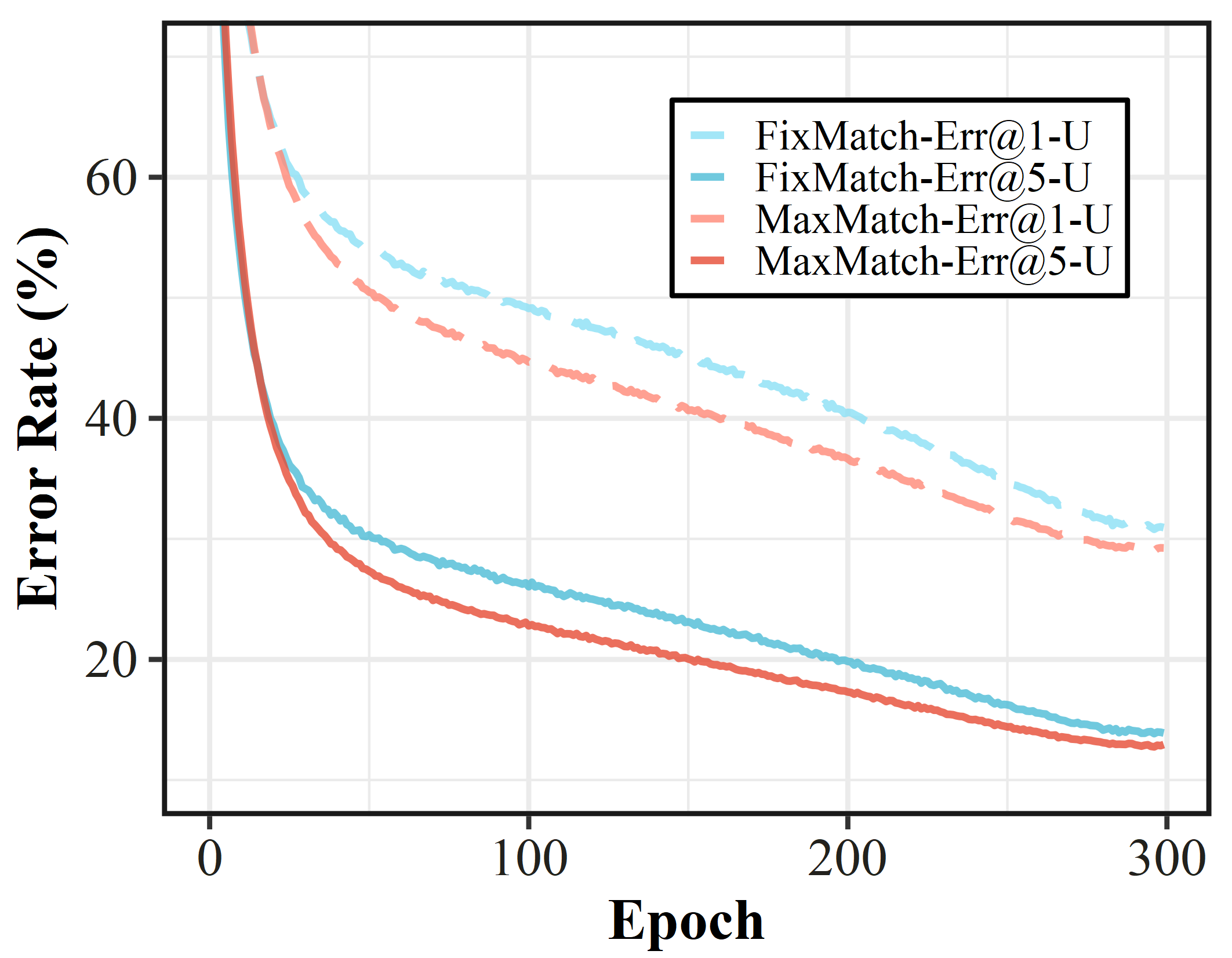}
	\caption{Change of error rates (\%) on unlabeled data of ImageNet-1k for MaxMatch and FixMatch trained from scratch.}
	\label{fig:imagenet_err_u}
\end{figure}

\noindent{\textbf{Performance comparison.}}\quad The top-1 and top-5 classification error rates are reported in \Tabref{tab:imagenet}. It could be observed that (1) all the semi-supervised models achieve higher performance than the model finetuned from MoCo V2, and (2) our proposed model outperforms FixMatch in terms of top-1 and top-5 error rates, whether they are initialized from self-supervised pretrained models or not. This validates the effectiveness of the proposed worst-case consistency regularization framework on large-scale data. Moreover, we plot the top-1 and top-5 error rates on unlabeled data for models trained from scratch in \Figref{fig:imagenet_err_u}. During the training, the error rates of MaxMatch are consistently lower than FixMatch. And the performance gap in top-1 error rates is slightly larger than top-5 error rates. Therefore, the worst-case consistency regularization is able to improve the performance over unlabeled data, especially in terms of the top-1 error rate.

Besides, we also provide the comparisons under different initial learning rates in \Tabref{tab:imagenet_lr}. As $\alpha$ increases from $0.01$ to $0.03$, the performance of FixMatch is continuously improved, while that of MaxMatch first improves and then slightly degenerates. Adopting the worst-case invariants will induce larger loss values on unlabeled data. Consequently, MaxMatch might require a smaller learning rate to achieve the best performance on such a challenging dataset. Nevertheless, with these different $\alpha$s, the proposed method still improves the performance of FixMatch in terms of top-1 and top-5 error rates, respectively. Moreover, the largest performance gain is achieved when $\alpha=0.02$. We further illustrate the top-1 test error rates for FixMatch and MaxMatch with these learning rates in \Figref{fig:imagenet_err_1}. It could be shown that the performance of MaxMatch is better than FixMatch in most cases. Therefore, the effectiveness of the proposed method is again verified.

\begin{table}[t]
	\centering
	\caption{Error rate (\%) with different initial learning rates ($\alpha$).}
		\begin{tabular}{lc|cc}
		\toprule
		Method & Initial lr & top-1 err & top-5 err \\
		\midrule
		FixMatch & 0.01  & 29.70 & 11.78 \\
		MaxMatch & 0.01  & \textbf{29.31} & \textbf{11.69} \\
		\midrule
		FixMatch & 0.02  & 28.94 & 11.30 \\
		MaxMatch & 0.02  & \textbf{27.83} & \textbf{10.46} \\
		\midrule
		FixMatch & 0.03  & 28.74 & 10.94 \\
		MaxMatch & 0.03  & \textbf{28.48} & \textbf{10.93} \\
		\bottomrule
		\end{tabular}%
	\label{tab:imagenet_lr}%
\end{table}%

\begin{figure}[t]
	\centering
	\includegraphics[width=0.95\columnwidth]{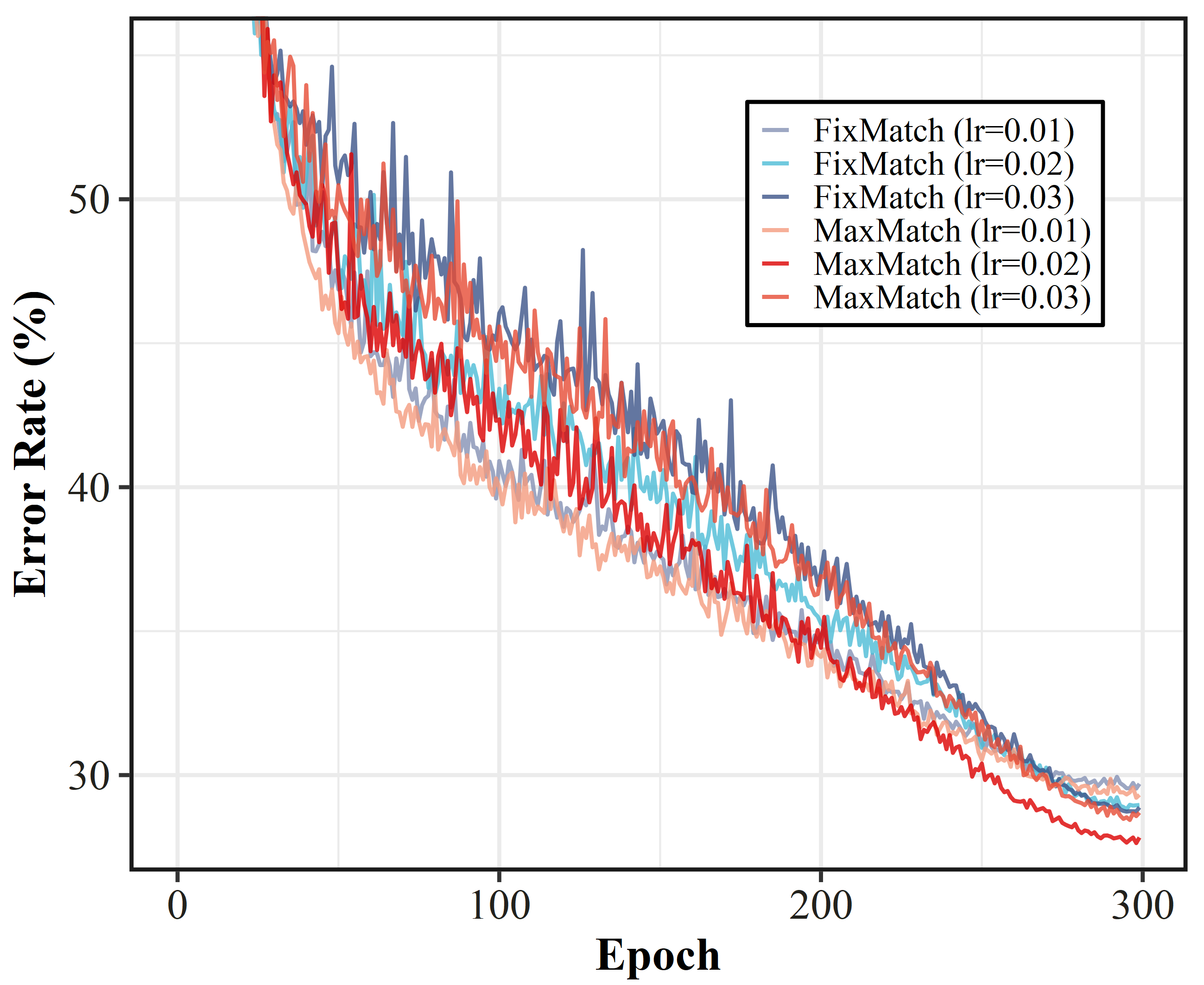}
	\caption{Change of top-1 test error rates (\%) on ImageNet-1k for MaxMatch and FixMatch with different learning rates. Results of MaxMatch are in shades of red, while those of FixMatch are in shades of blue.}
	\label{fig:imagenet_err_1}
\end{figure}

\subsection{Ablation Study and Sensivity Analysis}

\begin{table}[t]
	\centering
	\caption{Performance comparison between adopting RandAugment (RA) and CTAugment (CTA) on SVHN. The \best{best} and \secbest{second best} results are both highlighted.}
	\setlength{\tabcolsep}{1.2mm}{
	\begin{tabular}{l|ccc}
		\toprule
		& 40 & 250 & 1000 \\
		\midrule
		FixMatch (RA) & 3.96 $\pm$ 2.17   & 2.48 $\pm$ 0.38  & 2.28 $\pm$ 0.11 \\
		FixMatch (CTA) & 7.65 $\pm$ 7.65   & 2.64 $\pm$ 0.64  & 2.36 $\pm$ 0.19 \\
		\midrule
		MaxMatch (RA) & \best{1.75 $\pm$ 0.09} & \best{1.75 $\pm$ 0.04} & \best{1.71 $\pm$ 0.07} \\
		MaxMatch (CTA) & \secbest{1.92 $\pm$ 0.12}  & \secbest{1.97 $\pm$ 0.05} & \secbest{1.89 $\pm$ 0.05} \\
		\bottomrule
	\end{tabular}%
	}
	\label{tab:aug}%
\end{table}%

\noindent{\textbf{Effectiveness on different strong augmentation strategies.}}\quad In existing SSL methods, there are two popular strong augmentation strategies: RandAugment \cite{randaug} and CTAugment \cite{ReMixMatch}. Given a list of atomic transformations, RandAugment uniformly samples transformations to apply for input images at random, while CTAugment further learns magnitudes for each transformation during the training. Here we also implement MaxMatch with CTAugment (denoted as \textbf{MaxMatch (CTA)}) to see how it performs under different strong augmentation strategies, and then compare it with FixMatch's CTAugment variant (denoted as \textbf{FixMatch (CTA)}). The evaluation results on 5 folds of SVHN are shown in \Tabref{tab:aug}. First, we can see that for both FixMatch and MaxMatch, the RandAugment variant obtains lower error rates than CTAugment variant. Moreover, with varying labeled set size, MaxMatch consistently outperforms FixMatch using different strategies. Especially when only 40 labels available, MaxMatch (RA) achieves 2.21\% average performance improvement compared with FixMatch (RA), and MaxMatch (CTA) has a 5.73\% average performance gain to FixMatch (CTA). In these different settings, MaxMatch variants also show lower performance variance. This again demonstrates the effectiveness of the proposed framework.

\noindent{\textbf{Effect of $K$.}}\quad An important hyperparameter of the proposed method is $K$, which controls the uncertainty set size of the unlabeled data. Since every atomic transformation is uniformly sampled to apply in the strong augmentation strategy, a larger uncertainty set is more likely to include augmented variants more different from the original one. Namely, the inconsistency between the unlabeled sample and its augmented counterpart might be larger. To show the impact of $K$ on MaxMatch's performance, we evaluate our model with $K=\{1, 3, 5, 7\}$ on CIFAR-10 with 250/4000 labels and show the corresponding error rates in \Figref{fig:K}. Note that when $K=1$, the model is exactly FixMatch. The results show that enlarging $K$ will lead to a clear performance gain in most cases. Especially when $K=3$ for 250 labels and $K=5$ for 4000 labels, the model gets the largest performance improvement. However, the improvement decreases when $K$ becomes relatively large, \eg, $K=7$ for both label set sizes. Recall that MaxMatch focuses on the worst-case consistency, which might become harder to minimize if more and more augmented variants are involved. Thus, a moderate value such as $K=3$ would be a good choice in practice.

\begin{figure}[t]
	\centering
	\includegraphics[width=0.82\columnwidth]{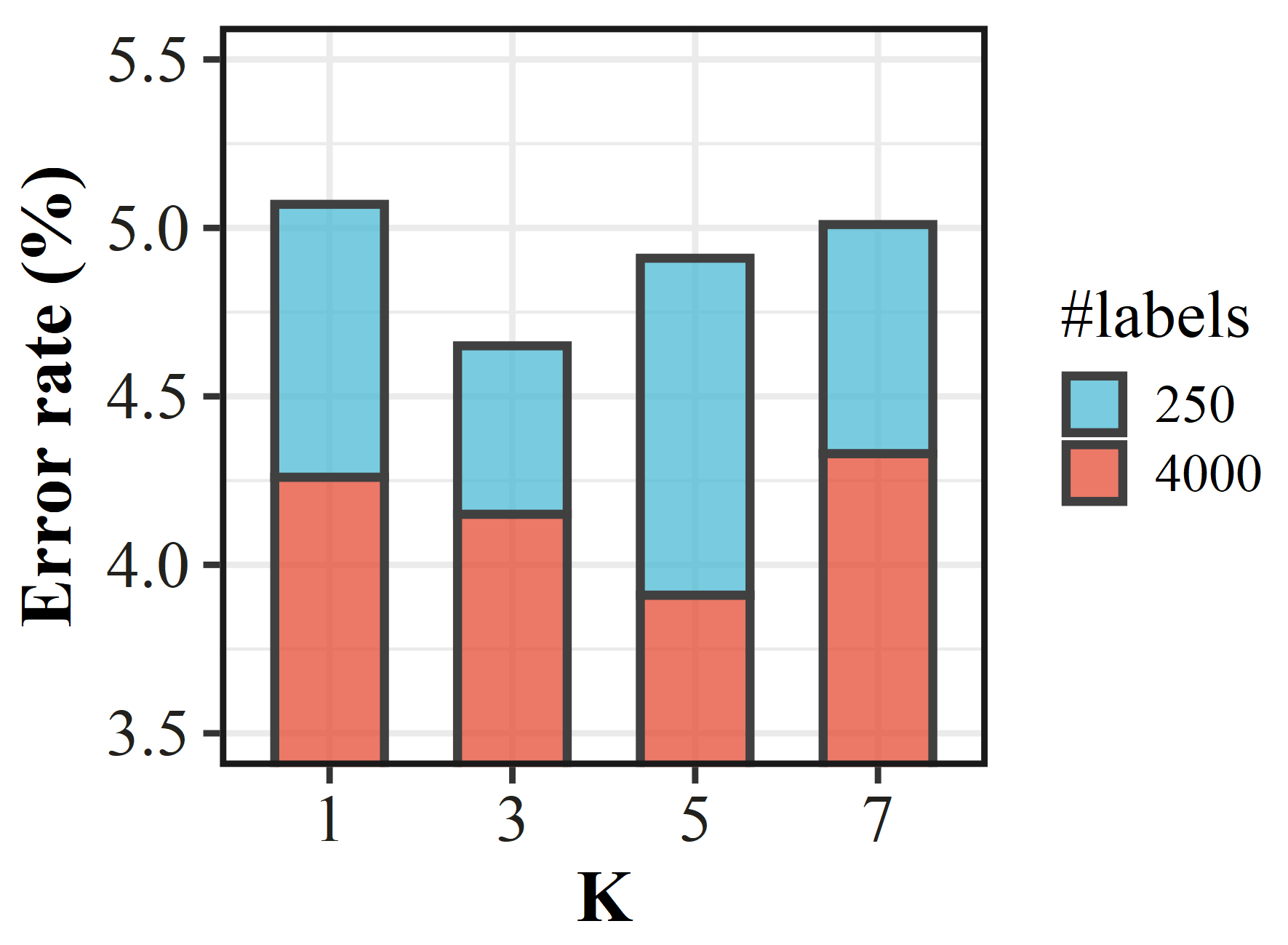}
	\caption{Error rate (\%) with varying $K$ for MaxMatch using 4000 and 250 labels on CIFAR-10.}
	\label{fig:K}
\end{figure}

\noindent{\textbf{Comparison with Augmentation Anchoring \cite{ReMixMatch}.}}\quad The Augmentation Anchoring technique and our proposed Worst-case Consistency both involve multiple augmented variants for an unlabeled sample. The main difference lies in that the former tries to minimize all the consistency losses between an unlabeled sample and its different augmented counterparts, while the latter only considers the largest inconsistency. At first glance, one might think that using all the augmented samples would be more helpful. In order to verify this, we compare MaxMatch with a variant of FixMatch which is equipped with Augmentation Anchoring (denoted as \textbf{FixMatch+Anchoring}) on SVHN when $K=3$. In other words, FixMatch+Anchoring is implemented by replacing the \textit{maximum} operation in MaxMatch with the \textit{mean} operation. As shown in \Figref{fig:anchor}, both of them achieve much lower error rates than FixMatch given different numbers of labels, since the diversity of augmented variants helps the network learn the semantic consistency. Furthermore, in two settings (SVHN@40 and SVHN@250), MaxMatch outperforms FixMatch+Anchoring. This indicates that in many cases, minimizing the worst-case consistency is sufficient to improve the model's generalization performance. Moreover, a major advantage of MaxMatch is that it enjoys a theoretical generalization performance guarantee while other methods do not. Therefore, the proposed method is of great potential for semi-supervised learning.

\begin{figure}[t]
	\centering
	\includegraphics[width=0.92\columnwidth]{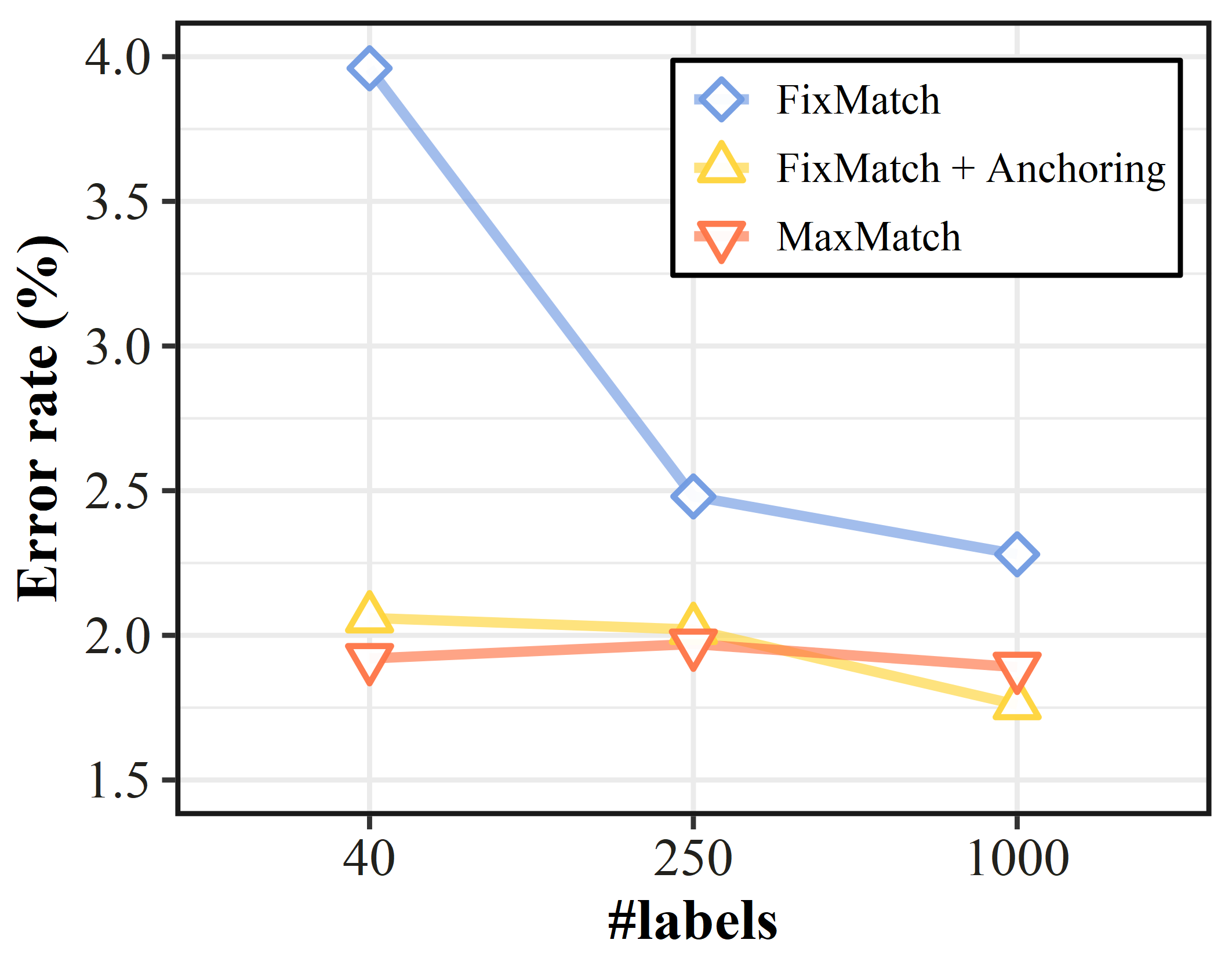}
	\caption{Performance comparison between Augmentation Anchoring \cite{ReMixMatch} and the proposed Worst-case Consistency when $K=3$ on SVHN.}
	\label{fig:anchor}
\end{figure}

\noindent{\textbf{Runtime analysis.}}\quad We also provide the training speed comparisons for FixMatch and MaxMatch (K=3) on CIFAR-10 and ImageNet-1k in \Tabref{tab:time}. It is indicated that the proposed model is about two times slower than FixMatch on CIFAR-10, and 1.5 times slower on ImageNet-1k. As we discussed before, the increased times are mainly caused by extra $K-1$ strong augmentations and the subsequent forward passes. Nevertheless, the costs during back-propagation and optimization are not affected.

\noindent{\textbf{Discussion.}}\quad As the runtime analysis shows, using multiple augmented variants for each unlabeled data will inevitably lead to extra time costs. Fortunately, according to our experimental results, a small $K$ such as $3$ might be sufficient for practical scenarios. This could mitigate this issue to some extent. Meanwhile, the proposed method is designed for the basic semi-supervised classification task. How to deal with more complicated semi-supervised learning (SSL) scenarios such as open-set SSL, noisy label SSL, SSL with distribution shift, and continuous SSL remains to be explored in future research.

\begin{table}[t]
	\caption{GPU hours per epoch during the training. The speed is tested on an NVIDIA RTX 3090 GPU for CIFAR-10 and eight NVIDIA RTX 3090 GPUs for ImageNet-1k.}
	\centering
	\begin{tabular}{l|cc}
		\toprule
		Dataset & FixMatch & MaxMatch \\
		\midrule
		CIFAR-10 & 0.069 & 0.147 \\
		ImageNet-1k & 3.016 & 4.512 \\
		\bottomrule
	\end{tabular}
	\label{tab:time}
\end{table}

\section{Conclusion}\label{sec:conclu}
In this work, we present a new consistency regularization-based semi-supervised learning framework called \textit{MaxMatch}. Given multiple augmented variants of an unlabeled sample, MaxMatch tries to minimize the worst-case consistency between the original sample and these augmented copies to make the model prediction to be robust towards data augmentation. Specifically, we first theoretically prove that the semi-supervised classification risk is bounded by the worst-case consistency regularization, the supervised classification risk, and the generalization gap, which is with magnitude $\tilde{O}(K\sqrt{\frac{W_g}{\nuu}}+\sqrt{\frac{W}{\nl}})$ for convolutional networks. We further derive a minimax optimization problem based on theoretical implications. Then we present an alternative optimization scheme to solve the problem, where its convergence property is guaranteed theoretically. Finally, empirical studies on CIFAR-10, CIFAR-100, SVHN, STL-10 and ImageNet-1k consistently show that our proposed method could reach promising performance in most cases.


%



\ifCLASSOPTIONcompsoc
  \section*{Acknowledgments}
\else
  \section*{Acknowledgment}
\fi
This work was supported in part by the National Key R\&D Program of China under Grant 2018AAA0102000, in part by National Natural Science Foundation of China: U21B2038, 61931008, 62025604, U1936208, 6212200758 and 61976202, in part by the Fundamental Research Funds for the Central Universities, in part by Youth Innovation Promotion Association CAS, in part by the Strategic Priority Research Program of Chinese Academy of Sciences, Grant No. XDB28000000, in part by the China National Postdoctoral Program for Innovative Talents under Grant BX2021298, and in part by China Postdoctoral Science Foundation under Grant 2022M713101.

\ifCLASSOPTIONcaptionsoff
  \newpage
\fi


\bibliographystyle{IEEEtran}
\bibliography{ref}

\begin{thebibliography}{10}
\providecommand{\url}[1]{#1}
\csname url@samestyle\endcsname
\providecommand{\newblock}{\relax}
\providecommand{\bibinfo}[2]{#2}
\providecommand{\BIBentrySTDinterwordspacing}{\spaceskip=0pt\relax}
\providecommand{\BIBentryALTinterwordstretchfactor}{4}
\providecommand{\BIBentryALTinterwordspacing}{\spaceskip=\fontdimen2\font plus
\BIBentryALTinterwordstretchfactor\fontdimen3\font minus
  \fontdimen4\font\relax}
\providecommand{\BIBforeignlanguage}[2]{{%
\expandafter\ifx\csname l@#1\endcsname\relax
\typeout{** WARNING: IEEEtran.bst: No hyphenation pattern has been}%
\typeout{** loaded for the language `#1'. Using the pattern for}%
\typeout{** the default language instead.}%
\else
\language=\csname l@#1\endcsname
\fi
#2}}
\providecommand{\BIBdecl}{\relax}
\BIBdecl

\bibitem{imagenet}
J.~Deng, W.~Dong, R.~Socher, L.~Li, K.~Li, and F.~Li, ``Imagenet: {A}
  large-scale hierarchical image database,'' in \emph{{IEEE} Conference on
  Computer Vision and Pattern Recognition}, 2009, pp. 248--255.

\bibitem{mscoco}
T.~Lin, M.~Maire, S.~J. Belongie, J.~Hays, P.~Perona, D.~Ramanan,
  P.~Doll{\'{a}}r, and C.~L. Zitnick, ``Microsoft {COCO:} common objects in
  context,'' in \emph{European Conference Computer Vision}, vol. 8693.\hskip
  1em plus 0.5em minus 0.4em\relax Springer, 2014, pp. 740--755.

\bibitem{FixMatch}
A.~Kurakin, C.-L. Li, C.~Raffel, D.~Berthelot, E.~D. Cubuk, H.~Zhang, K.~Sohn,
  N.~Carlini, and Z.~Zhang, ``Fixmatch: Simplifying semi-supervised learning
  with consistency and confidence,'' in \emph{Advances in Neural Information
  Processing Systems}, 2020, pp. 596--608.

\bibitem{app1}
J.~Jeong, S.~Lee, J.~Kim, and N.~Kwak, ``Consistency-based semi-supervised
  learning for object detection,'' in \emph{Advances in Neural Information
  Processing Systems}, 2019, pp. 10\,758--10\,767.

\bibitem{app2}
S.~Mittal, M.~Tatarchenko, and T.~Brox, ``Semi-supervised semantic segmentation
  with high- and low-level consistency,'' \emph{{IEEE} Transactions on Pattern
  Analysis and Machine Intelligence}, vol.~43, no.~4, pp. 1369--1379, 2021.

\bibitem{app3}
Y.~Wang, S.~Khan, A.~Gonzalez{-}Garcia, J.~van~de Weijer, and F.~S. Khan,
  ``Semi-supervised learning for few-shot image-to-image translation,'' in
  \emph{{IEEE/CVF} Conference on Computer Vision and Pattern Recognition},
  2020, pp. 4452--4461.

\bibitem{app4}
X.~Li, Y.~Makihara, C.~Xu, Y.~Yagi, and M.~Ren, ``Gait recognition via
  semi-supervised disentangled representation learning to identity and
  covariate features,'' in \emph{{IEEE/CVF} Conference on Computer Vision and
  Pattern Recognition}, 2020, pp. 13\,306--13\,316.

\bibitem{MT}
A.~Tarvainen and H.~Valpola, ``Mean teachers are better role models:
  Weight-averaged consistency targets improve semi-supervised deep learning
  results,'' in \emph{Advances in Neural Information Processing Systems}, 2017,
  pp. 1195--1204.

\bibitem{VAT}
T.~Miyato, S.~Maeda, M.~Koyama, and S.~Ishii, ``Virtual adversarial training:
  {A} regularization method for supervised and semi-supervised learning,''
  \emph{{IEEE} Transactions on Pattern Analysis and Machine Intelligence},
  vol.~41, pp. 1979--1993, 2019.

\bibitem{MixMatch}
D.~Berthelot, N.~Carlini, I.~J. Goodfellow, N.~Papernot, A.~Oliver, and
  C.~Raffel, ``Mixmatch: {A} holistic approach to semi-supervised learning,''
  in \emph{Advances in Neural Information Processing Systems}, 2019, pp.
  5050--5060.

\bibitem{UDA}
Q.~Xie, Z.~Dai, E.~Hovy, M.-T. Luong, and Q.~V. Le, ``Unsupervised data
  augmentation for consistency training,'' in \emph{Advances in Neural
  Information Processing Systems}, 2020, pp. 6256--6268.

\bibitem{ReMixMatch}
D.~Berthelot, N.~Carlini, E.~D. Cubuk, A.~Kurakin, K.~Sohn, H.~Zhang, and
  C.~Raffel, ``Remixmatch: Semi-supervised learning with distribution matching
  and augmentation anchoring,'' in \emph{International Conference on Learning
  Representations}, 2020.

\bibitem{Pi_model}
S.~Laine and T.~Aila, ``Temporal ensembling for semi-supervised learning,'' in
  \emph{International Conference on Learning Representations}, 2017.

\bibitem{bound2}
C.~Wei, K.~Shen, Y.~Chen, and T.~Ma, ``Theoretical analysis of self-training
  with deep networks on unlabeled data,'' in \emph{International Conference on
  Learning Representations}, 2021.

\bibitem{bound}
R.~Zhai, T.~Cai, D.~He, C.~Dan, K.~He, J.~E. Hopcroft, and L.~Wang,
  ``Adversarially robust generalization just requires more unlabeled data,''
  \emph{arXiv}, vol. 1906.00555, 2019.

\bibitem{ssl-survey}
J.~E. Van~Engelen and H.~H. Hoos, ``A survey on semi-supervised learning,''
  \emph{Machine Learning}, vol. 109, no.~2, pp. 373--440, 2020.

\bibitem{cluster_assmp}
O.~Chapelle and A.~Zien, ``Semi-supervised classification by low density
  separation,'' in \emph{International Workshop on Artificial Intelligence and
  Statistics}, 2005.

\bibitem{PL}
D.-H. Lee, ``Pseudo-label: The simple and efficient semi-supervised learning
  method for deep neural networks,'' in \emph{International Conference on
  Machine Learning Workshop}, 2013.

\bibitem{Gamma-model}
A.~Rasmus, M.~Berglund, M.~Honkala, H.~Valpola, and T.~Raiko, ``Semi-supervised
  learning with ladder networks,'' in \emph{Advances in Neural Information
  Processing}, 2015, pp. 3546--3554.

\bibitem{TS}
M.~Sajjadi, M.~Javanmardi, and T.~Tasdizen, ``Regularization with stochastic
  transformations and perturbations for deep semi-supervised learning,'' in
  \emph{Advances in Neural Information Processing}, 2016, pp. 1163--1171.

\bibitem{EntMin}
Y.~Grandvalet and Y.~Bengio, ``Semi-supervised learning by entropy
  minimization,'' in \emph{Advances in Neural Information Processing Systems},
  2004, pp. 529--536.

\bibitem{ICT}
V.~Verma, A.~Lamb, J.~Kannala, Y.~Bengio, and D.~Lopez{-}Paz, ``Interpolation
  consistency training for semi-supervised learning,'' in \emph{International
  Joint Conference on Artificial Intelligence}, 2019, pp. 3635--3641.

\bibitem{mixup}
H.~Zhang, M.~Ciss{\'{e}}, Y.~N. Dauphin, and D.~Lopez{-}Paz, ``mixup: Beyond
  empirical risk minimization,'' in \emph{International Conference on Learning
  Representations}, 2018.

\bibitem{randaug}
E.~D. Cubuk, B.~Zoph, J.~Shlens, and Q.~V. Le, ``Randaugment: Practical
  automated data augmentation with a reduced search space,'' in
  \emph{{IEEE/CVF} Conference on Computer Vision and Pattern Recognition
  Workshops}, 2020, pp. 3008--3017.

\bibitem{weighting1}
Z.~Ren, R.~A. Yeh, and A.~G. Schwing, ``Not all unlabeled data are equal:
  Learning to weight data in semi-supervised learning,'' in \emph{Advances in
  Neural Information Processing Systems}, 2020, pp. 21\,786--21\,797.

\bibitem{self_sup1}
T.~Chen, S.~Kornblith, K.~Swersky, M.~Norouzi, and G.~E. Hinton, ``Big
  self-supervised models are strong semi-supervised learners,'' in
  \emph{Advances in Neural Information Processing Systems}, 2020, pp.
  22\,243--22\,255.

\bibitem{self_sup2}
L.~Beyer, X.~Zhai, A.~Oliver, and A.~Kolesnikov, ``{S4L:} self-supervised
  semi-supervised learning,'' in \emph{{IEEE/CVF} International Conference on
  Computer Vision}, 2019, pp. 1476--1485.

\bibitem{neg_sampl1}
J.~Chen, V.~Shah, and A.~Kyrillidis, ``Negative sampling in semi-supervised
  learning,'' in \emph{International Conference on Machine Learning}, vol. 119,
  2020, pp. 1704--1714.

\bibitem{perturb1}
L.~Zhang and G.~Qi, ``{WCP:} worst-case perturbations for semi-supervised deep
  learning,'' in \emph{{IEEE/CVF} Conference on Computer Vision and Pattern
  Recognition}, 2020, pp. 3911--3920.

\bibitem{self-train1}
D.~Yarowsky, ``Unsupervised word sense disambiguation rivaling supervised
  methods,'' in \emph{Annual Meeting of the Association for Computational
  Linguistics}, 1995, pp. 189--196.

\bibitem{self-train2}
C.~Rosenberg, M.~Hebert, and H.~Schneiderman, ``Semi-supervised self-training
  of object detection models,'' in \emph{{IEEE} Workshop on Applications of
  Computer Vision}, 2005, pp. 29--36.

\bibitem{self-train3}
R.~Reichart and A.~Rappoport, ``Self-training for enhancement and domain
  adaptation of statistical parsers trained on small datasets,'' in
  \emph{Annual Meeting of the Association for Computational Linguistics}, 2007.

\bibitem{noisy1}
Q.~Xie, M.~Luong, E.~H. Hovy, and Q.~V. Le, ``Self-training with noisy student
  improves imagenet classification,'' in \emph{{IEEE/CVF} Conference on
  Computer Vision and Pattern Recognition}, 2020, pp. 10\,684--10\,695.

\bibitem{label_prop1}
A.~Iscen, G.~Tolias, Y.~Avrithis, and O.~Chum, ``Label propagation for deep
  semi-supervised learning,'' in \emph{{IEEE/CVF} Conference on Computer Vision
  and Pattern Recognition}, 2019, pp. 5070--5079.

\bibitem{label_prop2}
S.~Li, B.~Liu, D.~Chen, Q.~Chu, L.~Yuan, and N.~Yu, ``Density-aware graph for
  deep semi-supervised visual recognition,'' in \emph{{IEEE/CVF} Conference on
  Computer Vision and Pattern Recognition}, 2020, pp. 13\,397--13\,406.

\bibitem{label_prop3}
P.~Chen, T.~Ma, X.~Qin, W.~Xu, and S.~Zhou, ``Data-efficient semi-supervised
  learning by reliable edge mining,'' in \emph{{IEEE/CVF} Conference on
  Computer Vision and Pattern Recognition}, 2020, pp. 9189--9198.

\bibitem{curriculum}
P.~Cascante{-}Bonilla, F.~Tan, Y.~Qi, and V.~Ordonez, ``Curriculum labeling:
  Revisiting pseudo-labeling for semi-supervised learning,'' in \emph{{AAAI}
  Conference on Artificial Intelligence}, 2021, pp. 6912--6920.

\bibitem{new1}
T.~Han, J.~Gao, Y.~Yuan, and Q.~Wang, ``Unsupervised semantic aggregation and
  deformable template matching for semi-supervised learning,'' in
  \emph{Advances in Neural Information Processing Systems}, 2020, pp.
  9972--9982.

\bibitem{dash}
Y.~Xu, L.~Shang, J.~Ye, Q.~Qian, Y.~Li, B.~Sun, H.~Li, and R.~Jin, ``Dash:
  Semi-supervised learning with dynamic thresholding,'' in \emph{International
  Conference on Machine Learning}, 2021, pp. 11\,525--11\,536.

\bibitem{sinkhorn}
K.~S. Tai, P.~Bailis, and G.~Valiant, ``Sinkhorn label allocation:
  Semi-supervised classification via annealed self-training,'' in
  \emph{International Conference on Machine Learning}, 2021, pp.
  10\,065--10\,075.

\bibitem{alpha-div}
C.~Gong, D.~Wang, and Q.~Liu, ``Alphamatch: Improving consistency for
  semi-supervised learning with alpha-divergence,'' in \emph{{IEEE/CVF}
  Conference on Computer Vision and Pattern Recognition}, 2021, pp.
  13\,683--13\,692.

\bibitem{metapseudo}
H.~Pham, Z.~Dai, Q.~Xie, and Q.~V. Le, ``Meta pseudo labels,'' in
  \emph{{IEEE/CVF} Conference on Computer Vision and Pattern Recognition},
  2021, pp. 11\,557--11\,568.

\bibitem{zhou2014semi}
X.~Zhou and M.~Belkin, ``Semi-supervised learning,'' in \emph{Academic Press
  Library in Signal Processing}.\hskip 1em plus 0.5em minus 0.4em\relax
  Elsevier, 2014, vol.~1, pp. 1239--1269.

\bibitem{foundml}
M.~Mohri, A.~Rostamizadeh, and A.~Talwalkar, \emph{Foundations of Machine
  Learning}.\hskip 1em plus 0.5em minus 0.4em\relax {MIT} Press, 2012.

\bibitem{dnn_rademacher1}
N.~Golowich, A.~Rakhlin, and O.~Shamir, ``Size-independent sample complexity of
  neural networks,'' in \emph{Conference On Learning Theory}, vol.~75, 2018,
  pp. 297--299.

\bibitem{dnn_rademacher2}
C.~Wei and T.~Ma, ``Data-dependent sample complexity of deep neural networks
  via lipschitz augmentation,'' in \emph{Advances in Neural Information
  Processing Systems}, 2019, pp. 9722--9733.

\bibitem{gan1}
I.~J. Goodfellow, J.~Pouget-Abadie, M.~Mirza, B.~Xu, D.~Warde-Farley, S.~Ozair,
  A.~Courville, and Y.~Bengio, ``Generative adversarial nets,'' in
  \emph{Advances in Neural Information Processing Systems}, 2014, p.
  2672–2680.

\bibitem{gan2}
M.~Arjovsky, S.~Chintala, and L.~Bottou, ``{W}asserstein generative adversarial
  networks,'' in \emph{International Conference on Machine Learning}, 2017, pp.
  214--223.

\bibitem{gan3}
S.~Arora, R.~Ge, Y.~Liang, T.~Ma, and Y.~Zhang, ``Generalization and
  equilibrium in generative adversarial nets (gans),'' in \emph{International
  Conference on Machine Learning}, 2017, pp. 224--232.

\bibitem{game}
M.~J. Osborne \emph{et~al.}, \emph{An introduction to game theory}.\hskip 1em
  plus 0.5em minus 0.4em\relax Oxford university press New York, 2004, vol.~3,
  no.~3.

\bibitem{equilibrium}
C.~Jin, P.~Netrapalli, and M.~I. Jordan, ``What is local optimality in
  nonconvex-nonconcave minimax optimization?'' in \emph{International
  Conference on Machine Learning}, vol. 119, 2020, pp. 4880--4889.

\bibitem{rafique2021weakly}
H.~Rafique, M.~Liu, Q.~Lin, and T.~Yang, ``Weakly-convex--concave min--max
  optimization: provable algorithms and applications in machine learning,''
  \emph{Optimization Methods and Software}, pp. 1--35, 2021.

\bibitem{boyd2004convex}
S.~Boyd, S.~P. Boyd, and L.~Vandenberghe, \emph{Convex optimization}.\hskip 1em
  plus 0.5em minus 0.4em\relax Cambridge university press, 2004.

\bibitem{cifar}
A.~Krizhevsky and G.~Hinton, ``Learning multiple layers of features from tiny
  images,'' \emph{Master's thesis, Department of Computer Science, University
  of Toronto}, 2009.

\bibitem{svhn}
Y.~Netzer, T.~Wang, A.~Coates, A.~Bissacco, B.~Wu, and A.~Y. Ng, ``Reading
  digits in natural images with unsupervised feature learning,'' \emph{NIPS
  Workshop on Deep Learning and Unsupervised Feature Learning}, 2011.

\bibitem{stl10}
A.~Coates, A.~Y. Ng, and H.~Lee, ``An analysis of single-layer networks in
  unsupervised feature learning,'' in \emph{International Conference on
  Artificial Intelligence and Statistics}, vol.~15, 2011, pp. 215--223.

\bibitem{wrn}
S.~Zagoruyko and N.~Komodakis, ``Wide residual networks,'' in \emph{British
  Machine Vision Conference}, R.~C. Wilson, E.~R. Hancock, and W.~A.~P. Smith,
  Eds., 2016, pp. 87.1--87.12.

\bibitem{tsne}
L.~Van~der Maaten and G.~Hinton, ``Visualizing data using t-sne,''
  \emph{Journal of Machine Learning Research}, vol.~9, no.~86, pp. 2579--2605,
  2008.

\bibitem{chen2020mocov2}
X.~Chen, H.~Fan, R.~Girshick, and K.~He, ``Improved baselines with momentum
  contrastive learning,'' \emph{arXiv preprint arXiv:2003.04297}, 2020.

\bibitem{eman}
Z.~Cai, A.~Ravichandran, S.~Maji, C.~C. Fowlkes, Z.~Tu, and S.~Soatto,
  ``Exponential moving average normalization for self-supervised and
  semi-supervised learning,'' in \emph{{IEEE} Conference on Computer Vision and
  Pattern Recognition}, 2021, pp. 194--203.

\bibitem{resnet}
K.~He, X.~Zhang, S.~Ren, and J.~Sun, ``Deep residual learning for image
  recognition,'' in \emph{{IEEE} Conference on Computer Vision and Pattern
  Recognition}, 2016, pp. 770--778.

\bibitem{nesterov2003introductory}
Y.~Nesterov, \emph{Introductory lectures on convex optimization: A basic
  course}.\hskip 1em plus 0.5em minus 0.4em\relax Springer Science \& Business
  Media, 2003, vol.~87.

\bibitem{cnn_bound}
P.~M. Long and H.~Sedghi, ``Generalization bounds for deep convolutional neural
  networks,'' in \emph{International Conference on Learning Representations},
  2020.

\bibitem{mauc}
Z.~Yang, Q.~Xu, S.~Bao, X.~Cao, and Q.~Huang, ``Learning with multiclass auc:
  Theory and algorithms,'' \emph{{IEEE} Transactions on Pattern Analysis and
  Machine Intelligence}, 2021.

\bibitem{multioutput_bound}
H.~W.~J. Reeve and A.~Kab{\'{a}}n, ``Optimistic bounds for multi-output
  learning,'' in \emph{International Conference on Machine Learning}, vol. 119,
  2020, pp. 8030--8040.

\end{thebibliography}

%
\begin{IEEEbiography}[{\includegraphics[width=1in,height=1.25in,clip,keepaspectratio]{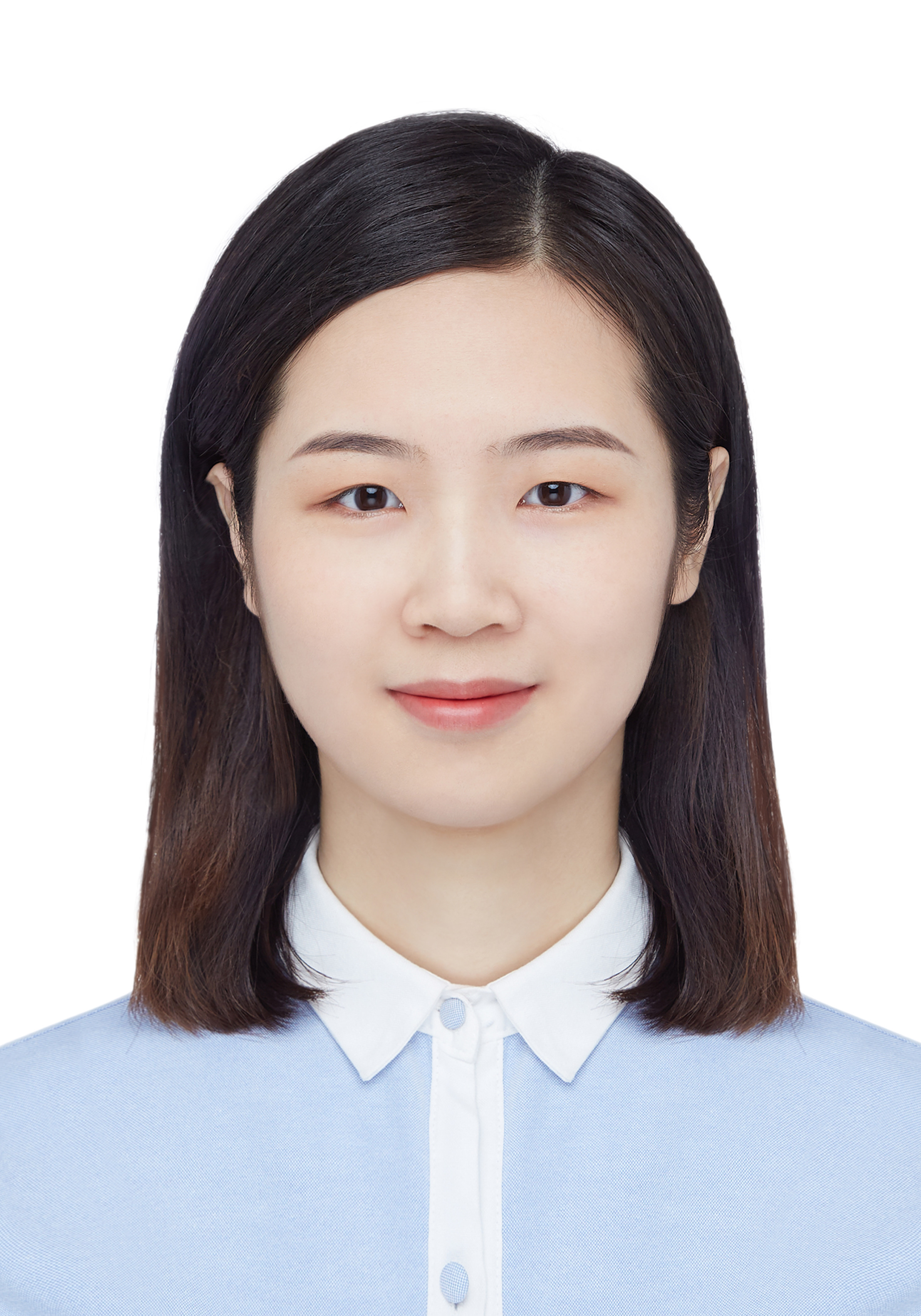}}]
	{\textbf{Yangbangyan Jiang} received the bachelor's degree in instrumentation and control from Beihang University in 2017. She is currently pursuing the Ph.D. degree with University of Chinese Academy of Sciences. Her research interests include machine learning and computer vision. She has authored or coauthored several academic papers in international journals and conferences including NeurIPS, CVPR, AAAI, ACM MM, {etc}. She served as a reviewer for several top-tier conferences such as ICML, NeurIPS, ICLR, CVPR, AAAI.}
\end{IEEEbiography}

\begin{IEEEbiography}[{\includegraphics[width=1in,height=1.25in,clip,keepaspectratio]{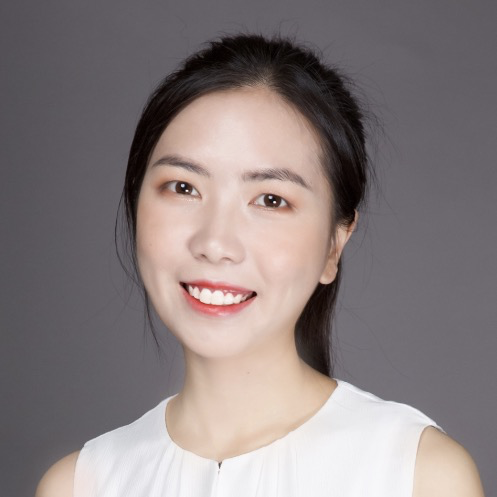}}]
	{\textbf{Xiaodan Li} received the M.S. degree from University of Science and Technology of China in 2019. She is currently an Algorithm Engineer in the Security Department of Alibaba Group, and working on trusted and safe artificial intelligence algorithms. Her research interests include adversarial machine learning and deepfake related topics. She has published several papers in top-tier conferences, such as CVPR/ACM Multimedia, etc.
	}
\end{IEEEbiography}

\begin{IEEEbiography}[{\includegraphics[width=1in,height=1.25in,clip,keepaspectratio]{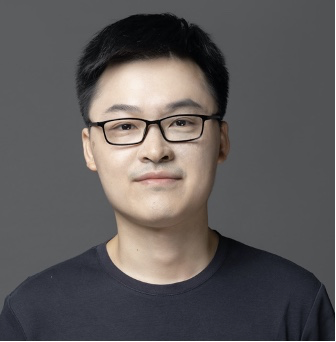}}]
	{\textbf{Yuefeng Chen} received the M.S. degree in computer science from Northwestern Polytechnical University in 2013. He is currently a Staff Engineer in the Security Department of Alibaba, and working on trusted and safe artificial intelligence algorithms. His current research focuses on adversarial machine learning and robust machine learning, including adversarial attack and defense, interpretable machine learning, deepfake related topics. He has published several papers in top-tier conferences, such as CVPR/ICCV/AAAI/ACM Multimedia, etc. He served as a reviewer for AAAI, CVPR, ACM Multimedia, etc.}
\end{IEEEbiography}

\begin{IEEEbiography}[{\includegraphics[width=1in,height=1.25in,clip,keepaspectratio]{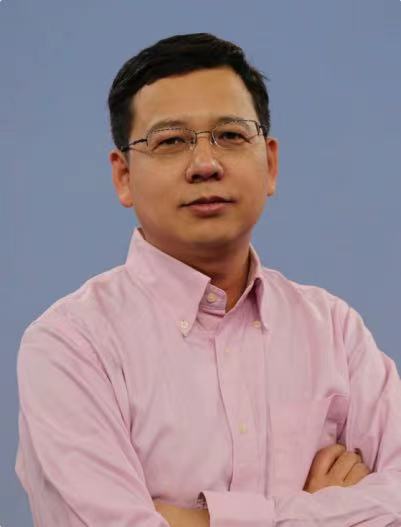}}]
	{\textbf{Yuan He} received his B.S. degree and Ph.D. degree from Tsinghua University, P.R. China. He is a Senior Staff Engineer in the Security Department of Alibaba Group, and working on artificial intelligence-based content moderation and intellectual property protection systems. Before joining Alibaba, he was a research manager at Fujitsu working on document analysis system. He has published more than 30 papers in computer vision and machine learning related conferences and journals including CVPR, ICCV, ICML, NeurIPS, AAAI and ACM MM. His research interests include computer vision, machine learning, and AI security.}
\end{IEEEbiography}

\begin{IEEEbiography}[{\includegraphics[width=1in,height=1.25in,clip,keepaspectratio]{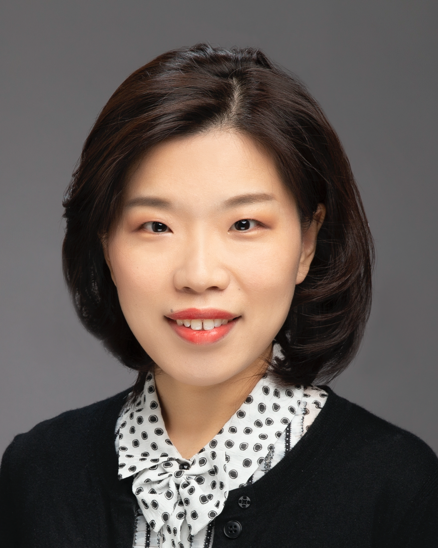}}]
    {\textbf{Qianqian Xu} received the B.S. degree in computer science from China University of Mining and Technology in 2007 and the Ph.D. degree in computer science from University of Chinese Academy of Sciences in 2013. She is currently an Associate Professor with the Institute of Computing Technology, Chinese Academy of Sciences, Beijing, China. Her research interests include statistical machine learning, with applications in multimedia and computer vision. She has authored or coauthored 60+ academic papers in prestigious international journals and conferences (including T-PAMI, IJCV, T-IP, NeurIPS, ICML, CVPR, AAAI, etc.) Moreover, she serves as an associate editor of IEEE Transactions on Circuits and Systems for Video Technology, IEEE Transactions on Multimed, ACM Transactions on Multimedia Computing, Communications, and Applications, and Multimedia Systems.}
\end{IEEEbiography}

\begin{IEEEbiography}[{\includegraphics[width=1in,height=1.25in,clip,keepaspectratio]{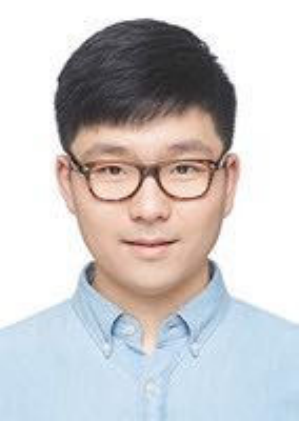}}]
	{\textbf{Zhiyong Yang} received the M.Sc. degree in computer science and technology from University of Science and Technology Beijing (USTB) in 2017, and the Ph.D. degree from University of Chinese Academy of Sciences (UCAS) in 2021. He is currently a postdoctoral research fellow with the University of Chinese Academy of Sciences. His research interests lie in machine learning and learning theory, with special focus on AUC optimization, meta-learning/multi-task learning, and learning theory for recommender systems. He has authored or coauthored 32 academic papers in top-tier international conferences and journals including T-PAMI/ICML/NeurIPS/CVPR. He served as a TPC member for IJCAI 2021 and a reviewer for several top-tier journals and conferences such as T-PAMI, TMLR, ICML, NeurIPS and ICLR.}
\end{IEEEbiography}

\begin{IEEEbiography}[{\includegraphics[width=1in,height=1.25in,clip,keepaspectratio]{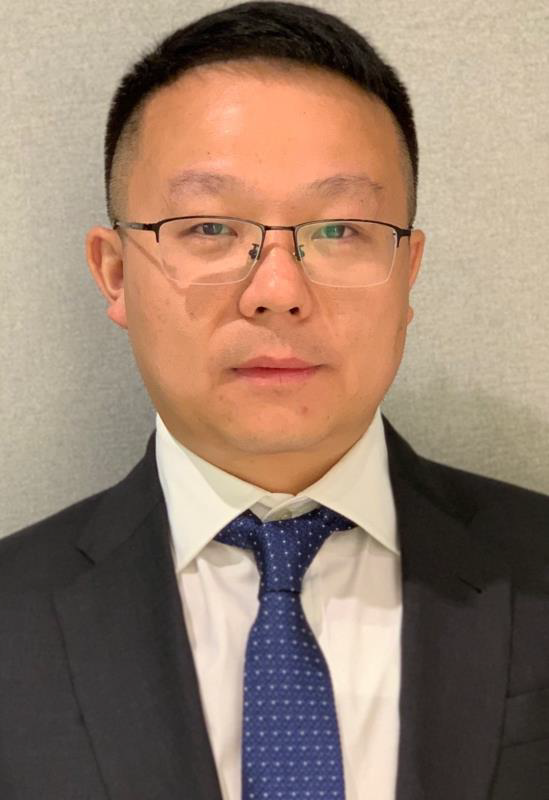}}]
    {\textbf{Xiaochun Cao} is a Professor of School of Cyber Science and Technology, Shenzhen Campus, Sun Yat-sen University. He received the B.E. and M.E. degrees both in computer science from Beihang University (BUAA), China, and the Ph.D. degree in computer science from the University of Central Florida, USA, with his dissertation nominated for the university level Outstanding Dissertation Award. After graduation, he spent about three years at ObjectVideo Inc. as a Research Scientist. From 2008 to 2012, he was a professor at Tianjin University. From 2012 to 2020, he was a professor at Institute of Information Engineering, Chinese Academy of Sciences. He has authored and coauthored over 200 journal and conference papers. In 2004 and 2010, he was the recipients of the Piero Zamperoni best student paper award at the International Conference on Pattern Recognition. He is a fellow of IET and a Senior Member of IEEE. He is an associate editor of IEEE Transactions on Image Processing, IEEE Transactions on Circuits and Systems for Video Technology and IEEE Transactions on Multimedia.}
\end{IEEEbiography}

\begin{IEEEbiography}[{\includegraphics[width=1in,height=1.25in,clip,keepaspectratio]{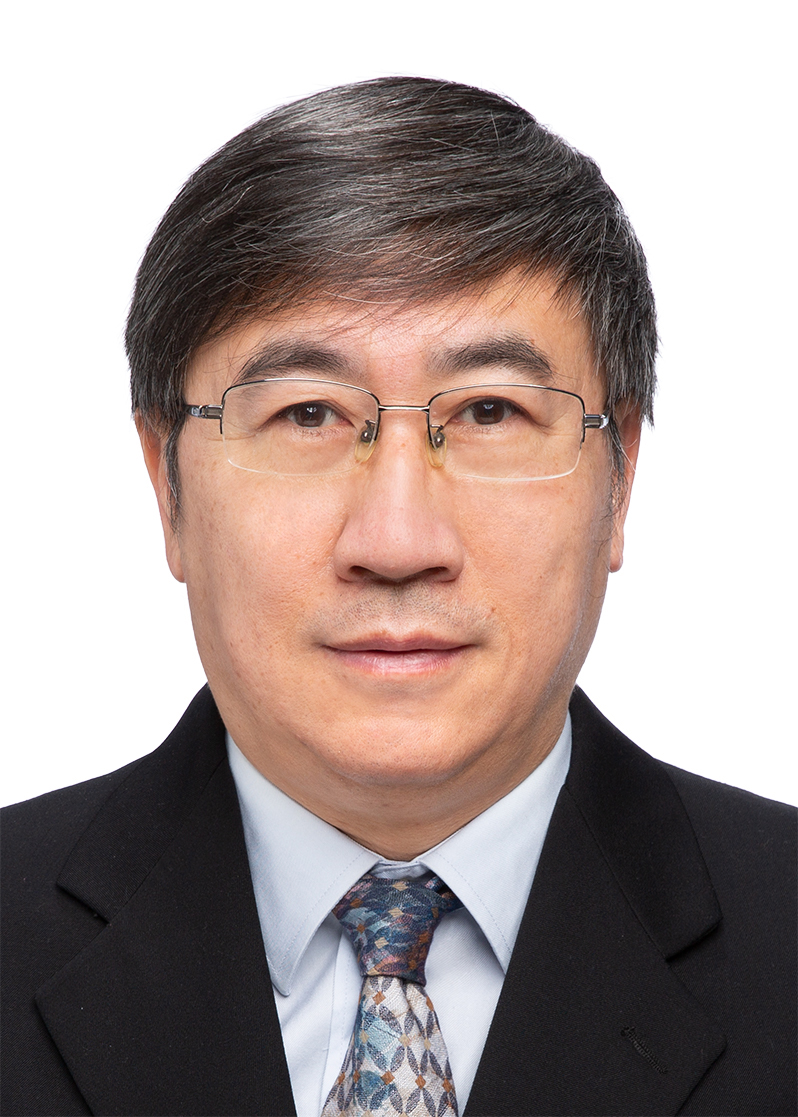}}]
    {\textbf{Qingming Huang} is a chair professor in University of Chinese Academy of Sciences and an adjunct research professor in the Institute of Computing Technology, Chinese Academy of Sciences. He graduated with a Bachelor degree in Computer Science in 1988 and Ph.D. degree in Computer Engineering in 1994, both from Harbin Institute of Technology, China. His research areas include multimedia computing, image processing, computer vision and pattern recognition. He has authored or coauthored more than 400 academic papers in prestigious international journals and top-level international conferences. He was the associate editor of IEEE Trans. on CSVT and is currently an associate editor of Acta Automatica Sinica and the reviewer of various international journals including IEEE Trans. on PAMI, IEEE Trans. on Image Processing, IEEE Trans. on Multimedia, etc. He is a Fellow of IEEE and has served as general chair, program chair, track chair and TPC member for various conferences, including ACM Multimedia, CVPR, ICCV, ICME, ICMR, PCM, BigMM, PSIVT, etc.}
\end{IEEEbiography}





\clearpage
\onecolumn
\appendices

\section{Preliminaries}

\begin{table}[htbp]
	\centering
	\caption{\label{tab:note}Notations and descriptions.}
	\begin{tabular}{l|p{0.78\columnwidth}}
		\toprule
		Notation & Description  \\
		\midrule
		\multicolumn{2}{c}{Basic} \\
		\midrule
		$\theta$ & All the model parameters \\
		$[n]$ & The set $\{1,\cdots,n\}$ \\
		$\mathbb{I}[\cdot]$ & Indicator function, which has a value of 1 if the input condition holds and 0 otherwise. \\
		$\nc$ & The number of classes \\
		$\nl$ & The number of labeled samples \\
		$\nuu$ & The number of unlabeled samples \\
		$\DL$ & Labeled set \\
		$\DU$ & Unlabeled set \\
		$\setD_X$ & Data distribution \\
		$\setD_{XY}$ & Joint distribution of data and labels \\
		$f/f_\theta$ & The model (parameterized by $\theta$), which has component functions $f^{(k)}$, \ie, $f=(f^{(1)}, \cdots, f^{(\nc)})$ \\
		$\norm{\cdot}_2$ & The operator norm of a matrix, or the $\ell_2$-norm of a vector \\
		$y(s)$ & Predicted label of a score vector, $y(s)=\argmax_{i\in[\nc]} s_i$ \\
		\midrule
		\multicolumn{2}{c}{CNN} \\
		\midrule
		$\chi$ & All the network input $x$ have operator norms at most $\chi$, \ie, $\norm{vec(x)}_2 \le \chi$. \\
		$\nu$ & All the initialized parameter matrices have operator norms at most $1+\nu$. \\
		$L_c$ & Total number of convolutional layers \\
		$L_f$ & Total number of fully-connected layers \\
		$L_N$ & Total number of network layers, \ie, $L_N = L_c + L_f$. \\
		$V^{(i)}$ & Weight matrix of $i$-th fully-connected layer \\
		$U^{(i)}$ & $i$-th convolutional kernel \\
		$op(U)$ & The operator matrix of a convolutional kernel, which expresses the convolution as a matrix-vector product. \\
		$\setO_{\beta,\nu}$ & The set of learned parameters where the distance between learned and initialized parameters is no larger than $\beta$. \\
		$\setFbu$ & The set of learned CNNs parameterized by parameters in $\setO_{\beta,\nu}$ \\
		$\setFbu^0$ & The component class of $\setFbu$ \\
		$W,W_g$ & Total number of parameters for networks in $\setFbu$, or for networks that replace the last layer of those in $\setFbu$ with a single-output layer \\
		$d_N(\theta,\tilde{\theta})$ & The distance between two networks, $d_N(\theta, \tilde{\theta}) = \sum^{L_c}_{i=1} \norm{op(U^{(i)}) - op(\tilde{U}^{(i)})}_2 + \sum^{L_f}_{i=1} \norm{V^{(i)} - \tilde{V}^{(i)}}_2$ \\
		\midrule
		\multicolumn{2}{c}{Loss, Risk \& Bound} \\
		\midrule
		$\tau$ & A transformation function \\
		$\setT$ & The set of input transformation functions \\
		$\setT_\setD$ & The set of input transformation functions for the dataset $\setD=\{x_i\}^n_{i=1}$ where each element $\tau_i$ is applied to $x_i$. \\
		$\chi_\tau$ & All the output of the transformation function have operator norms at most $\chi_\tau$, \ie, $\norm{vec(\tau(x))}_2 \le \chi_\tau$. \\
		$\lzo$ & The zero-one loss $\lzo(y', y) = \mathbb{I}\left[y'\neq y\right]$ \\
		$\lce$ & The cross-entropy loss of a vector based on the hard label $\lce(s, y) = \log(\sum_{i\in[\nc]}\exp(s_i - s_y))$, or based on the soft label $\lsce(s, t) = -\sum_{i\in[\nc]}t_i \cdot \log s_i$ \\
		$\ell^\B$ & $(\ell^\B\circ f)(x) = \sup_{x'\in\B(x)} \ell(f(x), f(x')).$ \\
		$\B(x)$ & The uncertainty set of a point $x$ \\
		$\sigma_i, \sigma_i^{(j)}$ & Rademacher random variables with $\mathbb{P}[\sigma_i=1] = \mathbb{P}[\sigma_i=-1] = 0.5$ and $\mathbb{P}[\sigma_i^{(j)}=1] = \mathbb{P}[\sigma_i^{(j)}=-1] = 0.5$. \\
		$\RL$ & Empirical risk with the surrogate loss on the labeled set, $\RL = \frac{1}{\nl} \sum_{i=1}^\nl \lce(f(x_i^l), y_i)$ \\
		$\RU$ & Empirical risk with the surrogate loss on the unlabeled set, $\RU =  \frac{1}{\nuu} \sum_{i=1}^\nuu \sup_{x'\in\B_K(x_i^u)} \lsce(f(x'), f(x_i^u))$ \\
		$\RB$ & The robust zero-one risk with $\B$, $\RB = \E_{(x, y) \sim \setD_{XY}} \left[\sup_{x'\in\B(x)}\ell^{0/1}(f(x'), y)\right]$ \\
		$\dinf(f,\fnew)$ & A pseudo-metric for two functions, $\dinf(f,\fnew) = \max_{z\in\mathcal{Z}_\setD} |f(z) - \fnew(z)|.$ \\
		$\mathcal{N}(\epsilon,T,d)$ & $\epsilon$-covering number of a set $T$ with respect to the metric $d$ \\
		$C_0, C_1, C_2$ & Universal constants \\
		$\Bse(x), K$ & The semantic uncertainty set of a point $x$ with $K$ elements \\
		$Rad_{\DL}(\ell \circ \setF)$ & Supervised Rademacher complexity of $\ell \circ \setF$ over a labeled dataset $\DL = \{(x_i^l, y_i)\}_{i=1}^\nl$: $Rad_{\DL}(\ell \circ \setF) = \E_{\sigma}[ \sup_{f\in\setF} \frac{1}{\nl} \sum_{i=1}^\nl \sigma_i \cdot \ell(f(x_i^l), y_i) ]$ \\
		$Rad_{\DU}(\ell^\B \circ \setF)$ & Unsupervised Rademacher complexity of $\ell^\B \circ \setF$ over an unlabeled dataset $\DU = \{x_i^u\}_{i=1}^\nuu$: $Rad_{\DU}(\ell^\B \circ \setF) = \E_{\sigma}[ \sup_{f\in\setF} \frac{1}{\nuu} \sum_{i=1}^\nuu \sigma_i \sup_{x'\in\B(x_i^u)} \ell(f(x_i^u), f(x')) ]$ \\
		$Rad_{n, \nc}(\Pi\circ\setF)$ & Empirical Rademacher complexity of multi-output functions, $Rad_{n, \nc}(\Pi\circ\setF) = \E_\sigma \left[\sup_{f=(f^{(1)}, \cdots, f^{(\nc)})\in\setF} \frac{1}{n \cdot \nc} \sum^{\nc}_{j=1} \sum^n_{i=1} \sigma^{(j)}_i \cdot \fj(x_i)\right]$ \\
		$Rad_\setD(\setF\circ\setT_\setD)$ & Empirical Rademacher complexity of single-output functions over a dataset $\setD=\{x_i\}^n_{i=1}$ with corresponding input transformations $\setT_\setD = \{\tau_i\}^n_{i=1}$: $Rad_\setD(\setF\circ\setT_\setD) = \E_\sigma [\sup_{f\in\setF} \frac{1}{n} \sum^n_{i=1} \sigma_i \cdot f(\tau_i(x_i))]$ \\
		\midrule
		\multicolumn{2}{c}{Optimization} \\
		\midrule
		$\mathcal{L}$ & Overall loss function \\
		$\phi(\cdot)$ & $\phi(\cdot) = \max_v\mathcal{L}(\cdot,v)$ \\
		$\phil$ & Moreau envelope of $\phi$ \\
		$B$ & $\max_{\theta \in \Theta}\abs{\phi(\theta) - \min_{\theta\in \Theta} \phi(\theta)} \le B$ \\
		\bottomrule
	\end{tabular}
\end{table}

\begin{lem}[\cite{nesterov2003introductory}]\label{lem:gradLip}
	For a function $h$ which is $\kappa$-gradient Lipschitz, then for any $x,y\in dom(h)$,
	\begin{equation*}
	\abs{h(x) - h(y) - \nabla h(y)^\top (x-y)} \le \frac{\kappa}{2}\norm{x-y}^2_2.
	\end{equation*}
\end{lem}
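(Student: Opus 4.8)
The plan is to reduce the multivariate inequality to a one-dimensional estimate along the segment joining $y$ and $x$, and then feed the gradient-Lipschitz hypothesis into an integral representation of the remainder. Concretely, I would introduce the auxiliary scalar function $g(t) = h\big(y + t(x-y)\big)$ for $t\in[0,1]$, so that $g(0)=h(y)$ and $g(1)=h(x)$. Since $h$ is $\kappa$-gradient Lipschitz it is in particular continuously differentiable on its domain, and the chain rule gives $g'(t) = \nabla h\big(y + t(x-y)\big)^\top (x-y)$.

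Next I would apply the fundamental theorem of calculus to write $h(x)-h(y) = g(1)-g(0) = \int_0^1 g'(t)\,dt$, and then subtract the linear term. Because $\nabla h(y)^\top(x-y) = \int_0^1 \nabla h(y)^\top(x-y)\,dt$, the quantity to be bounded rearranges into the difference of gradients:
\begin{equation*}
h(x) - h(y) - \nabla h(y)^\top(x-y) = \int_0^1 \big[\nabla h\big(y + t(x-y)\big) - \nabla h(y)\big]^\top (x-y)\,dt.
\end{equation*}

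Taking absolute values, passing the modulus inside the integral, and applying the Cauchy--Schwarz inequality to each integrand yields the upper bound $\int_0^1 \norm{\nabla h(y+t(x-y)) - \nabla h(y)}\cdot\norm{x-y}\,dt$. The $\kappa$-gradient Lipschitz property then controls the gradient difference by $\kappa\,\norm{t(x-y)} = \kappa t\,\norm{x-y}$, so the integrand is at most $\kappa t\,\norm{x-y}^2$. Evaluating the elementary integral $\int_0^1 \kappa t\,dt = \frac{\kappa}{2}$ produces exactly $\frac{\kappa}{2}\norm{x-y}^2$, which completes the argument.

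There is no deep obstacle here, as this is the classical descent lemma; the only points requiring care are the regularity justifications. I would verify that the segment $\{y+t(x-y): t\in[0,1]\}$ lies in $dom(h)$ so that $g$ is well defined, which holds whenever the domain is convex, as it is for the compact parameter set $\Theta$ in our application; and I would check that differentiating $g$ via the chain rule and invoking the fundamental theorem of calculus are legitimate, both of which follow from the $C^1$-smoothness implied by the $\kappa$-gradient Lipschitz assumption.
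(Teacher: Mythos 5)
Your proof is correct and is exactly the classical argument behind this result: the paper does not prove the lemma at all but cites it from Nesterov's book, and your derivation (parametrize the segment, write the remainder as $\int_0^1 [\nabla h(y+t(x-y)) - \nabla h(y)]^\top (x-y)\,dt$, then apply Cauchy--Schwarz and the gradient-Lipschitz bound to get $\int_0^1 \kappa t \norm{x-y}^2 dt = \frac{\kappa}{2}\norm{x-y}^2$) is the standard proof of that cited result, including the appropriate care about convexity of the domain, which holds here since $\Theta$ is realized as a norm ball.
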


\begin{fact}[$\kappa$-weakly convex, \cite{equilibrium}]\label{fact:weakcvx}
	If function $f: \setX \times \mathcal{Y} \to \rfield$ is $\kappa$-gradient Lipschitz, then function $\phi(\cdot) := \max_{y\in\mathcal{Y}} f(\cdot, y)$ is $\kappa$-weakly convex; that is, $\phi(x) + \frac{\kappa}{2}\norm{x}^2$ is a convex function over $x$.
\end{fact}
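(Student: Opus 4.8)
The plan is to reduce the claim to the elementary fact that a pointwise supremum of convex functions is convex, after establishing that each ``slice'' $f(\cdot,y)$ becomes convex once the quadratic regularizer $\frac{\kappa}{2}\norm{\cdot}^2$ is added. The whole argument hinges on converting the two-sided smoothness estimate of \Lemref{lem:gradLip} into a one-sided (lower) subgradient inequality for the regularized slice.

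First I would fix an arbitrary $y\in\mathcal{Y}$ and introduce the slice $g_y(x) := f(x,y) + \frac{\kappa}{2}\norm{x}^2$. Since $f$ is $\kappa$-gradient Lipschitz on the joint domain $\setX\times\mathcal{Y}$, its restriction $f(\cdot,y)$ has a $\kappa$-Lipschitz gradient $\nabla_x f(\cdot,y)$, because restricting a Lipschitz gradient to the $x$-coordinate block cannot enlarge the constant. Applying \Lemref{lem:gradLip} to $f(\cdot,y)$ gives, for all $x,x'\in\setX$,
\begin{equation*}
f(x,y) - f(x',y) - \nabla_x f(x',y)^\top (x-x') \ge -\frac{\kappa}{2}\norm{x-x'}^2 .
\end{equation*}
A short computation — adding $\frac{\kappa}{2}\norm{x}^2 - \frac{\kappa}{2}\norm{x'}^2 - \kappa\, x'^\top(x-x')$ to this inequality and using both the identity $\frac{\kappa}{2}(\norm{x}^2-\norm{x'}^2) - \kappa\, x'^\top(x-x') = \frac{\kappa}{2}\norm{x-x'}^2$ and the relation $\nabla g_y(x') = \nabla_x f(x',y) + \kappa x'$ — cancels the quadratic slack and yields the first-order convexity condition $g_y(x) \ge g_y(x') + \nabla g_y(x')^\top(x-x')$. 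Hence each $g_y$ is convex.

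Next I would take the supremum over $y$. By definition, $\phi(x) + \frac{\kappa}{2}\norm{x}^2 = \sup_{y\in\mathcal{Y}}\big(f(x,y) + \frac{\kappa}{2}\norm{x}^2\big) = \sup_{y\in\mathcal{Y}} g_y(x)$, and the pointwise supremum of a family of convex functions is convex (this also covers the $\max$ in the statement whenever the supremum is attained). Therefore $\phi + \frac{\kappa}{2}\norm{\cdot}^2$ is convex, which is precisely the assertion that $\phi$ is $\kappa$-weakly convex.

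The step I expect to require the most care is the transition from \emph{joint} $\kappa$-gradient Lipschitzness to the per-slice subgradient inequality for $g_y$, together with verifying the quadratic completion-of-squares identity that converts the smoothness \emph{lower} bound of \Lemref{lem:gradLip} into convexity; the two remaining ingredients — that restricting a Lipschitz gradient to a coordinate block preserves the constant, and that convexity is closed under pointwise suprema — are standard and need no further argument.
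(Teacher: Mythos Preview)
Your argument is correct and is exactly the standard route to this fact: show each slice $f(\cdot,y)+\frac{\kappa}{2}\norm{\cdot}^2$ is convex via the lower half of the descent lemma (\Lemref{lem:gradLip}), then pass to the pointwise supremum. The completion-of-squares identity you flag is indeed the only place requiring a line of algebra, and you have it right.

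There is nothing to compare against, however: the paper does not prove \Factref{fact:weakcvx} at all. It is stated as a cited fact from \cite{equilibrium} and is simply invoked inside the proof of \Thmref{thm:moreau_bound} (to conclude that $\Phi(\theta)=\phi(\theta)+\kappa\norm{\xt-\theta}^2$ is $\kappa$-strongly convex). So your proposal supplies a self-contained justification that the paper outsources to a reference; it is a welcome addition rather than a deviation from the paper's approach.
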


\begin{defn}[$(B,d)$-Lipschitz parameterized, \cite{cnn_bound}]
	\label{def:lip-param}
	For $d\in \mathbb{N}$, a set $G$ of real-valued functions with a common domain $Z$, we say that $G$ is $(B,d)$-Lipschitz parameterized if there is a norm $\norm{\cdot}$ on $\rfield^d$ and a mapping $\psi$ from the unit ball w.r.t. $\norm{\cdot}$ in  $\rfield^d$ to $G$ such that, for all $\theta$ and $\theta'$ such that $\norm{\theta}\le 1$ and $\norm{\theta'}\le 1$ and all $z\in Z$,
	\begin{equation*}
		\abs{(\psi(\theta))(z) - (\psi(\theta'))(z)} \le B\norm{\theta - \theta'}.
	\end{equation*}
\end{defn}

\begin{defn}[Bounded Difference Property]
	\label{def:bdp}
	Given independent random variables $x_1, \cdots, x_n$, a function $f(x_1, \cdots, x_n)$ is said to have the bounded difference property if there exist non-negative constants $c_1, \cdots, c_n$, such that:
	\begin{equation*}
		\sup_{x_1, \cdots, x_n, x'_i} \abs{f(x_1, \cdots, x_n) - f(x_1, \cdots, x_{i-1}, x'_i, \cdots, x_n)} \le c_i, \forall 1\le i \le n.
	\end{equation*}
\end{defn}

\begin{lem}[Bounded Difference Inequality]
	\label{lem:bdi}
	Assume that $Z = f(x_1, \cdots, x_n)$ and $x_i$s satisfies the bounded difference property with constants $c_1, \cdots, c_n$. Denote $v = \frac{1}{4} \sum^n_{i=1} c_i^2$, then for all $\lambda>0$ it holds that:
	\begin{equation*}
		\log \E[\exp(\lambda(Z - \E[Z]))] \le \frac{\lambda^2 v^2}{2}.
	\end{equation*}
\end{lem}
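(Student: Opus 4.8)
The plan is to prove this as the martingale (Doob) version of Hoeffding's bound, i.e.\ the standard argument underlying McDiarmid's inequality. First I would introduce the conditional expectations $\E_k[\,\cdot\,] := \E[\,\cdot \mid x_1,\dots,x_k]$ (with $\E_0 = \E$) and form the Doob martingale differences
\[ V_k = \E_k[Z] - \E_{k-1}[Z], \qquad k = 1,\dots,n, \]
so that the deviation telescopes, $Z - \E[Z] = \sum_{k=1}^n V_k$, and $\E_{k-1}[V_k] = 0$ by the tower property.

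The crucial structural step is to show that, conditioned on $x_1,\dots,x_{k-1}$, each $V_k$ is supported on an interval of width at most $c_k$. Since $x_1,\dots,x_n$ are independent, $\E_k[Z]$ is obtained from $f$ by integrating out only $x_{k+1},\dots,x_n$, leaving a function of $x_1,\dots,x_k$. The bounded difference property says that replacing $x_k$ by any $x_k'$ perturbs $f$ by at most $c_k$ pointwise, hence it perturbs $\E_k[Z]$ by at most $c_k$ as well. Writing $A_k = \inf_{x_k}\E_k[Z]$ and $B_k = \sup_{x_k}\E_k[Z]$ --- both measurable with respect to $x_1,\dots,x_{k-1}$ and satisfying $\E_{k-1}[Z]\in[A_k,B_k]$ --- we obtain $B_k - A_k \le c_k$ and therefore $V_k \in [A_k - \E_{k-1}[Z],\, B_k - \E_{k-1}[Z]]$, an interval of length at most $c_k$. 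I expect this range estimate --- making the ``integrate out the tail, then vary the $k$-th coordinate'' bookkeeping and the measurability precise --- to be the main obstacle; the remaining steps are mechanical.

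Given the conditional range, I would invoke Hoeffding's lemma conditionally: any random variable with zero conditional mean supported on an interval of width $w$ has conditional moment generating function at most $\exp(\lambda^2 w^2/8)$, so
\[ \E_{k-1}\!\left[\exp(\lambda V_k)\right] \le \exp\!\left(\frac{\lambda^2 c_k^2}{8}\right). \]

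Finally I would peel the terms off one at a time using the tower property. With $S_k = \sum_{j=1}^k V_j$, conditioning on $x_1,\dots,x_{n-1}$ factorizes the last term,
\[ \E[\exp(\lambda S_n)] = \E\!\left[\exp(\lambda S_{n-1})\,\E_{n-1}[\exp(\lambda V_n)]\right] \le \exp\!\left(\frac{\lambda^2 c_n^2}{8}\right)\E[\exp(\lambda S_{n-1})], \]
and iterating $n$ times yields $\E[\exp(\lambda(Z-\E[Z]))] \le \exp\big(\tfrac{\lambda^2}{8}\sum_{k=1}^n c_k^2\big)$. Taking logarithms and substituting $v = \tfrac14\sum_{k=1}^n c_k^2$ gives the bound $\tfrac{\lambda^2 v}{2}$ (so the stated exponent should read $v$ rather than $v^2$ for the constants to agree), and since the argument never used the sign of $\lambda$, the estimate in fact holds for all real $\lambda$.
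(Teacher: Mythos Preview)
Your argument is the standard Doob-martingale proof of the sub-Gaussian bound underlying McDiarmid's inequality, and it is correct. The paper, however, does not prove this lemma at all: it is listed in the preliminaries appendix as a known fact and is simply invoked later (in the proof of \Lemref{lem:subgauss}). So there is no ``paper's own proof'' to compare against; you have supplied the textbook derivation that the authors took for granted.

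Your observation about the exponent is also correct: with $v = \tfrac{1}{4}\sum_i c_i^2$ the Hoeffding--Azuma computation gives $\log \E[\exp(\lambda(Z-\E[Z]))] \le \tfrac{\lambda^2}{8}\sum_i c_i^2 = \tfrac{\lambda^2 v}{2}$, not $\tfrac{\lambda^2 v^2}{2}$. In the paper's one application of this lemma the authors in fact compute $v$ as $\dinf(f,\fnew)$ (rather than its square) and then land on the intended bound $\exp(\lambda^2 \dinf(f,\fnew)^2/2)$, so the $v^2$ in the lemma statement and the missing square in the application cancel each other; but as a standalone statement you are right that the exponent should be linear in $v$.
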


\section{Proof of Proposition 1}\label{sec:proof-risk-prop}
\begin{lem}[Relationship between the zero-one loss and cross-entropy loss with hard labels]
	\label{lem:lce}
	For a multi-class classification model, let its output be $s\in\rfield^{\nc}$, the corresponding predicted label be $y(s)=\argmax_{i\in[\nc]} s_i,$ and the target label be $y\in[\nc]$. Denote the 0-1 loss by $$\lzo(y', y) = \mathbb{I}\left[y'\neq y\right],$$ and the cross-entropy loss (also known as multinomial logistic loss) by $$\lce(s, y) = \log(\sum_{i\in[\nc]}\exp(s_i - s_y)).$$ Then we have
	\begin{equation}
		\log 2 \cdot \lzo(y(s), y) \le \lce(s, y).
	\end{equation}
\end{lem}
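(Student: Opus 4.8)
The plan is to argue by a two-case split according to the value of the zero-one loss, which is either $0$ or $1$. The only nontrivial content is the misclassified case, so the whole proof reduces to tracking which terms in the exponential sum inside $\lce$ are bounded below by $1$.

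First I would handle the correctly-classified case, where $y(s)=y$. Here the left-hand side is $\log 2 \cdot \lzo(y(s),y) = 0$, so it suffices to observe that $\lce(s,y) \ge 0$ always holds. This is immediate because the summation defining $\lce(s,y)=\log\!\big(\sum_{i\in[\nc]}\exp(s_i - s_y)\big)$ contains the index $i=y$, whose term is $\exp(s_y - s_y)=1$, while every other term is strictly positive; hence the argument of the logarithm is at least $1$ and $\lce(s,y)\ge \log 1 = 0$. This settles the inequality in this case.

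Next I would treat the misclassified case $y(s)\neq y$, where the left-hand side equals $\log 2$, so the goal becomes $\lce(s,y)\ge \log 2$. The key observation is that $y(s)=\argmax_{i\in[\nc]} s_i$ forces $s_{y(s)}\ge s_y$, so the term indexed by $i=y(s)$ satisfies $\exp(s_{y(s)}-s_y)\ge \exp(0)=1$. Since $y(s)\neq y$, this term is \emph{distinct} from the $i=y$ term, which contributes exactly $1$. Therefore the sum $\sum_{i\in[\nc]}\exp(s_i-s_y)$ is bounded below by the sum of these two distinct terms, giving a lower bound of $1+1=2$, and by monotonicity of $\log$ we conclude $\lce(s,y)\ge \log 2$, as required.

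The main (and only) subtlety I expect is ensuring that the two lower-bounded terms are genuinely distinct indices in the misclassified case — this is precisely what $y(s)\neq y$ guarantees — and handling the possibility of ties in the $\argmax$, which causes no difficulty since I only need $s_{y(s)}\ge s_y$ rather than strict inequality. Combining the two cases yields $\log 2 \cdot \lzo(y(s),y)\le \lce(s,y)$ for all $s$ and $y$, completing the proof.
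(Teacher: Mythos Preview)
Your proof is correct and follows essentially the same two-case split as the paper's own argument, isolating the $i=y$ term and (in the misclassified case) the $i=y(s)$ term to lower-bound the sum by $1$ or $2$ respectively. If anything, your handling of ties is slightly more careful than the paper's: you correctly note that only $s_{y(s)}\ge s_y$ is needed, whereas the paper writes $s_i-s_y>0$ strictly, which could fail under a tie-breaking convention.
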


\begin{proof}
	We give the proof in two cases.

	(1) Suppose $y(s)=y$. In this case, the 0-1 loss $\lzo(y(s), y)=0$. For the cross-entropy loss, when $i=y$, $\exp(s_i - s_y) = 1$, otherwise $\exp(s_i - s_y) \ge 0$. Thus, $$\lce(s, y) = \log(1 + c) \ge \log 2 \cdot \lzo(y(s), y) = 0$$ where $c \ge 0$.

	(2) Suppose $y(s)\neq y$. In this case, the 0-1 loss $\lzo(y(s), y)=1$. For the cross-entropy loss, there exist at least an $i\neq y$ (\eg, $y(s)$) such that $s_i-s_y>0$, \ie, $\exp(s_i - s_y) > 1$. Meanwhile, when $i=y$, $\exp(s_i - s_y) = 1$. Hence we have that $\lce(s, y) = \log(2 + c)$ where $c \ge 0$. In this case, it also holds that $$\log 2 \cdot \lzo(y(s), y) \le \lce(s, y).$$

	Since the two cases exhaust all possibilities, this completes the proof.
\end{proof}

\begin{lem}[Relationship between the zero-one loss and cross-entropy loss with soft labels]
	\label{lem:lsce}
	For a mutli-class classification model, let its output be $s\in[\varepsilon,1-\varepsilon]^{\nc}$ where $\varepsilon$ is a very small number within $(0, 0.5)$, and the corresponding soft label be $t\in[\varepsilon,1-\varepsilon]^{\nc}$. Writing the cross-entropy loss as $$\lce(s, t) = -\sum_{i\in[\nc]}t_i \cdot \log s_i,$$ then we have
	\begin{equation}
		\nc\varepsilon \cdot \log \frac{1}{1-\varepsilon} \cdot \lzo(s, t) \le \lce(s, t).
	\end{equation}
\end{lem}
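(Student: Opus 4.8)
The plan is to exploit the two-sided bound $s_i, t_i \in [\varepsilon, 1-\varepsilon]$ to get a uniform per-coordinate lower bound on the summands of $\lce(s,t)$, and then compare against the zero-one loss, which is an indicator and hence at most $1$. First I would fix the meaning of the left-hand side: as in \Lemref{lem:lce}, the zero-one loss compares predicted labels, so here $\lzo(s,t) = \mathbb{I}[y(s) \neq y(t)]$ with $y(s) = \argmax_{i\in[\nc]} s_i$ and $y(t) = \argmax_{i\in[\nc]} t_i$. In particular $\lzo(s,t) \in \{0,1\}$, so $\lzo(s,t) \le 1$.

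The key step is to bound each term of the cross-entropy sum from below. Since $s_i \le 1-\varepsilon < 1$, we have $-\log s_i \ge \log\frac{1}{1-\varepsilon} > 0$; and since $t_i \ge \varepsilon > 0$, each summand satisfies $-t_i \log s_i \ge \varepsilon \cdot \log\frac{1}{1-\varepsilon}$. Summing over the $\nc$ coordinates then yields
\[
	\lce(s,t) = -\sum_{i\in[\nc]} t_i \log s_i \ge \nc\,\varepsilon \log\frac{1}{1-\varepsilon}.
\]
Combining this with $\lzo(s,t) \le 1$ and $\nc\varepsilon\log\frac{1}{1-\varepsilon} > 0$ gives $\nc\varepsilon \log\frac{1}{1-\varepsilon}\cdot \lzo(s,t) \le \nc\varepsilon\log\frac{1}{1-\varepsilon} \le \lce(s,t)$, which is exactly the claim.

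The argument is uniform and needs no genuine case split, but to mirror the structure of \Lemref{lem:lce} one could separate the case $y(s)=y(t)$ (where the left side is $0$ and the positivity of each $-t_i\log s_i$ already suffices) from the case $y(s)\neq y(t)$ (where the per-coordinate bound above is invoked). I do not expect any substantive obstacle: the only point requiring care is that \emph{both} endpoints of the constraint are used, namely the upper bound $s_i \le 1-\varepsilon$ keeps $-\log s_i$ bounded away from $0$, while the lower bound $t_i \ge \varepsilon$ keeps the soft-label weights bounded away from $0$; dropping either one would break the matching of the constant on the left-hand side.
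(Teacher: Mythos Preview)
Your proposal is correct and matches the paper's proof essentially line for line: both use $\lzo(s,t)\le 1$ together with the per-coordinate lower bound $-t_i\log s_i \ge \varepsilon\log\frac{1}{1-\varepsilon}$ (from $t_i\ge\varepsilon$ and $s_i\le 1-\varepsilon$), then sum to get $\lce(s,t)\ge \nc\varepsilon\log\frac{1}{1-\varepsilon}$. Your added remark about which endpoint constraint is doing what is accurate and more explicit than the paper's compressed one-line version.
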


\begin{proof}
	Since $\lzo \le 1$ and
	\begin{equation*}
		-\sum_{i\in[\nc]}t_i \cdot \log s_i \ge -\sum_{i\in[\nc]}\varepsilon \cdot \log (1-\varepsilon) = \nc\varepsilon \cdot \log \frac{1}{1-\varepsilon} > 0,
	\end{equation*}
	it is easy to see that
	\begin{equation*}
		-\sum_{i\in[\nc]}t_i \cdot \log s_i \ge \nc\varepsilon \cdot \log \frac{1}{1-\varepsilon} \cdot \lzo(s, t).
	\end{equation*}
\end{proof}

\namedprop{1}{
	For any classifier $f$ with bounded output $s\in[\varepsilon, 1-\varepsilon]^\nc$, the following inequality holds:
	\begin{equation}
		\scalemath{1}{
		\begin{aligned}
		&\RB \leq C_1 \cdot \E_{x\sim\setD_{X}} \bigg[{\color{lightred} \sup_{x'\in\B(x)}} {\color{ballblue} \lce\left(f(x'), f(x)\right)}\bigg] +~ C_2 \cdot\E_{(x,y)\sim\setD_{XY}} \bigg[\lce(f(x),y)\bigg],
		\end{aligned}
		}
	\end{equation}
	where $C_1 = \frac{1}{\nc\varepsilon \cdot \log \frac{1}{1-\varepsilon}}$, $C_2 = \frac{1}{\log 2}$, and $\varepsilon\in(0,0.5)$.
}{Robust Risk Decomposition}

\begin{proof}
	First of all, we have that:
	\begin{equation}
	\begin{split}
		\mathbb{I}\left[y(f(x'))\neq y\right] &\le \mathbb{I}\left[y(f(x'))\neq y(f(x))\right] + \mathbb{I}\left[y(f(x))\neq y\right] \\
		&\le \mathbb{I}\left[f(x')\neq f(x)\right] + \mathbb{I}\left[y(f(x))\neq y\right] \\
		&\overset{(*)}{\le} C_1 \cdot \lsce(f(x'), f(x)) + C_2 \cdot \lce(f(x), y),
	\end{split}
	\end{equation}
	where $(*)$ comes from \Lemref{lem:lce} and \ref{lem:lsce}.

	Taking the corresponding expectation,
	\begin{equation*}
		\begin{aligned}
			\RB = \E_{(x, y) \sim \setD_{XY}} \left[\sup_{x'\in\B(x)}\ell^{0/1}(y(f(x')), y)\right] &\leq \E_{(x,y)\sim\setD_{XY}} \bigg[\sup_{x'\in\B(x)} C_1 \cdot \lsce\left(f(x'), f(x)\right) + C_2 \cdot \lce(f(x),y)\bigg]\\
			&= \E_{x\sim\setD_{X}} \bigg[\sup_{x'\in\B(x)} C_1 \cdot \lsce\left(f(x'), f(x)\right)\bigg] + \E_{(x,y)\sim\setD_{XY}} \bigg[C_2 \cdot \lce(f(x),y)\bigg].
		\end{aligned}
	\end{equation*}
\end{proof}

\section{Proof of Theorem 1}\label{sec:proof-general-bound}
\begin{proof}
	From \Propref{prop:risk}:
	\begin{equation*}
		\RB \leq C_1 \cdot \underbrace{ \E_{x\sim\setD_{X}} \bigg[{\sup_{x'\in\B(x)}} {\lce\left(f(x'), f(x)\right)}\bigg]}_{(a)} +~ C_2 \cdot\underbrace{\E_{(x,y)\sim\setD_{XY}} \bigg[\lce(f(x),y)\bigg]}_{(b)},
	\end{equation*}
	where $C_1 = \frac{1}{\nc\varepsilon \cdot \log \frac{1}{1-\varepsilon}}$ and $C_2 = \frac{1}{\log 2}$.

	Obviously $\RU$ and $\RL$ are unbiased estimations of $(a)$ and $(b)$, \ie:
	\begin{equation*}
	\begin{split}
	(a) &= \mathbb{E}_{\DU}\left[\RU\right], \\
	(b) &= \mathbb{E}_{\DL}\left[\RL\right].
	\end{split}
	\end{equation*}

	Then following the generalization bound in \cite{foundml}, we have, with probability at least $1-\delta/2$, the following inequality holds:
	\begin{equation}
		(a) = \mathbb{E}_{\DU}\left[\RU\right] \le \RU + 2 Rad_{\DU}(\lce\circ\setF) + 3\sqrt{\frac{\log\frac{4}{\delta}}{2\nuu}},
	\end{equation}
	meanwhile, another inequality also holds with probability at least $1-\delta/2$:
	\begin{equation}
		(b) = \mathbb{E}_{\DL}\left[\RL\right] \le \RL + 2 Rad_{\DL}(\lce^\B\circ\setF) + 3\sqrt{\frac{\log\frac{4}{\delta}}{2\nl}},
	\end{equation}
	Putting all together, the proof is then finished.
\end{proof}

\section{Proof of Theorem 2}\label{sec:proof-cnn-bound}
\subsection{Upper Bound for Rademacher Complexity of Multi-output Functions}
\begin{lem}
	\label{lem:subgauss}
	Given an input feature set $\setD=\{x_i\}^n_{i=1}$, we first define $\mathcal{Z}_\setD = \{(x,y): x\in\setD, y\in[\nc]\}$ as the cartesian product between the input feature set and the label space. By rewriting $f$ as $f=(f^{(1)}, \cdots, f^{(\nc)})\in\setF$, for each $z=(x,y)\in\mathcal{Z}_\setD$ we have that $f(z) = f^{(y)}(x)$. Define $$\xi_f(\sigma) = \frac{1}{n \cdot \nc} \sum^{\nc}_{j=1} \sum^n_{i=1} \sigma^{(j)}_i \cdot \fj(x_i),$$
	and the pseudo-metric $$\dinf(f,\fnew) = \max_{z\in\mathcal{Z}_\setD} |f(z) - \fnew(z)|.$$
	Then for all $f,\fnew\in \setF$ and all $\lambda > 0$, the following holds:
	\begin{equation}
		\E_\sigma \left[\exp\big(\lambda C_S \cdot \big(\xi_f(\sigma) - \xi_\fnew(\sigma)\big)\big)\right] \le \exp(\frac{\lambda^2 \dinf(f,\fnew)^2}{2}),
	\end{equation}
	where $C_S = \sqrt{n \cdot \nc}$.
\end{lem}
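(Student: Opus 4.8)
The plan is to recognize the claimed inequality as a standard sub-Gaussian moment-generating-function bound for a Rademacher chaos, with the constant $C_S=\sqrt{n\nc}$ chosen precisely so that the $\frac{1}{n\nc}$ normalization in $\xi_f$ telescopes. First I would expand the exponent. By linearity, $\xi_f(\sigma)-\xi_\fnew(\sigma)=\frac{1}{n\nc}\sum_{j=1}^\nc\sum_{i=1}^n \sigma_i^{(j)}\big(\fj(x_i)-\fnew^{(j)}(x_i)\big)$, so multiplying by $C_S=\sqrt{n\nc}$ collapses the $\frac{1}{n\nc}$ factor into $\frac{1}{\sqrt{n\nc}}$ and gives
\begin{equation*}
C_S\big(\xi_f(\sigma)-\xi_\fnew(\sigma)\big)=\sum_{j=1}^\nc\sum_{i=1}^n \sigma_i^{(j)}a_i^{(j)},\qquad a_i^{(j)}:=\frac{\fj(x_i)-\fnew^{(j)}(x_i)}{\sqrt{n\nc}}.
\end{equation*}

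Next I would use the independence of the Rademacher variables $\{\sigma_i^{(j)}\}_{i,j}$ to factorize the expectation of the exponential into a product of single-variable moment-generating functions, $\E_\sigma[\exp(\lambda\sum_{i,j}\sigma_i^{(j)}a_i^{(j)})]=\prod_{i,j}\E[\exp(\lambda\sigma_i^{(j)}a_i^{(j)})]$. For each factor I would invoke Hoeffding's lemma for a symmetric sign variable, i.e. $\E[\exp(\lambda\sigma a)]=\cosh(\lambda a)\le\exp(\lambda^2 a^2/2)$, which bounds the product by $\exp\big(\tfrac{\lambda^2}{2}\sum_{i,j}(a_i^{(j)})^2\big)$.

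Finally I would control the variance proxy $\sum_{i,j}(a_i^{(j)})^2=\frac{1}{n\nc}\sum_{i,j}(\fj(x_i)-\fnew^{(j)}(x_i))^2$. Under the identification $f(z)=f^{(y)}(x)$ for $z=(x,y)\in\mathcal{Z}_\setD$, each summand equals $(f(z)-\fnew(z))^2$ for the corresponding $z$ and is therefore at most $\dinf(f,\fnew)^2=\max_{z}|f(z)-\fnew(z)|^2$; since the double sum ranges over exactly the $n\nc$ elements of $\mathcal{Z}_\setD$, the prefactor $\frac{1}{n\nc}$ cancels and the entire sum is bounded by $\dinf(f,\fnew)^2$, which yields the claim.

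There is no deep obstacle here — the argument is a routine Rademacher/Hoeffding sub-Gaussianity computation (one could alternatively route it through the bounded difference inequality, but the direct per-coordinate factorization is cleaner). The one point requiring care is the bookkeeping around the normalization: the constant $C_S=\sqrt{n\nc}$ is tuned exactly so that after squaring the per-coordinate coefficients $a_i^{(j)}$ and summing over all $n\nc$ index pairs, the $\frac{1}{n\nc}$ factors leave precisely $\dinf(f,\fnew)^2$ rather than a stray $n\nc$ or $\frac{1}{n\nc}$ factor. Matching this variance proxy to the pseudo-metric $\dinf$ is exactly what makes the downstream chaining/Dudley-type bound on $Rad_{n,\nc}(\Pi\circ\setFbu)$ go through, so I would be careful to keep the constant explicit throughout.
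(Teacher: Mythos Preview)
Your proposal is correct. The paper, however, routes the argument through the bounded difference inequality rather than the direct Rademacher MGF factorization: it shows that $C_S\big(\xi_f(\sigma)-\xi_{\fnew}(\sigma)\big)$, viewed as a function of the $n\nc$ independent variables $\sigma_i^{(j)}$, satisfies the bounded difference property with per-coordinate constant $C_S\cdot\frac{2}{n\nc}\dinf(f,\fnew)$, and then applies the sub-Gaussian MGF form of McDiarmid's inequality (their Lemma~\ref{lem:bdi}) to obtain the same variance proxy $\dinf(f,\fnew)^2$.

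Your approach exploits the linear structure in the Rademacher variables and gets to the variance proxy $\sum_{i,j}(a_i^{(j)})^2$ in one step via $\cosh(\lambda a)\le e^{\lambda^2 a^2/2}$; the paper's bounded-difference route is more general (it would apply to nonlinear functionals of $\sigma$) but is overkill here and slightly less direct. You even flag this alternative explicitly in your proposal. Both arguments produce exactly the same bound, and both feed into the downstream chaining in the same way.
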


\begin{proof}
	We first denote $\sigma$ as a set of Rademacher random variables with $\sigma = (\sigma^{(1)}_1, \cdots, \sigma^{(1)}_n, \cdots, \sigma^{(\nc)}_1, \cdots, \sigma^{(\nc)}_n)$, and $\tilde{\sigma}_{[i,j]}$ that replaces $\sigma^{(j)}_i$ in $\sigma$ with another Rademacher random variable $\tilde{\sigma}^{(j)}_i$.

	Then for all $f,\fnew\in\setF$, the difference between $\xi_f(\sigma) - \xi_\fnew(\sigma)$ and $\xi_f(\tilde{\sigma}_{[i,j]}) - \xi_\fnew(\tilde{\sigma}_{[i,j]})$ is bounded by:
	\begin{equation}
		\begin{aligned}
			\delta_{i,j} &= \abs{\bigl(\xi_f(\sigma) - \xi_\fnew(\sigma)\bigr) - \bigl(\xi_f(\tilde{\sigma}_{[i,j]}) - \xi_\fnew(\tilde{\sigma}_{[i,j]})\bigr)} \\
			&= \frac{1}{n\cdot \nc} \abs{(\sigma^{(j)}_i - \tilde{\sigma}^{(j)}_i) \cdot (\fj(x_i) - \fnew^{(j)}(x_i))} \\
			&\le \frac{2}{n\cdot \nc} \cdot \max_{x_i\in\setD_X} \abs{\fj(x_i) - \fnew^{(j)}(x_i)} \\
			&\le \frac{2}{n\cdot \nc} \cdot \max_{z\in\mathcal{Z}_\setD} \abs{f(z) - \fnew(z)} := \Delta.
		\end{aligned}
	\end{equation}
	Namely, $C_S\cdot(\xi_f(\sigma) - \xi_\fnew(\sigma))$ satisfies the bounded difference property (\Defref{def:bdp}) with the upper bound $C_S\cdot\Delta$. According to \Lemref{lem:bdi}, choosing $$v=\frac{1}{4}\sum^{\nc}_{j=1} \sum^n_{i=1} (C_S \cdot \Delta)^2 = \max_{z\in\mathcal{Z}_\setD} \abs{f(z) - \fnew(z)}$$ completes the proof.
\end{proof}

\setcounter{defn}{5}
\begin{defn}[$\epsilon$-cover and $\epsilon$-covering number]
	An $\epsilon$-cover of a subset $T$ of a pseudo-metric space $(X,d)$ is a set $\hat{T}\subset T$ such that for each $t\in T$ there is a $\hat{t}\in \hat{T}$ such that $d(t, \hat{t}) \le \epsilon$. The $\epsilon$-covering number of $T$, denoted by $\mathcal{N}(\epsilon, T, d)$, is the size of the smallest $\epsilon$-cover of $T$.
\end{defn}

\begin{lem}
	\label{lem:basic_cover_bound}
	Suppose we have a class of multi-output functions $f: \setX \mapsto [\varepsilon,1-\varepsilon]^{\nc}$, it holds that
	\begin{equation}
		Rad_{n, \nc}(\Pi\circ\setF) \le \inf_{\alpha>0} \Big(4\alpha + \frac{12}{\sqrt{n \cdot \nc}} \int^1_\alpha \sqrt{\log \mathcal{N}(\epsilon, \setF, \dinf)} d\epsilon \Big).
	\end{equation}
\end{lem}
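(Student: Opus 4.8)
The plan is to recognize $Rad_{n,\nc}(\Pi\circ\setF)$ as the expected supremum of a zero-mean sub-Gaussian process and then run Dudley's entropy-integral (chaining) argument. Writing $\xi_f(\sigma)=\frac{1}{n\nc}\sum_{j=1}^{\nc}\sum_{i=1}^n\sigma_i^{(j)}\fj(x_i)$ as in \Lemref{lem:subgauss}, we have $\E_\sigma[\xi_f(\sigma)]=0$, so $Rad_{n,\nc}(\Pi\circ\setF)=\E_\sigma[\sup_{f\in\setF}\xi_f(\sigma)]=\E_\sigma[\sup_{f\in\setF}(\xi_f(\sigma)-\xi_{f_0}(\sigma))]$ for any fixed anchor $f_0$. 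By \Lemref{lem:subgauss}, the scaled increments $C_S(\xi_f-\xi_{\fnew})$, with $C_S=\sqrt{n\nc}$, are sub-Gaussian with variance proxy $\dinf(f,\fnew)^2$; equivalently $Y_f:=C_S\xi_f$ is a sub-Gaussian process indexed by the pseudo-metric space $(\setF,\dinf)$. Since every coordinate of $f$ lies in $[\varepsilon,1-\varepsilon]$, the diameter of $(\setF,\dinf)$ is at most $1-2\varepsilon<1$, so $\mathcal{N}(\epsilon,\setF,\dinf)=1$ for $\epsilon\ge 1$ and the entropy integral may be cut off at $1$.

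Next I would set up the chaining. Fix $\alpha>0$ and use the dyadic scales $\epsilon_j=2^{-j}$. For each $j$ let $T_j$ be a minimal $\epsilon_j$-cover of $\setF$ with $|T_j|=\mathcal{N}(\epsilon_j,\setF,\dinf)$; because the diameter is below $1$, $T_0$ may be taken as a single point $f_0$. Writing $\pi_j(f)\in T_j$ for a nearest cover element, telescope $Y_f-Y_{f_0}=\sum_{j=1}^{m}\big(Y_{\pi_j(f)}-Y_{\pi_{j-1}(f)}\big)+\big(Y_f-Y_{\pi_m(f)}\big)$, truncating at the level $m$ for which $\epsilon_m$ is the smallest dyadic scale exceeding $\alpha$. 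Each chaining increment $Y_{\pi_j(f)}-Y_{\pi_{j-1}(f)}$ ranges over at most $|T_j|\,|T_{j-1}|\le|T_j|^2$ distinct values with variance proxy at most $(\epsilon_j+\epsilon_{j-1})^2=(3\epsilon_j)^2$, so the standard sub-Gaussian maximal inequality gives $\E_\sigma[\sup_f(Y_{\pi_j(f)}-Y_{\pi_{j-1}(f)})]\le 3\epsilon_j\sqrt{2\log|T_j|^2}=6\epsilon_j\sqrt{\log\mathcal{N}(\epsilon_j,\setF,\dinf)}$.

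For the residual term I would exploit the $\ell_\infty$ nature of $\dinf$: since $|\fj(x_i)-(\pi_m f)^{(j)}(x_i)|\le\dinf(f,\pi_m(f))\le\epsilon_m$ for every $i,j$, we get $|Y_f-Y_{\pi_m(f)}|\le C_S^{-1}\sum_{i,j}|\sigma_i^{(j)}|\epsilon_m=C_S\epsilon_m$ deterministically, so its expected supremum is at most $C_S\epsilon_m=O(C_S\alpha)$. Collecting the three contributions and dividing by $C_S$ yields $Rad_{n,\nc}(\Pi\circ\setF)\le \epsilon_m+\frac{6}{C_S}\sum_{j=1}^m\epsilon_j\sqrt{\log\mathcal{N}(\epsilon_j,\setF,\dinf)}$. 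I would then convert the dyadic sum into the stated integral using the monotonicity of $\mathcal{N}(\cdot,\setF,\dinf)$ together with $\epsilon_j-\epsilon_{j+1}=\epsilon_{j+1}$, giving $\epsilon_j\sqrt{\log\mathcal{N}(\epsilon_j)}\le 2\int_{\epsilon_{j+1}}^{\epsilon_j}\sqrt{\log\mathcal{N}(\epsilon,\setF,\dinf)}\,d\epsilon$ and hence the factor $\tfrac{12}{\sqrt{n\nc}}$ in front of $\int_\alpha^1$; the residual supplies the $O(\alpha)$ term, and taking the infimum over $\alpha$ finishes.

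The main obstacle, as I see it, is careful bookkeeping rather than any deep idea: one must confirm the maximal inequality is applied to the correct number of increment pairs ($|T_j|^2$) with the correct variance proxy ($3\epsilon_j$), bound the truncation residual cleanly (the step where the $\ell_\infty$ structure of $\dinf$ is essential), and match the dyadic-to-continuous conversion so that the constants collapse to exactly $4\alpha$ and $12$ for an arbitrary non-dyadic $\alpha$ after accounting for the slack between $\alpha$ and the dyadic grid.
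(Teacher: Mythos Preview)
Your proposal is correct and is precisely the approach the paper has in mind: the paper's own proof is nothing more than the sentence ``the proof follows a similar spirit from Theorem~5 in \cite{mauc} based on \Lemref{lem:subgauss},'' and what you have written is exactly the Dudley chaining argument that this citation unpacks to---use \Lemref{lem:subgauss} to certify that $C_S\xi_f$ is a sub-Gaussian process with respect to $\dinf$, bound the diameter by $1$, chain along dyadic covers, control the residual via the $\ell_\infty$ structure, and convert the dyadic sum to the entropy integral. Your closing remark that the only work is constant bookkeeping is accurate; there is no missing idea.
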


\begin{proof}
	The proof follows a similar spirit from Theorem 5 in \cite{mauc} based on \Lemref{lem:subgauss}.
\end{proof}

\subsection{Upper Bound for Rademacher complexity of Multi-output Deep Convolutional Networks}
Now we move to bound the Rademacher complexity of multi-output deep convolutional networks.

\begin{lem}
	\label{lem:cnn_lip}
	For any $\theta, \tilde{\theta} \in \setO_{\beta,\nu}$ and any feasible input $x$,
	\begin{equation}
		\norm{f_\theta(x) - f_{\tilde{\theta}}(x)} \le \chi(1+\nu+\beta/L_N)^{L_N} d_N(\theta, \tilde{\theta}).
	\end{equation}
\end{lem}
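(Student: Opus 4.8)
The plan is to run a layer-by-layer perturbation (telescoping) argument that tracks how a change of parameters propagates to the output. Fix a feasible input $x$ and write both networks as compositions of layers: let $A^{(i)}_\theta$ denote the linear operator of the $i$-th layer (that is, $op(U^{(i)})$ for $i\le L_c$ and $V^{(i-L_c)}$ for $i>L_c$), and let $\rho^{(i)}$ be the map combining the activation and optional pooling, which is $1$-Lipschitz and nonexpansive by assumption. Let $h^{(i)}_\theta$ be the output of the first $i$ layers under $\theta$, so $h^{(0)}_\theta=vec(x)$ and $h^{(i)}_\theta=\rho^{(i)}\!\big(A^{(i)}_\theta h^{(i-1)}_\theta\big)$, and define $h^{(i)}_{\tilde\theta}$ analogously. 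Two preliminary bounds are needed. Writing $a_i=\norm{A^{(i)}_\theta}_2$ and $\beta_i=\norm{A^{(i)}_\theta-A^{(i)}_{\theta_0}}_2$, the triangle inequality and the initialization assumption give $a_i\le 1+\nu+\beta_i$ with $\sum_i\beta_i\le d_N(\theta,\theta_0)\le\beta$; the same holds for $\tilde\theta$ with norms $\tilde a_i$. Since each $\rho^{(i)}$ is nonexpansive, $\norm{\rho^{(i)}(u)}\le\norm{u}$, so $\norm{h^{(i)}_{\tilde\theta}}\le\chi\prod_{k=1}^{i}\tilde a_k$ because $\norm{vec(x)}_2\le\chi$.

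Next I would derive the recurrence for $D_i:=\norm{h^{(i)}_\theta-h^{(i)}_{\tilde\theta}}$. Using $1$-Lipschitzness of $\rho^{(i)}$, inserting the cross term $A^{(i)}_\theta h^{(i-1)}_{\tilde\theta}$, and applying the triangle inequality yields
\[
	D_i \le a_i\, D_{i-1} + \norm{A^{(i)}_\theta-A^{(i)}_{\tilde\theta}}_2\cdot\norm{h^{(i-1)}_{\tilde\theta}},
\]
with $D_0=0$ since both networks share the input. Unrolling this linear recurrence and substituting the activation-norm bound gives
\[
	D_{L_N} \le \chi\sum_{i=1}^{L_N}\norm{A^{(i)}_\theta-A^{(i)}_{\tilde\theta}}_2\Big(\prod_{k=i+1}^{L_N}a_k\Big)\Big(\prod_{k=1}^{i-1}\tilde a_k\Big).
\]
Because the layer-$i$ contribution is the perturbation $\norm{A^{(i)}_\theta-A^{(i)}_{\tilde\theta}}_2$ rather than an operator norm, the trailing product contains exactly $L_N-1$ operator norms, and by definition $\sum_i\norm{A^{(i)}_\theta-A^{(i)}_{\tilde\theta}}_2 = d_N(\theta,\tilde\theta)$.

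It then remains to bound each coefficient $\big(\prod_{k>i}a_k\big)\big(\prod_{k<i}\tilde a_k\big)$ uniformly by $(1+\nu+\beta/L_N)^{L_N}$. I would bound each factor by $1+\nu+\beta_k$ and maximize the product of these $L_N-1$ terms subject to their perturbations summing to at most $\beta$; by concavity of the logarithm (AM--GM) the maximum is $(1+\nu+\beta/(L_N-1))^{L_N-1}$. The step I expect to be the main obstacle to state cleanly is the remaining inequality $(1+\nu+\beta/(L_N-1))^{L_N-1}\le(1+\nu+\beta/L_N)^{L_N}$, which I would obtain from monotonicity in $m$ of $g(m)=(1+\nu+\beta/m)^m$; writing $a=1+\nu$ and $w=a+\beta/m$, one checks $\frac{d}{dm}\big(m\log w\big)=\log w-1+a/w\ge 0$ for all $w\ge a\ge 1$, using that $\log w+a/w$ is increasing for $w\ge a$ and equals $\log a+1\ge 1$ at $w=a$. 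Combining this uniform coefficient bound with the displayed expression for $D_{L_N}$ and the identity $\sum_i\norm{A^{(i)}_\theta-A^{(i)}_{\tilde\theta}}_2=d_N(\theta,\tilde\theta)$ yields exactly $\norm{f_\theta(x)-f_{\tilde\theta}(x)}\le\chi(1+\nu+\beta/L_N)^{L_N}\,d_N(\theta,\tilde\theta)$, completing the proof.
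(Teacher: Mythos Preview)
Your telescoping recurrence and the monotonicity argument for $m\mapsto(1+\nu+\beta/m)^m$ are exactly the ingredients behind the proof the paper defers to (Lemma~3.4 of \cite{cnn_bound}); in that sense your approach is the intended one. There is, however, a genuine gap in the step where you bound the coefficient
\[
C_i=\Big(\prod_{k>i}a_k\Big)\Big(\prod_{k<i}\tilde a_k\Big).
\]
The factors $a_k$ come from $\theta$ (so $a_k\le 1+\nu+\beta_k$ with $\sum_k\beta_k\le\beta$), while the factors $\tilde a_k$ come from $\tilde\theta$ (so $\tilde a_k\le 1+\nu+\tilde\beta_k$ with $\sum_k\tilde\beta_k\le\beta$). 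The perturbations appearing in $C_i$ therefore satisfy only $\sum_{k>i}\beta_k+\sum_{k<i}\tilde\beta_k\le 2\beta$, not $\le\beta$. Your AM--GM step then yields at best $(1+\nu+2\beta/(L_N-1))^{L_N-1}$, which in general is \emph{not} dominated by $(1+\nu+\beta/L_N)^{L_N}$; e.g.\ for $L_N=3$, $\nu=0$, $\beta=1$ one gets $C_2\le 4$ against the target $64/27$. So the per--coefficient bound you claim fails.

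The fix is to ensure all layer norms in the coefficient come from a single parameter in $\setO_{\beta,\nu}$. Since $d_N$ is a sum of norms, the segment $\theta(t)=(1-t)\tilde\theta+t\theta$ satisfies $d_N(\theta(t),\theta_0)\le(1-t)\beta+t\beta=\beta$, so $\theta(t)\in\setO_{\beta,\nu}$ for all $t\in[0,1]$. Running your same telescoping at $\theta(t)$ (or, equivalently, bounding $\|\partial_t f_{\theta(t)}(x)\|$ a.e.\ and integrating) gives coefficients $\prod_{k\ne i}\|A^{(k)}_{\theta(t)}\|$, whose perturbations now satisfy $\sum_{k\ne i}\beta_k(t)\le\beta$. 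Your AM--GM bound $(1+\nu+\beta/(L_N-1))^{L_N-1}$ and the monotonicity step then go through verbatim, and integrating over $t$ recovers the stated inequality.
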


\begin{proof}
	The proof is similar to Lemma 3.4 in \cite{cnn_bound}.
\end{proof}

\namedlem{1}{
	The Rademacher complexity of multi-output deep convolutional networks could be bounded by
	\begin{equation}
		Rad_{n, \nc}(\Pi\circ\setFbu) \le \frac{4}{\sqrt{n\cdot \nc}} + \frac{12}{\sqrt{n \cdot \nc}} \sqrt{W \cdot \log(C_N \sqrt{n\nc})},
	\end{equation}
	where $C_N = 3\chi \cdot e^{\frac{\beta}{1+\nu}}$.
}

\begin{proof}
	Based on \Lemref{lem:cnn_lip},
	\begin{equation*}
		\dinf(f_\theta, f_{\tilde{\theta}}) \le \sup \norm{f_\theta(x) - f_{\tilde{\theta}}(x)} \le \chi(1+\nu+\beta/L_N)^{L_N} d_N(\theta, \tilde{\theta}).
	\end{equation*}
	This implies that the convolutional network is $(\chi(1+\nu+\beta/L)^L, W)$-Lipschitz parameterized (\Defref{def:lip-param}). Applying Lemma A.8 in \cite{cnn_bound}, we have
	\begin{equation}
		\log \mathcal{N}(\epsilon, \setFbu, \dinf) \le W \cdot \log\frac{3\chi(1+\nu+\beta/L_N)^{L_N}}{\epsilon}.
	\end{equation}

	Based on \Lemref{lem:basic_cover_bound},
	\begin{equation}
		\begin{aligned}
			Rad_{n, \nc}(\Pi\circ\setFbu) &\leq \inf_{\alpha>0} \Big(4\alpha + \frac{12}{\sqrt{n\cdot \nc}} \int^1_\alpha \sqrt{\log \mathcal{N}(\epsilon, \setF, \dinf)} d\epsilon \Big) \\
			&\leq \inf_{\alpha>0} \Big(4\alpha + \frac{12}{\sqrt{n\cdot \nc}} \int^1_\alpha \sqrt{W \cdot \log\frac{3\chi(1+\nu+\beta/L_N)^{L_N}}{\epsilon}} d\epsilon \Big).
		\end{aligned}
	\end{equation}

	It is easy to see that the function to be integrated is monotonically decreasing within $[\alpha, 1]$; thus,
	\begin{equation}
		\int^1_\alpha \sqrt{W \cdot \log(\frac{3\chi(1+\nu+\beta/L_N)^{L_N}}{\epsilon}}) d\epsilon \le \sqrt{W \cdot \log\frac{3\chi(1+\nu+\beta/L_N)^{L_N}}{\alpha}}
	\end{equation}

	We could then choose $\alpha = \frac{1}{\sqrt{n\cdot \nc}}$ and obtain
	\begin{equation*}
		Rad_{n, \nc}(\Pi\circ\setFbu) \le \frac{4}{\sqrt{n\cdot \nc}} + \frac{12}{\sqrt{n\cdot \nc}} \sqrt{W \cdot \log(3\chi(1+\nu+\beta/L_N)^{L_N}\sqrt{n\nc})}.
	\end{equation*}
	Since $(1+\nu+\beta/L_N)^{L_N} \le e^{\frac{\beta}{1+\nu}}$, this completes the proof.
\end{proof}

\subsection{Proof of Theorem 2}
\begin{lem}[Vector Version of Talagrand's Contraction Lemma]
	\label{lem:talagrand}
	Let $\setF$ be a class of functions $f=(f^{(1)},\cdots,f^{({M})})$ which maps the input onto $\rfield^{M}$ over a set $\setD=\{x_i\}_{i=1}^n$, and $h: \rfield^{M}\mapsto\rfield$ is an $L$-Lipschitz function, then
	\begin{equation*}
		\E_\sigma \bigg[\sup_{f\in\setF} \sum_{i=1}^n \sigma_i h(f(x_i))\bigg] \le \sqrt{2}L \cdot \E_\sigma \bigg[\sup_{f\in\setF} \sum_{i=1}^n \sum_{j=1}^M \sigma_{i,j} \fj(x_i)\bigg],
	\end{equation*}
	where $\{\sigma_{i}\}_{i}$ and $\{\sigma_{i,j}\}_{i,j}$ are two sequences of independent Rademacher random variables.
\end{lem}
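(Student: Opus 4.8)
The plan is to follow the vector-contraction strategy behind this inequality. First I would reduce to the normalized case $L=1$: since both sides are positively homogeneous in $L$ (the left through $h$, the right through the explicit factor), replacing $h$ by $h/L$ shows it suffices to treat a $1$-Lipschitz $h$. I would also assume $\setF$ is finite so that each supremum is attained, the general case following by a routine monotone approximation since both quantities are suprema over $\setF$ of expectations.

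The crux is a single pointwise inequality that linearizes the nonlinearity $h$ while producing exactly the constant $\sqrt 2$. For any $v,v'\in\rfield^{M}$ the $1$-Lipschitz property gives $\abs{h(v)-h(v')}\le\norm{v-v'}_2$, and the sharp lower form of Khintchine's inequality, $\norm{w}_2\le\sqrt 2\,\E_{\sigma'}\abs{\inprod{\sigma',w}}$ for every $w\in\rfield^{M}$ with $\sigma'$ a fresh $M$-dimensional Rademacher vector, supplies the factor $\sqrt 2$ as the best Khintchine constant. Applied to $w=f(x_i)-\fnew(x_i)$, this bounds each increment $h(f(x_i))-h(\fnew(x_i))$ by $\sqrt 2\,\E_{\sigma'}\big|\sum_{j=1}^{M}\sigma'_j\,(\fj(x_i)-\fnew^{(j)}(x_i))\big|$, i.e.\ it dominates the increments of the ``nonlinear'' process $X_f=\sum_{i=1}^n\sigma_i\,h(f(x_i))$ by those of the ``linear'' process $Y_f=\sum_{i=1}^n\sum_{j=1}^{M}\sigma_{i,j}\,\fj(x_i)$ after inserting auxiliary Rademacher signs.

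With this increment control, I would run a comparison/contraction argument (in the spirit of the Ledoux--Talagrand lemma, here in its vector form): reformulate both sides as suprema over $f\in\setF$ of the processes $X_f$ and $Y_f$, insert the auxiliary signs once, move $\E_{\sigma'}$ outside the supremum by Jensen, and recognize the result as $\sqrt 2\,\E_\sigma\sup_f Y_f$, which is the right-hand side. The main obstacle, and the delicate point of the whole argument, is that the factor $\sqrt 2$ must be paid only \emph{once}, globally, rather than per summand $i$: a naive term-by-term peeling of the $h$'s would compound to $(\sqrt 2)^{n}$ and is useless. Keeping the constant global requires exploiting that the increment domination is of subgaussian/comparison type and is scale-homogeneous, so a single application transfers the supremum with a single $\sqrt 2$; verifying that the Khintchine step applies uniformly across all $i$ with independent signs, and that the comparison does not leak an additional dimension-dependent constant (as a crude passage through Gaussian complexity would), is the heart of the matter. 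The remaining steps---symmetrization, Jensen, and rearranging the double sum---are routine.
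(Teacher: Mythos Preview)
The paper does not actually supply a proof of this lemma; it is stated as a known tool (the vector contraction inequality, essentially Maurer's lemma) and used without argument. So there is no ``paper's proof'' to compare against, only the standard proof from the literature.

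Your plan has the right ingredients---the $1$-Lipschitz reduction, the Khintchine lower bound $\norm{w}_2\le\sqrt{2}\,\E_{\sigma'}\abs{\inprod{\sigma',w}}$, and a Ledoux--Talagrand style contraction---but the paragraph where you identify the ``main obstacle'' contains a genuine misconception that leaves the argument incomplete. You assert that a term-by-term peeling of the coordinates $i=1,\dots,n$ would compound the constant to $(\sqrt{2})^n$, and that therefore some single global comparison step is required. This is not correct, and the global comparison you gesture at (``increment domination is of subgaussian/comparison type\ldots a single application transfers the supremum'') is never made concrete; I do not know a proof along those lines, and Slepian/Sudakov--Fernique type inequalities do not apply directly to Rademacher processes here.

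In fact the standard proof \emph{is} the coordinate-by-coordinate peeling you dismiss, and it does not compound. Conditioning on $\sigma_2,\dots,\sigma_n$ and writing $G(f)$ for the frozen remainder, one shows
\[
\E_{\sigma_1}\Big[\sup_f\big(\sigma_1 h(f(x_1))+G(f)\big)\Big]
\;\le\;
\E_{\sigma_{1,1},\dots,\sigma_{1,M}}\Big[\sup_f\big(\sqrt{2}\,\textstyle\sum_j\sigma_{1,j}\fj(x_1)+G(f)\big)\Big],
\]
by the usual symmetrize/split-the-sup trick together with Khintchine applied to $w=f(x_1)-f'(x_1)$. The crucial point is that the $\sqrt{2}$ multiplies only the \emph{linearized} first term, while $G(f)$---which still contains the unlinearized $h(f(x_i))$ for $i\ge 2$---is untouched. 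Iterating over $i$ therefore produces $\sqrt{2}\sum_{i,j}\sigma_{i,j}\fj(x_i)$ with a single factor of $\sqrt{2}$, not $(\sqrt{2})^n$. Once you see that the peeling is additive rather than multiplicative, your Khintchine/contraction ingredients assemble into a complete proof; as written, the plan stops exactly where the real work begins.
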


\begin{lem}
	\label{lem:log_lip}
	For $x,y\in[\varepsilon,1-\varepsilon]$ where $\varepsilon\in(0,0.5)$, $h(x,y)=-x\cdot\log y$ is $\sqrt{2}\frac{1-\varepsilon}{\varepsilon}$-Lipschitz continuous.
\end{lem}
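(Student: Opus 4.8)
The plan is to treat $h$ as a continuously differentiable function on the \emph{convex} domain $[\varepsilon,1-\varepsilon]^2$ and to bound the Euclidean norm of its gradient uniformly, exploiting the standard fact that a $C^1$ function whose gradient has $\ell_2$-norm at most $L$ everywhere on a convex set is $L$-Lipschitz there (integrate the directional derivative along the segment joining any two points, i.e.\ the mean value inequality). First I would compute the partial derivatives
\[
\frac{\partial h}{\partial x} = -\log y, \qquad \frac{\partial h}{\partial y} = -\frac{x}{y},
\]
so that $\norm{\nabla h(x,y)}_2 = \sqrt{(\log y)^2 + (x/y)^2}$. Note that $h$ is in fact smooth on the open set $\{y>0\}$ which contains the domain, so differentiability is not an issue.

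Next I would bound each component over the domain by monotonicity. Since $y\in[\varepsilon,1-\varepsilon]$ with $\varepsilon<1/2$ forces $0<y<1$, we have $\abs{\log y}=\log(1/y)\le\log(1/\varepsilon)$, attained at $y=\varepsilon$. For the second term, $x/y$ is increasing in $x$ and decreasing in $y$, hence $x/y\le(1-\varepsilon)/\varepsilon$, attained at $x=1-\varepsilon,\ y=\varepsilon$. Combining gives
\[
\norm{\nabla h(x,y)}_2 \le \sqrt{\Bigl(\log\tfrac{1}{\varepsilon}\Bigr)^2 + \Bigl(\tfrac{1-\varepsilon}{\varepsilon}\Bigr)^2}.
\]

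The key step — and the only nonroutine one — is to show that the logarithmic term under the root never exceeds the rational term, so the bound collapses to the advertised constant. This reduces to the claim $\log(1/\varepsilon)\le(1-\varepsilon)/\varepsilon$, which, on substituting $t=1/\varepsilon>2$, is exactly the elementary inequality $\log t\le t-1$ (valid for all $t>0$, with equality at $t=1$). Granting this, $\bigl(\log(1/\varepsilon)\bigr)^2\le\bigl((1-\varepsilon)/\varepsilon\bigr)^2$ and therefore $\norm{\nabla h}_2\le\sqrt{2}\,\frac{1-\varepsilon}{\varepsilon}$ throughout $[\varepsilon,1-\varepsilon]^2$. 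Invoking the gradient-norm characterization of Lipschitz continuity on this convex set then yields that $h$ is $\sqrt{2}\,\frac{1-\varepsilon}{\varepsilon}$-Lipschitz with respect to the $\ell_2$ norm, which is precisely the form required by the vector contraction lemma (\Lemref{lem:talagrand}). The main obstacle is simply recognizing that the troublesome $\log$-term must be dominated via $\log t\le t-1$ to produce the clean $\sqrt{2}$ factor; the remaining differentiation and monotonicity arguments are mechanical.
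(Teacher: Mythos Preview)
Your proposal is correct and follows essentially the same approach as the paper: compute the two partial derivatives, bound each over the box, establish $\log(1/\varepsilon)\le(1-\varepsilon)/\varepsilon$ so the rational term dominates, and conclude the $\sqrt{2}\,(1-\varepsilon)/\varepsilon$ Lipschitz constant. The only cosmetic difference is that you bound $\norm{\nabla h}_2$ directly and invoke the gradient-norm characterization of Lipschitz continuity, whereas the paper applies the mean value theorem coordinate-wise and then Cauchy--Schwarz on $|x_1-x_2|+|y_1-y_2|$; both routes yield the identical constant via the same key inequality.
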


\begin{proof}
	Since $x,y\in[\varepsilon,1-\varepsilon]$, we can see that both the partial derivatives of $h$ are bounded:
	\begin{equation*}
		\begin{aligned}
			L_x = \sup_{x\in[\varepsilon,1-\varepsilon]} \norm{\nabla_x h(x, y)} &= \sup_{x\in[\varepsilon,1-\varepsilon]} \norm{-\log y} = \log \frac{1}{\varepsilon} \\
			L_y = \sup_{x\in[\varepsilon,1-\varepsilon]} \norm{\nabla_y h(x, y)} &= \sup_{x\in[\varepsilon,1-\varepsilon]} \norm{-\frac{x}{y}} = \frac{1-\varepsilon}{\varepsilon}.
		\end{aligned}
	\end{equation*}
	Then
	\begin{equation*}
		\begin{aligned}
			\abs{h(x_1, y_1) - h(x_2, y_2)} &\le \abs{h(x_1, y_1) - h(x_1, y_2) + h(x_1, y_2) - h(x_2, y_2)} \\
			&\le \abs{h(x_1, y_1) - h(x_1, y_2)} + \abs{h(x_1, y_2) - h(x_2, y_2)} \\
			&\overset{(**)}{\le} L_x \cdot \abs{y_1 - y_2} + L_y \cdot \abs{x_1 - x_2} \\
			&\le \max\Big\{\log \frac{1}{\varepsilon}, \frac{1-\varepsilon}{\varepsilon}\Big\} \cdot (\abs{y_1 - y_2} + \abs{x_1 - x_2}) \\
			&\overset{(***)}{=} \frac{1-\varepsilon}{\varepsilon} \cdot (\abs{y_1 - y_2} + \abs{x_1 - x_2}) \\
			&\overset{(****)}{\le} \sqrt{2}\frac{1-\varepsilon}{\varepsilon} \cdot \sqrt{(x_1-x_2)^2+(y_1-y_2)^2} \\
			&= \sqrt{2}\frac{1-\varepsilon}{\varepsilon} \cdot \norm{(x_1,y_1)-(x_2,y_2)},
		\end{aligned}
	\end{equation*}
	where $(**)$ is based on the mean value theorem; $(***)$ is due to $\log \frac{1}{\varepsilon} - \frac{1-\varepsilon}{\varepsilon} = \log\frac{\exp(\frac{1}{\varepsilon}-1)}{\varepsilon} > 0$ when $\varepsilon\in(0, 0.5)$; and $(****)$ comes from the Cauchy-Schwartz inequality.
\end{proof}

\namedlem{2}{
	Denote by $W_g$ the number of parameters for convolutional networks that replace the last layer of $f\in\setFbu$ with a single-output layer. Let the output of all the transformation functions $\tau$ be constrained by $\norm{vec(\tau(x))}\le\chi_\tau$, then the Rademacher complexity of $\setFbu^0$ without or with input transformation $\setT_\setD = \{\tau_i\}^n_{i=1}$ could be bounded by
	\begin{equation}
		\begin{aligned}
			Rad_\setD(\setFbu^0) &\le \frac{4}{\sqrt{n}} + \frac{12}{\sqrt{n}} \sqrt{W_g \cdot \log(C_N n)}, \\
			Rad_\setD(\setFbu^0\circ\setT_\setD) &\le \frac{4}{\sqrt{n}} + \frac{12}{\sqrt{n}} \sqrt{W_g \cdot \log(C_{N,\tau} n)},
		\end{aligned}
	\end{equation}
	respectively, where $C_N = 3\chi \cdot e^{\frac{\beta}{1+\nu}}$ and $C_{N,\tau} = 3\chi_\tau \cdot e^{\frac{\beta}{1+\nu}}$.
}

\begin{proof}
	The proof follows the similar spirit of \Lemref{lem:cover_bound}.
\end{proof}

We could then provide the following generalization bound.

\setcounter{thm}{1}
\namedthm{2}{
	Let $\setFbu$ be the hypothesis class of multi-output deep convolutional networks. Define the empirical risk on labeled and unlabeled set as
	\begin{equation*}
		\begin{aligned}
			\RL &= \frac{1}{\nl} \sum_{i=1}^\nl \lce(f(x_i^l), y_i), \\
			\RU &=  \frac{1}{\nuu} \sum_{i=1}^\nuu \sup_{x'\in\Bse(x_i^u)} \lsce(f(x'), f(x_i^u)).
		\end{aligned}
	\end{equation*}
	For any function $f\in\setFbu$, the following inequality holds with probability at least $1-\delta$ over the random draw of $\DL$ and $\DU$:
	\begin{equation}
		\scalemath{1}{
		\begin{aligned}
		\RB \leq\, & C_1 \cdot \RU + C_2(1+\frac{C_0}{2}) \cdot \RL \\
		&+ C_1 \biggl( \frac{16K\nc}{\sqrt{\nuu}}\cdot\frac{1-\varepsilon}{\varepsilon} \bigg[ 1 + 3 \sqrt{W_g \cdot \log(C_M \nuu)} \bigg] + 3\sqrt{\frac{\log \frac{4}{\delta}}{2\nuu}} \biggr) + \frac{3C_0C_2}{2}\cdot \Psi_{\nl,\nc,\delta}(\setFbu),
		\end{aligned}
		}
	\end{equation}
	where $C_0>0$ is a universal constant, $C_1 = \frac{1}{\nc\varepsilon \cdot \log \frac{1}{1-\varepsilon}}$, $C_2 = \frac{1}{\log 2}$, $C_N = 3\chi \cdot e^{\frac{\beta}{1+\nu}}$, $C_M = 3\max\{\chi,\chi_\tau\} \cdot e^{\frac{\beta}{1+\nu}}$ and
	\begin{equation*}
		\begin{aligned}
			\psi_{\nl, \nc}(\setFbu) &= \frac{4}{\sqrt{\nl\cdot \nc}} + \frac{12}{\sqrt{\nl\cdot \nc}} \sqrt{W \cdot \log(C_N \sqrt{\nl\nc})}, \\
			\Psi_{\nl, \nc,\delta}(\setFbu) &= 2 \left( \sqrt{\nc} \cdot \log^{3/2}(\nl \nc e) \cdot \psi_{\nl, \nc}(\setFbu) + \frac{1}{\sqrt{\nl}} \right) + \frac{\log(\nc e)}{\nl} \left(\log\frac{2}{\delta} + \log(\log\ \nl)\right).
		\end{aligned}
	\end{equation*}
}{Robust Generalization Bound for SSL using CNNs}

\begin{proof}
	According to \Propref{prop:risk},
	\begin{equation}
		\scalemath{1}{
		\begin{aligned}
		&\RB \leq C_1 \cdot \underbrace{ \E_{x\sim\setD_{X}} \bigg[\sup_{x'\in\B(x)} \lsce\left(f(x'), f(x)\right)\bigg]}_{(a)} + C_2 \cdot \underbrace{\E_{(x,y)\sim\setD_{XY}} \bigg[\lce(f(x),y)\bigg]}_{(b)},
		\end{aligned}
		}
	\end{equation}

	Obviously $\RU$ and $\RL$ are unbiased estimations of $(a)$ and $(b)$, \ie,
	\begin{equation*}
	\begin{split}
	(a) &= \E_{\DU}\left[\RU\right], \\
	(b) &= \E_{\DL}\left[\RL\right].
	\end{split}
	\end{equation*}

	Since $s\in[\varepsilon,1-\varepsilon]^{\nc}$, we have $range(\lce) \subseteq [\log((\nc-1)e^{2\varepsilon-1} + 1), \log((\nc-1)e^{1-2\varepsilon} + 1)]$. Then following Theorem 1 of \cite{multioutput_bound} and \Lemref{lem:cover_bound}, we have that, with probability at least $1-\frac{2}{\delta}$, the following inequality holds:
	\begin{equation}
		\begin{aligned}
			(b) = \E_{\DL}\left[\RL\right] &\le \RL + C_0 \left( \sqrt{\RL \cdot \Psi_{\nl,\nc,\delta}(\setFbu)} + \Psi_{\nl,\nc,\delta}(\setFbu) \right), \\
			&\le \RL + C_0 \left( \frac{1}{2}\Big(\RL + \Psi_{\nl,\nc,\delta}(\setFbu)\Big) + \Psi_{\nl,\nc,\delta}(\setFbu) \right), \\
			&= (1+\frac{C_0}{2})\cdot\RL + \frac{3C_0}{2}\cdot \Psi_{\nl,\nc,\delta}(\setFbu),
		\end{aligned}
	\end{equation}
	where $C_0>0$ is a universal constant and
	\begin{equation*}
		\begin{aligned}
			\psi_{\nl, \nc}(\setFbu) &= \frac{4}{\sqrt{\nl\cdot \nc}} + \frac{12}{\sqrt{\nl\cdot \nc}} \sqrt{W \cdot \log(C_N \sqrt{\nl\nc})}, \\
			\Psi_{\nl, \nc,\delta}(\setFbu) &= 2 \left( \sqrt{\nc} \cdot \log^{3/2}(\nl \nc e) \cdot \psi_{\nl, \nc}(\setFbu) + \frac{1}{\sqrt{\nl}} \right) + \frac{\log(\nc e)}{\nl} \left(\log\frac{2}{\delta} + \log(\log\ \nl)\right),
		\end{aligned}
	\end{equation*}
	note that here we also use $\log((\nc-1)e^{1-2\varepsilon} + 1) < \log(\nc e)$.

	Meanwhile, for (a), we first have:
	\begin{equation}
		\begin{aligned}
			Rad_{\DU}(\lsce\circ\setFbu) &= \E_\sigma \bigg[\sup_{f\in\setFbu} \frac{1}{\nuu} \sum_{i=1}^\nuu \sigma_i \lsce\left(f(x'), f(x_i)\right)\bigg] \\
			&= \E_\sigma \bigg[\sup_{f\in\setFbu} \frac{1}{\nuu} \sum_{i=1}^\nuu \sigma_i \lsce\left(f(\tau_i(x_i)), f(x_i)\right)\bigg] \\
			&= \E_\sigma \bigg[\sup_{f=(f^{(1)}, \cdots, f^{(\nc)})\in\setFbu} \frac{1}{\nuu} \sum_{i=1}^\nuu \sigma_i \bigg(\sum_{j=1}^{\nc} h\left(\fj(\tau_i(x_i)), \fj(x_i)\right)\bigg)\bigg], \\
			&\le \sum_{j=1}^{\nc} \E_\sigma \bigg[\sup_{\fj\in\setFbu^0} \frac{1}{\nuu} \sum_{i=1}^\nuu \sigma_i h\left(\fj(\tau_i(x_i)), \fj(x_i)\right)\bigg], \\
		\end{aligned}
	\end{equation}
	where $h(x,y)=-x\log y$.

	We could apply \Lemref{lem:talagrand} combined with \Lemref{lem:log_lip} and obtain:
	\begin{equation}
		\begin{aligned}
			\sum_{j=1}^{\nc} \E_\sigma \bigg[\sup_{\fj\in\setFbu^0} \frac{1}{\nuu} \sum_{i=1}^\nuu \sigma_i h\left(\fj(\tau_i(x_i)), \fj(x_i)\right)\bigg] &\le \sum_{j=1}^{\nc} 2\frac{1-\varepsilon}{\varepsilon} \E_\sigma \bigg[\sup_{\fj\in\setFbu^0} \frac{1}{\nuu} \sum_{i=1}^\nuu \big( \sigma_{i,1} \fj(\tau_i(x_i)) + \sigma_{i,2} \fj(x_i) \big) \bigg]\\
			&= 2\frac{1-\varepsilon}{\varepsilon} \sum_{j=1}^{\nc} \Big[ Rad_{\DU}(\setFbu^0\circ\setT_\setD) + Rad_{\DU}(\setFbu^0) \Big].
		\end{aligned}
	\end{equation}

	Here $\setFbu^0$ could be characterized as a class of convolutional networks that have the same architecture as $\setFbu$ except that the output is a scalar used for binary classification. Then using \Lemref{lem:cover_bound} and \ref{lem:b_cover_bound}, it holds that
	\begin{equation}
		\begin{aligned}
			Rad_{\DU}(\lsce\circ\setFbu) &\le 2\nc\frac{1-\varepsilon}{\varepsilon} \Big[ Rad_{\DU}(\setFbu^0\circ\setT_\setD) + Rad_{\DU}(\setFbu^0) \Big] \\
			&\le 2\nc\frac{1-\varepsilon}{\varepsilon} \bigg[ \Big(\frac{4}{\sqrt{\nuu}} + \frac{12}{\sqrt{\nuu}} \sqrt{W_g \cdot \log(C_{N,\tau} \nuu)}\Big) + \Big(\frac{4}{\sqrt{\nuu}} + \frac{12}{\sqrt{\nuu}} \sqrt{W_g \cdot \log(C_N \nuu)}\Big) \bigg] \\
			&\le 2\nc\frac{1-\varepsilon}{\varepsilon} \bigg[ \Big(\frac{4}{\sqrt{\nuu}} + \frac{12}{\sqrt{\nuu}} \sqrt{W_g \cdot \log(C_M \nuu)}\Big) + \Big(\frac{4}{\sqrt{\nuu}} + \frac{12}{\sqrt{\nuu}} \sqrt{W_g \cdot \log(C_M \nuu)}\Big) \bigg] \\
			&= \frac{16\nc}{\sqrt{\nuu}}\cdot\frac{1-\varepsilon}{\varepsilon} \bigg[ 1 + 3 \sqrt{W_g \cdot \log(C_M \nuu)} \bigg],
		\end{aligned}
	\end{equation}
	where $C_N = 3\chi \cdot e^{\frac{\beta}{1+\nu}}$, $C_{N,\tau} = 3\chi_\tau \cdot e^{\frac{\beta}{1+\nu}}$ and $C_M = \max\{C_N, C_{N,\tau}\}$.

	Denote $(\lsce^{\Bse}\circ f)(x) := \sup_{x'\in\Bse(x)} \lsce(f(x'), f(x))$. Then according to Lemma 9.1 of \cite{foundml}, we have that:
	\begin{equation}
		Rad_{\DU}(\lsce^{\Bse}\circ\setFbu) \le K \cdot Rad_{\DU}(\lsce\circ\setFbu).
	\end{equation}

	Following the generalization bound (Theorem 3.3) in \cite{foundml}, we finally have the upper bound for (a) that, with probability at least $1-\frac{\delta}{2}$, the following holds:
	\begin{equation}
		\begin{aligned}
			(a) = \E_{\DU}\left[\RU\right] &\le \RU + 2 Rad_{\DU}(\lsce^{\Bse}\circ\setFbu) + 3\sqrt{\frac{\log \frac{4}{\delta}}{2\nuu}}, \\
			&\le \RU + \frac{16K\nc}{\sqrt{\nuu}}\cdot\frac{1-\varepsilon}{\varepsilon} \bigg[ 1 + 3 \sqrt{W_g \cdot \log(C_M \nuu)} \bigg] + 3\sqrt{\frac{\log \frac{4}{\delta}}{2\nuu}}.
		\end{aligned}
	\end{equation}

	Putting all these together, the proof is then finished.
\end{proof}

\section{Proof of Theorem 3}\label{sec:proof-moreau-bound}
Here we first restate Theorem 3 as follows.
\namedthmn{3}{
	Suppose $\ell^l$ is $\frac{L}{2}$-Lipschitz, $\ell^u$ is $\frac{L}{2\gamma}$-Lipschitz, and then $\mathcal{L}$ is $L$-Lipschitz and $\kappa$-gradient Lipschitz, The parameters $\theta$ are chosen from a compact set $\Theta$\footnote[1]{This could be realized by limiting the norm of weights (say, weight decay).}. Define $\phi(\cdot) := \max_v \mathcal{L}(\cdot,v)$ and $\phil$ is the Moreau envelope of $\phi$. Suppose $\max_{\theta \in \Theta}\abs{\phi(\theta) - \min_{\theta\in \Theta} \phi(\theta)} \le B$. Then with probability at least $1-\delta$, the following inequality holds for $\theta_t$s obtained from \Algref{alg}:
	\begin{equation*}
	\begin{aligned}
	\frac{1}{T+1} \sum_{t=0}^T \norm{\nabla\phil(\xt)}^2 \le  4L ~ \sqrt{\frac{\kappa\big(\phil(\theta_0) - \min_\theta \phi(\theta)\big)}{T}} + 8L ~ \sqrt{\frac{2B\kappa}{T+1}\log\frac{1}{\delta}}
	\end{aligned}
	\end{equation*}
}

\begin{proof}
Since $\mathcal{L}$ is $\kappa$-gradient Lipschitz and $v_t$ is a maximizer for $\xt$, we have that any $\xt$ from \Algref{alg} and $\tilde{\theta}$ satisfy
\begin{equation}
	\begin{split}
		\phi(\tilde{\theta}) \ge &~\mathcal{L}(\tilde{\theta}, v_t)\\
	 \ge &~\Gxyt + \inprod{\gradGxyt, \tilde{\theta} - \xt} - \frac{\kappa}{2} \cdot \norm{\tilde{\theta} - \xt}^2 \\
	 \ge &~\phi(\xt) + \inprod{\gradGxyt, \tilde{\theta} - \xt} - \frac{\kappa}{2} \cdot \norm{\tilde{\theta} - \xt}^2
	\end{split}
\end{equation}
where the second line follows from \Lemref{lem:gradLip} in the sense that if $\abs{h(x) - h(y) - \nabla h(y)^\top (x-y)} \le \frac{\kappa}{2}\norm{x-y}^2_2$, then $-\left(h(x) - h(y) - \nabla h(y)^\top (x-y)\right) \le \frac{\kappa}{2}\norm{x-y}^2_2$.

Thus,
\begin{equation}\label{eq:ineq1}
	\begin{split}
		\inprod{\gradGxyt, \tilde{\theta} - \xt} \le \phi(\tilde{\theta}) - \phi(\xt) + \frac{\kappa}{2} \cdot \norm{\tilde{\theta} - \xt}^2
	\end{split}
\end{equation}

Let
\begin{equation}
	\hatxt := \arg\min_\theta \phi(\theta) + \kappa \cdot \norm{\theta - \xt}^2
\end{equation}

We have:
\begin{equation}
	\begin{split}
		\phil(\theta_{t+1}) = &~ \min_{\theta'} \phi(\theta') + \kappa \cdot  \norm{\theta' - \theta_{t+1}}^2 \\
		\le &~ \phi(\hatxt) + \kappa \cdot \norm{\hatxt - \theta_{t+1}}^2
		=  \phi(\hatxt) + \kappa \cdot \norm{\hatxt - (\xt - \eta g_t)}^2 \\
		= &~ \phi(\hatxt) + \kappa \cdot \norm{\hatxt - \xt}^2 + 2\eta\kappa \cdot \inprod{g_t, \hatxt - \xt} + \eta^2\kappa \cdot \norm{g_t}^2 \\
		= &~ \phi(\hatxt) + \kappa \cdot \norm{\hatxt - \xt}^2 + 2\eta\kappa \cdot \inprod{g_t - \gradGxyt, \hatxt - \xt} + 2\eta\kappa \cdot \inprod{\gradGxyt, \hatxt - \xt} + \eta^2\kappa \cdot \norm{g_t}^2 \\
		\le &~ \phil(\xt) + 2\eta\kappa \cdot \inprod{g_t - \gradGxyt, \hatxt - \xt} + 2\eta\kappa \cdot \left( \phi(\hatxt) - \phi(\xt) + \frac{\kappa}{2} \cdot \norm{\hatxt - \xt}^2 \right) + \eta^2\kappa L^2
	\end{split}
\end{equation}
where $g_t$ denotes the gradient estimated in the $t$-th step (obtained by \texttt{line 7} in Algorithm 1), and the last line follows from (\ref{eq:ineq1}) and the definition of $L$-Lipschitz.

Taking a telescopic sum over $t$, we obtain
\begin{equation}
	\begin{split}
		\phil(\theta_T) \le \phil(\theta_0) + \eta^2\kappa L^2 T + 2\eta\kappa \cdot \sum_{t=0}^T \inprod{g_t - \gradGxyt, \hatxt - \xt} + \left( \phi(\hatxt) - \phi(\xt) + \frac{\kappa}{2} \cdot \norm{\hatxt - \xt}^2 \right)
	\end{split}
\end{equation}

By rearranging this inequality, we obtain
\begin{equation}\label{eq:telescopic}
	\begin{split}
		\frac{1}{T+1} \sum_{t=0}^T \phi(\xt) - \phi(\hatxt) - \frac{\kappa}{2} \cdot \norm{\hatxt - \xt}^2 \le \frac{\phil(\theta_0) - \min_\theta \phi(\theta)}{2\eta\kappa T} + \frac{\eta L^2}{2} + \frac{1}{T+1} \cdot \sum_{t=0}^T \inprod{g_t - \gradGxyt, \hatxt - \xt}
	\end{split}
\end{equation}

Recall that $\mathcal{L}$ is $\kappa$-gradient Lipschitz. According to Fact~\ref{fact:weakcvx}, $\phi(\theta)$ is $\kappa$-weakly convex and $\phi(\theta) + \frac{\kappa}{2}\norm{\theta}^2$ is convex. Then, it is easy to see that $\Phi(\theta) = \phi(\theta) + \kappa \cdot \norm{\xt - \theta}^2$ is $\kappa$-strongly convex. Thus, we have
\begin{equation}\label{eq:moreau_res}
	\begin{split}
		\phi(\xt) - \phi(\hatxt) - \frac{\kappa}{2} \cdot \norm{\hatxt - \xt}^2 \ge &~\phi(\xt) + \kappa \cdot \norm{\xt - \xt}^2 - \phi(\hatxt) - \kappa \cdot \norm{\hatxt - \xt}^2 + \frac{\kappa}{2} \cdot \norm{\hatxt - \xt}^2 \\
		= &~ \phi(\xt) + \kappa \cdot \norm{\xt - \xt}^2 - \min_\theta \left(\phi(\theta) + \kappa \cdot \norm{\xt - \theta}^2\right) + \frac{\kappa}{2} \cdot \norm{\hatxt - \xt}^2 \\
		= &~ \Phi(\xt) - \min_\theta \Phi(\xt) + \frac{\kappa}{2} \cdot \norm{\hatxt - \xt}^2 \\
		\ge &~ \kappa \cdot \norm{\hatxt - \xt}^2 = \frac{1}{4\kappa} \norm{\nabla\phil(\xt)}^2
	\end{split}
\end{equation}
where the second last line comes from the definition of $\kappa$-strongly convex,
\begin{equation}
	\Phi(\xt) \ge \Phi(\hatxt) + \inprod{\nabla \Phi(\hatxt), \xt - \hatxt} + \frac{\kappa}{2} \cdot \norm{\xt - \hatxt}^2
\end{equation}
and the last line follows from an equation for the gradient of Moreau envelope \cite{equilibrium},
\begin{equation}
	\nabla\phi_{\lambda}(\theta) = \lambda^{-1}\left( \theta - \underset{\tilde{\theta}}{\arg\min} \left( \phi(\tilde{\theta}) + \kappa \cdot \norm{\theta - \tilde{\theta}}^2 \right) \right)
\end{equation}

Since $\max_{\theta \in \Theta}\abs{\phi(\theta) - \min_{\theta\in \Theta} \phi(\theta)} \le B$, we have
\begin{equation}
	\begin{split}
		\phi(\hatxt) + \kappa \cdot \norm{\hatxt - \xt}^2 \le &~\phi(\xt) \\
		\norm{\hatxt - \xt}^2 \le &~ \frac{\phi(\xt) - \phi(\hatxt)}{\kappa} \\
		\norm{\hatxt - \xt} \le &~ \sqrt{\frac{\phi(\xt) - \min_\theta \phi(\theta)}{\kappa}} \le \sqrt{\frac{B}{\kappa}}
	\end{split}
\end{equation}

Notice that $g_t$ is an unbiased estimation of $\gradGxyt$, namely we have $\E[g_t]=\gradGxyt$. Let $$G_t := \inprodG{t},$$ then we have
\begin{equation}
	\begin{split}
		\E\left[ G_t | G_{t-1}, G_{t-2}, \ldots, G_{1} \right] = 0, \\
		\E\left[ \sum_{t=0}^T G_t \right] = 0,
	\end{split}
\end{equation}
Namely, $\left\{ G_t \right\}_t$ is a martingale difference sequence. Then using Cauchy-Schwartz's inequality, we have
\begin{equation}
	\begin{split}
		\abs{G_t} \le& \norm{g_t - \gradGxyt} \cdot \norm{\hatxt - \xt} \\
		\le& \sqrt{\frac{B}{\kappa}}\left( \norm{g_t} + \norm{\gradGxyt} \right) \le 2L \sqrt{\frac{B}{\kappa}}
	\end{split}
\end{equation}

Via Azuma's inequality, with probability at least $1-\delta$ we have
\begin{equation}\label{eq:azuma_res}
	\sum_{t=0}^T G_t \le 2L \cdot \sqrt{\frac{B}{\kappa} \cdot 2(T+1)\log\frac{1}{\delta}}
\end{equation}

Substituting (\ref{eq:moreau_res}) and (\ref{eq:azuma_res}) into (\ref{eq:telescopic}), we have for any fixed $\eta>0$ the following inequality holds with probability at least $1-\delta$:
\begin{equation}\label{eq:ineq2}
	\begin{split}
		\frac{1}{T+1} \sum_{t=0}^T \frac{1}{4\kappa} \norm{\nabla\phil(\xt)}^2 \le \frac{\phil(\theta_0) - \min_\theta \phi(\theta)}{2\eta\kappa T} + \frac{\eta L^2}{2} + 2L \cdot \sqrt{\frac{B}{\kappa} \cdot \frac{2}{T+1}\log\frac{1}{\delta}}
	\end{split}
\end{equation}

To reach the tightest bound in this case, we then find $\eta^\star$ such that
\begin{equation}
	\eta^\star = \argmin_{\eta>0} M(\eta) = \frac{\phil(\theta_0) - \min_\theta \phi(\theta)}{2\eta\kappa T} + \frac{\eta L^2}{2}
\end{equation}

Note that
\begin{equation}
	\begin{split}
		M'(\eta) &= -\frac{\phil(\theta_0) - \min_\theta \phi(\theta)}{2\eta^2\kappa T} + \frac{L^2}{2} \\
		M''(\eta) &= \frac{\phil(\theta_0) - \min_\theta \phi(\theta)}{\eta^3\kappa T} \ge 0
	\end{split}
\end{equation}
Namely, $M(\eta)$ is convex. Hence we could solve $M'(\eta) = 0$ to obtain the optimal value, \ie,
\begin{equation}
	\eta^\star = \sqrt{\frac{\phil(\theta_0) - \min_\theta \phi(\theta)}{\kappa L^2 T}}
\end{equation}

Plugging $\eta^\star$ into (\ref{eq:ineq2}) proves the results.
\end{proof}

\end{document}